\def\BState{\State\hskip-\ALG@thistlm}
\newcommand\mysim{\mathrel{\stackrel{\makebox[0pt]{\mbox{\normalfont\tiny i.i.d}}}{\sim}}}
\crefname{appsec}{Appendix}{Appendices}
\def\x{{\mathbf x}}
\newcommand{\cP}{{{\mathcal P}}}
\newcommand{\cF}{{{\mathcal F}}}
\newcommand{\R}{{{\mathbb R}}} 
\newcommand{\kH}{{{\mathcal H}}} 
\newcommand{\X}{{{\mathcal X}}} 
\newcommand{\F}{{{\mathcal F}}} 
\newcommand{\bR}{{{\mathbb R}}}
\newcommand{\ps}[1]{\langle #1 \rangle}
\newcommand{\agnote}[1]{\todo[color=red]{#1}}
\newcommand*\diff{\mathop{}\!\mathrm{d}}
\newtheorem{theorem}{Theorem}
\newtheorem{lemma}[theorem]{Lemma}
\newtheorem{proposition}[theorem]{Proposition}
\newtheorem{definition}{Definition}
\newtheorem{remark}{Remark}
\newlist{assumplist}{enumerate}{1}
\setlist[assumplist]{label=(\textbf{\Alph*})}
\Crefname{assumplisti}{Assumption}{Assumptions}
\newlist{assumplist2}{enumerate}{1}
\setlist[assumplist2]{label=(\textbf{\Roman*})}
\Crefname{assumplist2i}{Assumption}{Assumptions}
\newlist{proplist}{enumerate}{1}
\setlist[proplist]{label=({\roman*})}
\Crefname{proplisti}{Property}{Properties}
\title{Maximum Mean Discrepancy Gradient Flow}
\author{
  Michael Arbel\\
  Gatsby Computational Neuroscience Unit\\University College London\\
  \texttt{michael.n.arbel@gmail.com}
  \And
  Anna Korba\\
  Gatsby Computational Neuroscience Unit\\University College London\\
  \texttt{a.korba@ucl.ac.uk}
  \And
  Adil Salim\\
  \phantom{xxxxxx} Visual Computing Center\phantom{xxxxxx}\\
  KAUST\\
  \texttt{adil.salim@kaust.edu.sa}
  \vspace*{-5mm}
  \And
  Arthur Gretton\\
  Gatsby Computational Neuroscience Unit\\University College London\\
  \texttt{arthur.gretton@gmail.com}
  \vspace*{-5mm}
}
\begin{document}
\maketitle

\begin{abstract}
  We construct a Wasserstein gradient flow of the maximum mean discrepancy (MMD) and study its convergence properties.
  The MMD is an integral probability metric defined for a reproducing kernel Hilbert space (RKHS), and serves as a metric on probability measures for a sufficiently rich RKHS.  We obtain conditions for convergence of the gradient flow towards a global optimum, that can be related to particle transport when optimizing neural networks.
  We also propose a way to regularize this MMD flow, based on an injection of noise in the gradient. This algorithmic fix comes with theoretical and empirical evidence.
The practical implementation of the flow is straightforward, since both the MMD and its gradient have simple closed-form expressions, which can be easily estimated with samples. 
  
\end{abstract}

\section{Introduction}

We address the problem of defining a gradient flow  on the space of probability distributions endowed with the Wasserstein metric, which transports probability mass from a starting distribtion $\nu$ to a target distribution $\mu$.   Our flow is defined on the maximum mean discrepancy (MMD) \cite{gretton2012kernel}, an integral probability metric \cite{Mueller97} which uses the unit ball in a characteristic RKHS \cite{sriperumbudur2010hilbert} as its witness function class.
Specifically, we choose the function in the witness class that has the largest difference in expectation under $\nu$ and $\mu$: this difference constitutes the MMD.
The idea of descending a gradient flow over the space of distributions can be traced back to the seminal work of \cite{jordan1998variational}, who revealed that the Fokker-Planck equation is a gradient flow of the Kullback-Leibler divergence. Its time-discretization leads to the celebrated Langevin Monte Carlo algorithm, which comes with strong convergence guarantees  (see \cite{durmus2018analysis,dalalyan2019user}), but requires the knowledge of an analytical form of the target $\mu$.
A more recent gradient flow approach, Stein Variational Gradient Descent (SVGD) \cite{liu2017stein}, also  leverages this analytical $\mu$.

The study of particle flows defined on the MMD relates to two important topics in modern machine learning. The first is in training Implicit Generative Models, notably generative adversarial networks \cite{gans}.  Integral probability metrics have been used extensively as critic functions in this setting: these include the Wasserstein distance \cite{towards-principled-gans,wgan-gp,sinkhorn-igm} and maximum mean discrepancy \cite{gen-mmd,Li:2015,Li:2017a,cramer-gan,Binkowski:2018,Arbel:2018}.  In \cite[Section 3.3]{Mroueh:2019}, a connection between IGMs and particle transport is proposed, where it is shown that gradient flow on the witness function of an integral probability metric takes a similar form to the generator update in a GAN. The critic IPM in this case is the Kernel Sobolev Discrepancy (KSD), which has an additional gradient norm constraint on the witness function compared with the MMD. It is intended as an approximation to the negative Sobolev distance from the optimal transport literature \cite{Otto:2000,Villani:2009,Peyre:2011}.  There remain certain differences between gradient flow and GAN training, however.  First, and most obviously, gradient flow can be approximated by representing $\nu$ as a set of particles, whereas in a GAN $\nu$ is the output of a generator network. The requirement that this generator network be a smooth function of its parameters causes a departure from pure particle flow. Second, in modern implementations \cite{Li:2017a,Binkowski:2018,Arbel:2018}, the kernel used in computing the critic witness function for an MMD GAN critic is parametrized by a deep network, and an alternating optimization between the critic parameters and the generator parameters is performed. Despite these differences, we anticipate that the theoretical study of MMD flow convergence will provide helpful insights into conditions for GAN convergence, and ultimately, improvements to GAN training algorithms.

Regarding the second topic, we note that the properties of gradient descent for large neural networks
have been modeled using the convergence towards a global optimum of particle transport in the population limit, when the number of particles goes to infinity  \cite{rotskoff2018neural,chizat2018global,mei2018mean,sirignano2018mean}. 
In particular, \cite{rotskoff2019global} show that gradient descent on the parameters of a neural network can also be seen as a particle transport problem, which has as its population limit a gradient flow of a  functional defined for probability distributions over the parameters of the network. 
This functional is in general non-convex, which makes the convergence analysis challenging.
The particular structure of the MMD allows us to relate its gradient flow to neural network optimization in a well-specified regression setting similar to \cite{rotskoff2019global,chizat2018global} (we make this connection explicit in \cref{subsec:training_neural_networks}).

Our main contribution in this work is to establish conditions for convergence of MMD gradient flow to its {\em global optimum}.
We give detailed descriptions of  MMD flow for both its continuous-time and discrete instantiations in \cref{sec:gradient_flow}.
In particular, the MMD flow may employ a sample approximation for the target $\mu$: unlike e.g. Langevin Monte Carlo or SVGD,
it does not require  $\mu$ in analytical form.
Global convergence is especially challenging to prove: while for functionals that are \textit{displacement convex}, the gradient flow can be shown to converge towards a global optimum \cite{ambrosio2008gradient}, the case of non-convex functionals, like the MMD, requires different tools.
 A modified gradient flow is proposed in \cite{rotskoff2019global} that uses particle birth and death to reach global optimality.
Global optimality may also be achieved simply by teleporting particles from $\nu$ to $\mu$, as occurs for the Sobolev Discrepancy flow absent a kernel regulariser \cite[Theorem 4, Appendix D]{Mroueh:2019}.
Note, however, that the regularised Kernel Sobolev Discrepancy flow does not rely on teleportation.

%

%

Our approach takes  inspiration in particular from \cite{Bottou:2017}, where it is shown that although the $1$-Wasserstein distance is non-convex, it can be optimized up to some barrier that depends on the diameter of the domain of the target distribution.
Similarly to \cite{Bottou:2017}, we provide in \cref{sec:convergence_mmd_flow} a barrier on the gradient flow of the MMD, although
the tightness of this barrier in terms of the target diameter remains to be established.
We  obtain a further condition on the evolution of the flow to ensure global optimality, and give rates of convergence in that case, however
the condition is a strong one: it implies that the negative Sobolev distance between the target and the current particles remains bounded at all times.

We thus propose a way to regularize the MMD flow, based on a noise injection (Section \ref{sec:discretized_flow}) in the gradient, with more tractable theoretical conditions for convergence. Encouragingly, the
noise injection is shown in practice to ensure convergence in a simple illustrative case where the original MMD flow fails.
Finally, while our emphasis has been on establishing conditions for convergence, we note that MMD gradient flow has a  simple
$O(MN+N^2)$ implementation for $N$ $\nu$-samples and $M$ $\mu$-samples, and requires only evaluating the gradient of the kernel $k$ on the given samples.

\section{Gradient flow of the MMD in $W_2$}\label{sec:gradient_flow}
\subsection{Construction of the gradient flow}
In this section we introduce the gradient flow of the Maximum Mean Discrepancy (MMD) and highlight some of its properties. We start by briefly reviewing the MMD introduced in \cite{gretton2012kernel}. %
We define $\X\subset\R^d$ as the closure of a convex open set, and $\mathcal{P}_2(\X)$ as the set of probability distributions on $\X$ with finite second moment, equipped with the 2-Wassertein metric denoted $W_2$. For any $\nu \in \mathcal{P}_2(\X)$, $L_2(\nu)$ is the set of square integrable functions w.r.t. $\nu$. The reader may find a relevant mathematical background in \cref{sec:appendix_math_background}. 

\paragraph{Maximum Mean Discrepancy.}\label{subsec:MMD} Given a characteristic kernel $k : \X \times \X \to \bR$, we denote by $\kH$ its corresponding RKHS (see \cite{smola1998learning}). The space $\kH$ is a Hilbert space with inner product $\langle .,. \rangle_{\kH}$ and norm $\Vert . \Vert_{\kH}$. We will rely on specific assumptions on the kernel which are given in \cref{sec:assumptions_kernel}. In particular, \cref{assump:lipschitz_gradient_k} states that the gradient of the kernel, $\nabla k$, is Lipschitz with constant $L$.
For such kernels, it is possible to define the Maximum Mean Discrepancy as a distance on $\mathcal{P}_2(\X)$. The MMD can be written as the RKHS norm of the unnormalised \textit{witness function} $f_{\mu,\nu}$ between $\mu$ and $\nu$, which is the difference between the mean embeddings of $\nu$ and $\mu$,
\begin{align}\label{eq:witness_function}
MMD(\mu,\nu) = \Vert f_{\mu,\nu} \Vert_{\kH}, \qquad f_{\nu,\mu}(z) = \int k(x,z)\diff \nu(x) - \int k(x,z)\diff \mu(x)  \quad \forall z\in \X
\end{align}
Throughout the paper, $\mu$ will be fixed and $\nu$ can vary, hence we will only consider the dependence in $\nu$ and denote by $\F(\nu)= \frac{1}{2}MMD^2(\mu,\nu)$. 
A direct computation \cite[Appendix B]{Mroueh:2019} shows that for any %
finite measure $\chi$ such that $\nu + \epsilon \chi \in \cP_2(\X)$, we have
\begin{align}\label{prop:differential_mmd}
		\lim_{\epsilon\rightarrow 0} \epsilon^{-1}(\F(\nu +\epsilon\chi) - \F(\nu)) = \int f_{\mu,\nu}(x)d\chi(x).
	\end{align}
This means that $f_{\mu,\nu}$ is the differential of $\F(\nu)$ .  %
 Interestingly, $\F(\nu)$ admits a \textit{free-energy} expression:
\begin{align}\label{eq:mmd_as_free_energy}
	\F(\nu) = \int V(x) \diff \nu(x) +\frac{1}{2} \int W(x,y)\diff \nu(x)\diff \nu(y) + C.
\end{align}
where $V$ is a confinement potential, $W$ an interaction potential and $C$ a constant defined by:
\begin{align}\label{eq:potentials}
V(x)=-\int  k(x,x')\diff\mu(x'), \quad
W(x,x')=k(x,x'), \quad
C = \frac{1}{2} \int k(x,x')\diff\mu(x) \diff\mu(x') 
\end{align}
Formulation \cref{eq:mmd_as_free_energy} and the simple expression of the differential in \cref{prop:differential_mmd} will be key to construct a gradient flow of $\F(\nu)$, to transport particles. In \cref{eq:potentials}, $V$ reflects the potential generated by $\mu$ and acting on each particle, while $W$ reflects the potential arising from the interactions between those particles.  %
\paragraph{Gradient flow of the MMD.}\label{paragraph:flow_MMD}
We consider now the problem of transporting mass from an initial distribution $\nu_0$ to a target distribution $\mu$, by finding a continuous path $\nu_t$ starting from $\nu_0$ that  converges to $\mu$ while decreasing $\F(\nu_t)$. Such a path should be physically plausible, in that  teleportation phenomena are not allowed. For instance, the path $\nu_t = (1-e^{-t})\mu + e^{-t}\nu_0$ would constantly teleport mass between $\mu$ and $\nu_0$ although it decreases  $\F$ since $\F(\nu_t)=e^{-2t}\F(\nu_0)$
\cite[Section 3.1, Case 1]{Mroueh:2019}.
The physicality of the path is understood in terms of classical statistical physics: given an initial configuration $\nu_0$ of $N$ particles, these can move towards a new configuration $\mu$ through successive small transformations, without jumping from one location to another. %

Optimal transport theory provides a way to construct such a continuous path by means of the \textit{continuity equation}. Given a vector field $V_t$ on $\X$ and an initial condition $\nu_0$, the continuity equation is a partial differential equation which defines a path $\nu_t$ evolving under the action of the vector field $V_t$, and reads $\partial_t \nu_t = -div(\nu_t V_t)$ for all $t \geq 0$.
The reader can find more detailed discussions in \cref{subsec:wasserstein_flow} or \cite{Santambrogio:2015}. Following  \cite{ambrosio2008gradient}, a natural choice is to choose $V_t$ as the negative gradient of the differential of $\F(\nu_t)$ at $\nu_t$, since it corresponds to a gradient flow of $\F$ associated with the $W_2$ metric (see \cref{subsec:gradient_flows_functionals}). %
By \cref{prop:differential_mmd}, we know that the differential of $\F(\nu_t)$  at $\nu_t$ is given by $f_{\mu,\nu_t}$, hence $V_t(x) = -\nabla f_{\mu,\nu_t}(x)$.\footnote{Also, $V_t=\nabla V+\nabla W \star \nu_t$ (see \cref{subsec:gradient_flows_functionals}) where $\star$ denotes the classical convolution.} The
gradient flow of $\F$ is then defined by the solution $(\nu_t)_{t\geq 0}$ of
\begin{align}\label{eq:continuity_mmd}
	\partial_t \nu_t = div(\nu_t \nabla f_{\mu,\nu_t}).
\end{align}
Equation \cref{eq:continuity_mmd} is non-linear in that the vector field depends itself on $\nu_t$. This type of equation is associated in the probability theory literature to the so-called McKean-Vlasov process \cite{kac1956foundations,mckean1966class},
\begin{align}\label{eq:mcKean_Vlasov_process}
	d X_t = -\nabla f_{\mu,\nu_t}(X_t)dt \qquad X_0\sim \nu_0.
\end{align}

In fact,  \cref{eq:mcKean_Vlasov_process} defines a process $(X_t)_{t\geq 0}$ whose distribution $(\nu_t)_{t\geq 0}$ satisfies \cref{eq:continuity_mmd}, as shown in \cref{prop:existence_uniqueness}. 
$(X_t)_{t\geq 0}$ can be interpreted as the trajectory of a single particle, starting from an initial random position $X_0$ drawn from $\nu_0$. The trajectory is driven by the velocity field $-\nabla f_{\mu,\nu_t}$, and is affected by other particles. These interactions are captured by the velocity field through the dependence on the current distribution $\nu_t$ of all particles.
Existence and uniqueness of a solution to \cref{eq:continuity_mmd,eq:mcKean_Vlasov_process} are guaranteed in the next proposition, whose proof is given \cref{proof:prop:existence_uniqueness}.
\begin{proposition}\label{prop:existence_uniqueness}
	Let $\nu_0\in \mathcal{P}_2(\X)$. %
	Then, under \cref{assump:lipschitz_gradient_k}, there exists a unique process $(X_t)_{t\geq 0}$  satisfying the McKean-Vlasov equation in \cref{eq:mcKean_Vlasov_process} such that $X_0 \sim \nu_0$. Moreover, the distribution $\nu_t$ of $X_t$ is the unique solution of \cref{eq:continuity_mmd} starting from $\nu_0$, and defines a gradient flow of $\F$. 
\end{proposition}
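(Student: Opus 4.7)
The plan is to use the standard fixed-point approach for McKean--Vlasov equations, in the spirit of Sznitman. The key observation is that, because $\nu_0$ has finite second moment, we should look for a solution in the space $C([0,T];\cP_2(\X))$ of continuous curves of measures endowed with the uniform-in-time $W_2$ distance $d_T(\alpha,\beta)\eqdef \sup_{t\leq T} W_2(\alpha_t,\beta_t)$. Given a candidate curve $(\mu_t)_{t\leq T}$, I would define $\Phi((\mu_t))$ as the law of the solution $(Y_t)_{t\leq T}$ of the non-autonomous but decoupled ODE $\dot Y_t=-\nabla f_{\mu,\mu_t}(Y_t)$ with $Y_0\sim\nu_0$. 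A fixed point of $\Phi$ is exactly the distribution of a solution of \eqref{eq:mcKean_Vlasov_process}, so the whole argument reduces to proving that $\Phi$ is a strict contraction on a small enough time interval.

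The key technical step is the two-sided Lipschitz estimate on the drift $b(x,\nu)\eqdef-\nabla f_{\mu,\nu}(x)=-\int\nabla_2 k(x',x)\diff\nu(x')+\int\nabla_2 k(x',x)\diff\mu(x')$ under \cref{assump:lipschitz_gradient_k}. Lipschitzness of $\nabla k$ immediately gives $|b(x,\nu)-b(y,\nu)|\leq 2L|x-y|$, and, by the Kantorovich duality applied to the $L$-Lipschitz map $x'\mapsto\nabla_2 k(x',x)$, also $|b(x,\nu)-b(x,\nu')|\leq L\,W_1(\nu,\nu')\leq L\,W_2(\nu,\nu')$. Standard Cauchy--Lipschitz theory then provides, for any frozen curve $(\mu_t)$, a unique well-defined flow $Y_t^{\mu_\cdot}$, hence a well-defined $\Phi$, and Grönwall applied to the coupling $(Y_t^{\mu_\cdot},Y_t^{\mu'_\cdot})$ starting from the same $X_0$ yields $W_2^2(\Phi(\mu)_t,\Phi(\mu')_t)\leq C(T)\int_0^t W_2^2(\mu_s,\mu'_s)\diff s$. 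For $T$ small this is a contraction; concatenating solutions extends the argument to $[0,\infty)$. The main obstacle here is strictly bookkeeping: making sure moments of $Y_t$ stay controlled uniformly in $t$ so that the contraction constant $C(T)$ is genuinely finite, which follows because $\nabla k$ is bounded on $\cP_2(\X)\times\X$ up to a linear term in $|x|$.

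Having obtained a unique process $(X_t)_{t\geq 0}$ with law $(\nu_t)_{t\geq 0}$, I would deduce the continuity equation \eqref{eq:continuity_mmd} by the usual test-function argument: for any $\varphi\in C_c^\infty(\X)$, It\^o's formula (here just the chain rule, since there is no diffusion) gives $\varphi(X_t)-\varphi(X_0)=-\int_0^t\langle\nabla\varphi(X_s),\nabla f_{\mu,\nu_s}(X_s)\rangle\diff s$; taking expectations and recognizing both sides as pairings against $\nu_t-\nu_0$ and against $\nabla f_{\mu,\nu_s}$ respectively yields \eqref{eq:continuity_mmd} in the sense of distributions. Uniqueness at the level of \eqref{eq:continuity_mmd} follows from uniqueness at the McKean--Vlasov level together with the superposition principle (every weak solution of the continuity equation with an $L^2$-in-time velocity can be represented as the law of an ODE flow).

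Finally, to identify $(\nu_t)$ as the Wasserstein gradient flow of $\F$, I would invoke the differential computation recalled in \eqref{prop:differential_mmd}, which identifies $f_{\mu,\nu_t}$ as the first variation of $\F$ at $\nu_t$; together with the free-energy decomposition \eqref{eq:mmd_as_free_energy}, this places $\F$ in the class of functionals for which Ambrosio--Gigli--Savar\'e (see the material referenced in \cref{subsec:gradient_flows_functionals}) guarantee that $-\nabla f_{\mu,\nu_t}$ is the unique element of minimal norm in the Wasserstein subdifferential of $\F$ at $\nu_t$. The continuity equation \eqref{eq:continuity_mmd} is then by definition the gradient flow of $\F$ in $(\cP_2(\X),W_2)$. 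The main conceptual subtlety is verifying the regularity assumptions (geodesic semiconvexity up to a quadratic, absolute continuity of $t\mapsto\nu_t$) required to apply that abstract theory, but both follow readily from the boundedness and Lipschitzness of $k$ and $\nabla k$ in \cref{sec:assumptions_kernel}.
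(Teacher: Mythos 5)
Your proposal is correct and its skeleton matches the paper's: well-posedness of the McKean--Vlasov dynamics from the joint Lipschitz estimate on $(x,\nu)\mapsto\nabla f_{\mu,\nu}(x)$, passage to the continuity equation by It\^o's formula (here the chain rule), and identification of the resulting curve as the Wasserstein gradient flow via the first-variation formula \cref{prop:differential_mmd} and the Ambrosio--Gigli--Savar\'e theory. The differences are in how two sub-steps are discharged. For the process, the paper simply cites standard McKean--Vlasov results (Jourdain--M\'el\'eard) as a black box, whereas you unfold the underlying Sznitman fixed-point argument (freeze the curve of measures, solve the decoupled ODE, Gr\"onwall, contraction on $C([0,T];\mathcal{P}_2(\X))$, concatenate); this is the same proof, just made explicit, and your two Lipschitz estimates on the drift are exactly those of \cref{prop:grad_witness_function}. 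For uniqueness of the solution of \cref{eq:continuity_mmd}, you invoke the superposition principle to push uniqueness from the particle level to the PDE level, while the paper instead deduces it from the $\lambda$-geodesic semiconvexity of $\F$ (with $\lambda=-3L$, \cref{lem:lambda_convexity_bis}) and \cite[Theorem 11.1.4]{ambrosio2008gradient}. Your route has the advantage of not needing any convexity input for uniqueness (only Lipschitzness and the integrability $\int_0^T\!\int\Vert\nabla f_{\mu,\nu_t}\Vert\,\diff\nu_t\,\diff t<\infty$, which the linear growth of $\nabla k$ supplies); the paper's route has the advantage that the semiconvexity estimate is needed anyway to place $\F$ in the AGS framework for the ``defines a gradient flow'' claim, a point you correctly flag as the remaining regularity check. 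Either way the argument closes.
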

Besides existence and uniqueness of the gradient flow of $\F$, one expects $\F$ to decrease along the path $\nu_t$ and ideally to converge towards $0$. The first property, stated in the next proposition, is rather easy to get and is the object of \cref{prop:decay_mmd}, similar to the result for KSD flow in \cite[Section 3.1]{Mroueh:2019}.
\begin{proposition}\label{prop:decay_mmd}
	Under \cref{assump:lipschitz_gradient_k}, $\F(\nu_t)$ is decreasing in time and satisfies:
	\begin{align}\label{eq:time_evolution_mmd}
		\frac{d\F(\nu_t)}{dt}= - \int \Vert \nabla f_{\mu,\nu_t}(x) \Vert^2 \diff \nu_t(x).  
	\end{align}
\end{proposition}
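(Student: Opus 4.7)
The plan is to combine the first-variation identity \cref{prop:differential_mmd} with the continuity equation \cref{eq:continuity_mmd} via a chain-rule and integration-by-parts argument.

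First, I would differentiate $\F(\nu_t)$ directly using the free-energy form \cref{eq:mmd_as_free_energy}. Formally, by the symmetry of $W$ and the product rule,
\begin{align*}
\frac{d\F(\nu_t)}{dt} = \int \Bigl( V(x) + \int W(x,y) d\nu_t(y) \Bigr) \partial_t \nu_t(dx).
\end{align*}
By the definitions in \cref{eq:potentials}, the bracketed term is exactly $f_{\mu,\nu_t}(x)$; this step is just \cref{prop:differential_mmd} applied to the signed measure $\chi = \partial_t \nu_t$, which has zero total mass since $\nu_t$ is a probability measure for all $t$.

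Second, I would substitute the continuity equation $\partial_t \nu_t = div(\nu_t \nabla f_{\mu,\nu_t})$ and integrate by parts against $f_{\mu,\nu_t}$:
\begin{align*}
\frac{d\F(\nu_t)}{dt} = \int f_{\mu,\nu_t}(x) \, div\bigl(\nu_t \nabla f_{\mu,\nu_t}\bigr)(dx) = -\int \| \nabla f_{\mu,\nu_t}(x) \|^2 d\nu_t(x).
\end{align*}
This is \cref{eq:time_evolution_mmd}, and non-positivity of the right-hand side gives the monotonicity of $\F(\nu_t)$.

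The main obstacle is to rigorously justify these formal manipulations. Under \cref{assump:lipschitz_gradient_k}, $\nabla f_{\mu,\nu_t}$ is globally Lipschitz in $x$ and uniformly bounded in $t$, as it is a difference of $\E_\mu$ and $\E_{\nu_t}$ of the globally Lipschitz map $\nabla k(\cdot,x)$. Combined with the continuity of $t \mapsto \nu_t$ in $W_2$ granted by \cref{prop:existence_uniqueness}, this validates the interchange of $\partial_t$ and integration in the first step. The boundary term in the integration by parts vanishes when $\X = \R^d$ under mild moment conditions on $\nu_t$; for a general convex $\X$, one can instead sidestep integration by parts entirely by using the Lagrangian representation \cref{eq:mcKean_Vlasov_process}. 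Namely, write $\F(\nu_t) = \tfrac{1}{2}\| f_{\mu,\nu_t} \|_\kH^2$, differentiate in $\kH$ using $\dot X_t = -\nabla f_{\mu,\nu_t}(X_t)$ together with the reproducing property $\langle k(X_t,\cdot), f_{\mu,\nu_t}\rangle_\kH = f_{\mu,\nu_t}(X_t)$, and collect terms to recover $-\E[\|\nabla f_{\mu,\nu_t}(X_t)\|^2]$, which is exactly \cref{eq:time_evolution_mmd}.
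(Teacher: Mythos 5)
Your formal derivation is the same as the paper's: first variation equals $f_{\mu,\nu_t}$ by \cref{prop:differential_mmd}, substitute the continuity equation, integrate by parts. Where you diverge is in how the computation is made rigorous. The paper does not attempt to justify the Eulerian manipulations directly; instead it invokes the energy identity of \cite[Theorem 11.3.2]{ambrosio2008gradient}, which applies because $\F$ is $\lambda$-displacement convex with $\lambda=3L$ (\cref{lem:lambda_convexity_bis}). Your proposed justification of the interchange of $\partial_t$ and integration in the Eulerian picture is the weakest link: $\partial_t\nu_t$ only makes sense distributionally, and \cref{prop:differential_mmd} is stated for finite measures $\chi$ with $\nu+\epsilon\chi\in\cP_2(\X)$, so taking $\chi=\partial_t\nu_t$ is itself a formal step, and Lipschitzness of $\nabla f_{\mu,\nu_t}$ alone does not close that gap (also note $\nabla f_{\mu,\nu_t}$ is bounded by $LW_2(\nu_t,\mu)$, hence uniformly bounded only on compact time intervals). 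Your Lagrangian fallback, however, is sound and self-contained: writing $\F(\nu_t)=\tfrac12\Vert f_{\mu,\nu_t}\Vert_\kH^2$, differentiating the mean embedding along $\dot X_t=-\nabla f_{\mu,\nu_t}(X_t)$, and using the reproducing property for derivatives yields $-\E[\Vert\nabla f_{\mu,\nu_t}(X_t)\Vert^2]$ directly, with the exchange of derivative and expectation justified by the differentiation lemma and Bochner integrability exactly as in \cref{lem:derivative_mmd_augmented}. This is in effect the continuous-time analogue of the paper's own discrete-time argument for \cref{prop:decreasing_functional}; it buys a more elementary, kernel-specific proof that avoids the $\lambda$-convexity machinery, at the price of re-doing the regularity bookkeeping that the AGS energy identity packages for free.
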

This property results from \cref{eq:continuity_mmd} and the energy identity in \cite[Theorem 11.3.2]{ambrosio2008gradient} and is proved in \cref{proof:prop:decay_mmd}. %
From \cref{eq:time_evolution_mmd}, $\F$ can be seen as a Lyapunov functional for the dynamics defined by \cref{eq:continuity_mmd}, since it is decreasing in time. Hence, the continuous-time gradient flow introduced in \cref{eq:continuity_mmd} allows to formally consider the notion of gradient descent on $\mathcal{P}_2(\X)$ with $\F$ as a cost function.
A time-discretized version of the flow naturally follows, and is provided in the next section.
\subsection{Euler scheme}
We consider here a forward-Euler scheme of \cref{eq:continuity_mmd}. For any $T: \X \rightarrow \X$ a measurable map, and $\nu \in \mathcal{P}_2(\X)$, we denote the pushforward measure by $T_{\#}\nu$ (see \cref{subsec:wasserstein_flow}). Starting from $\nu_0\in \mathcal{P}_2(\X)$ and using a step-size $\gamma>0$, a sequence $\nu_n\in \mathcal{P}_2(\X)$ is given by iteratively applying
\begin{align}\label{eq:euler_scheme}
	\nu_{n+1} = (I - \gamma \nabla f_{\mu,\nu_n})_{\#}\nu_n.
\end{align} 
For all $n\ge 0$, equation \cref{eq:euler_scheme} is the distribution of the process defined by
\begin{align}\label{eq:euler_scheme_particles}
	X_{n+1} = X_n - \gamma \nabla f_{\mu,\nu_n}(X_n) \qquad X_0\sim \nu_0.
\end{align}
The asymptotic behavior of \cref{eq:euler_scheme} as $n\rightarrow \infty$ %
will be the object of \cref{sec:convergence_mmd_flow}. For now, we provide a guarantee that the sequence $(\nu_n)_{n\in \mathbb{N}}$ approaches $(\nu_t)_{t\geq 0}$ as the step-size $\gamma\rightarrow 0$.
\begin{proposition}\label{prop:convergence_euler_scheme}
	Let $n\ge0$. Consider $\nu_n$ defined in \cref{eq:euler_scheme}, and the interpolation path $\rho_t^{\gamma}$ defined as: $\rho_t^{\gamma} = (I-(t- n\gamma) \nabla f_{\mu,\nu_n})_{\#}\nu_n$, $\forall t\in [n\gamma,(n+1)\gamma)$. Then, under \cref{assump:lipschitz_gradient_k},  $\forall \;T>0$,%
	\begin{align}
		W_2(\rho_t^{\gamma},\nu_t)\leq \gamma C(T) \quad \forall t\in [0,T]
	\end{align}
	where $C(T)$ is a constant that depends only on $T$.
\end{proposition}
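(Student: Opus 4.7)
The plan is to construct a synchronous coupling between the McKean--Vlasov particle $(X_t)_{t\ge 0}$ from \cref{eq:mcKean_Vlasov_process} and a piecewise-affine interpolation of the Euler scheme, and then to control the resulting $L^2$-deviation by a Grönwall argument. Concretely, I would let $Y_0:=X_0$ (the same initial draw used to represent $\nu_0$) and on each interval $[n\gamma,(n+1)\gamma)$ define
\begin{align*}
Y_t = Y_{n\gamma} - (t-n\gamma)\nabla f_{\mu,\nu_n}(Y_{n\gamma}).
\end{align*}
A direct induction based on \cref{eq:euler_scheme} and the definition of $\rho_t^\gamma$ in the statement shows that $Y_t$ has law $\rho_t^\gamma$, while $X_t$ has law $\nu_t$ by \cref{prop:existence_uniqueness}; hence $(X_t,Y_t)$ is an admissible coupling and $W_2(\rho_t^\gamma,\nu_t)^2\le \bE[\|X_t-Y_t\|^2]$.

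The next step is to extract from \cref{assump:lipschitz_gradient_k} the two Lipschitz properties of the velocity field $(x,\nu)\mapsto \nabla f_{\mu,\nu}(x)$ that the argument needs. Using the integral representation of $\nabla f_{\mu,\nu}$ as a $(\nu-\mu)$-integral of $\nabla k$, together with the zero total mass of $\nu-\mu$ (which allows subtracting $\nabla k$ evaluated at any fixed reference point inside the integral), the $L$-Lipschitz continuity of $\nabla k$ yields the position estimate $\|\nabla f_{\mu,\nu}(x)-\nabla f_{\mu,\nu}(y)\|\le 2L\|x-y\|$, the measure estimate $\|\nabla f_{\mu,\nu}(x)-\nabla f_{\mu,\nu'}(x)\|\le L\, W_2(\nu,\nu')$ (via $W_1\le W_2$ and Kantorovich--Rubinstein), and the linear growth bound $\|\nabla f_{\mu,\nu}(x)\|\le L\int \|z-x\|\,d(\nu+\mu)(z)$. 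A standard Grönwall argument on the second moments of $X_t$ and $Y_t$ using the last bound then furnishes a constant $K(T)$ such that $\bE[\|\nabla f_{\mu,\nu_s}(X_s)\|^2]$ and $\bE[\|\nabla f_{\mu,\nu_n}(Y_{n\gamma})\|^2]$ are both at most $K(T)^2$ uniformly for $s\in[0,T]$ and $n\gamma\le T$.

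With these ingredients, writing $n(s):=\lfloor s/\gamma\rfloor$, I would decompose
\begin{align*}
X_t-Y_t = -\int_0^t\bigl[\nabla f_{\mu,\nu_s}(X_s)-\nabla f_{\mu,\nu_{n(s)}}(Y_{n(s)\gamma})\bigr]\,ds
\end{align*}
into three pieces by inserting $\pm\nabla f_{\mu,\nu_s}(Y_s)$ and $\pm\nabla f_{\mu,\nu_{n(s)}}(Y_s)$: a position error controlled by $2L\|X_s-Y_s\|$; a measure error controlled by $L\,W_2(\nu_s,\nu_{n(s)})\le L\,\bE[\|X_s-X_{n(s)\gamma}\|^2]^{1/2}$; and a time-discretisation error equal to $2L(s-n(s)\gamma)\|\nabla f_{\mu,\nu_{n(s)}}(Y_{n(s)\gamma})\|$. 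The Cauchy--Schwarz bound $\|X_s-X_{n(s)\gamma}\|^2\le \gamma\int_{n(s)\gamma}^s \|\nabla f_{\mu,\nu_r}(X_r)\|^2 \,dr$ shows that the measure and time-discretisation errors have $L^2$-norms at most $L\gamma K(T)$ and $2L\gamma K(T)$ respectively. Setting $\phi(t):=\bE[\|X_t-Y_t\|^2]^{1/2}$ and applying Minkowski's inequality gives $\phi(t)\le 2L\int_0^t\phi(s)\,ds + 3LK(T)T\gamma$, so Grönwall's inequality yields $\phi(t)\le C(T)\gamma$ with $C(T):=3LTK(T)e^{2LT}$, as required.

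The delicate point is not the final Grönwall step but the a-priori velocity bound $K(T)$: because $\|\nabla f_{\mu,\nu}(x)\|$ only has linear growth in $x$, producing a $\gamma$-independent constant requires a preliminary Grönwall argument on the second moments of $X_t$ and $Y_t$ carried out in parallel, so that the resulting constants depend only on $T$, $L$, and the first/second moments of $\nu_0$ and $\mu$. Everything else reduces to a textbook forward-Euler discretisation argument adapted to the nonlinear, measure-dependent drift.
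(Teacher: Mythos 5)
Your argument is correct in substance and reaches the stated bound, but it is organized quite differently from the paper's proof. The paper does not set up a single synchronous coupling: it introduces an auxiliary discrete sequence $\bar{\nu}_n^{\gamma}$ (driven by the continuous-time marginals $\nu_{\gamma n}$ but updated with discrete steps), splits $W_2(\nu_n^{\gamma},\nu_{\gamma n})$ into $W_2(\nu_n^{\gamma},\bar{\nu}_n^{\gamma})+W_2(\bar{\nu}_n^{\gamma},\nu_{\gamma n})$, controls each piece by a discrete Gr\"onwall recursion (\cref{lem:euler_error_1} and \cref{lem:euler_error_2}), and isolates the local truncation error in a separate Taylor lemma (\cref{lem:Taylor-expansion}). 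Your one-shot coupling of the McKean--Vlasov path $X_t$ with the piecewise-affine Euler path $Y_t$, followed by a single continuous Gr\"onwall estimate on $\phi(t)=\bE[\|X_t-Y_t\|^2]^{1/2}$, is the more streamlined, textbook route and dispenses with the intermediate process. Both versions rest on the same two ingredients: the joint $2L$-Lipschitz continuity of $(x,\nu)\mapsto\nabla f_{\mu,\nu}(x)$ (\cref{prop:grad_witness_function}) and an a-priori speed bound (the paper's $r_t\le r_0e^{Lt}$ plays the role of your $K(T)$). Note that since $\nabla f_{\mu,\nu}(x)=\bE_{x'\sim\nu,z\sim\mu}[\nabla_1k(x,x')-\nabla_1k(x,z)]$, one has $\|\nabla f_{\mu,\nu}(x)\|\le L\,\bE_{x'\sim\nu,z\sim\mu}[\|x'-z\|]$ \emph{uniformly in} $x$, so a Gr\"onwall bound on first moments already yields $K(T)$; the parallel second-moment argument you flag as the delicate point is slightly more than is needed.

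One bookkeeping slip in the measure-error term: $\nu_{n(s)}$ is the law of $Y_{n(s)\gamma}$, i.e.\ the discrete-scheme measure, whereas $X_{n(s)\gamma}$ has law $\nu_{n(s)\gamma}$, the continuous marginal at the grid time. Hence $W_2(\nu_s,\nu_{n(s)})\le\bE[\|X_s-X_{n(s)\gamma}\|^2]^{1/2}$ does not hold as written; the correct estimate is $W_2(\nu_s,\nu_{n(s)})\le\bE[\|X_s-X_{n(s)\gamma}\|^2]^{1/2}+\phi(n(s)\gamma)$. This only injects an additional term $L\int_0^t\phi(n(s)\gamma)\,ds\le L\int_0^t\sup_{r\le s}\phi(r)\,ds$ into your inequality, which is absorbed by running Gr\"onwall on $\sup_{r\le t}\phi(r)$ with slightly larger constants, so the conclusion stands.
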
 
 A proof of \cref{prop:convergence_euler_scheme} is provided in \cref{proof:prop:convergence_euler_scheme} and relies on standard techniques to control the discretization error of a forward-Euler scheme.
\cref{prop:convergence_euler_scheme} means that $\nu_n$ can be linearly interpolated giving rise to a path $\rho_t^{\gamma}$ which gets arbitrarily close to $\nu_t$ on bounded intervals. Note that as $T \rightarrow \infty$ the bound $C(T)$ it is expected to blow up. However, this result is enough to show that \cref{eq:euler_scheme} is indeed a discrete-time flow of $\F$. In fact, provided that $\gamma$ is small enough, $\F(\nu_n)$ is a decreasing sequence, as shown in \cref{prop:decreasing_functional}.
\begin{proposition}\label{prop:decreasing_functional}
 	Under \cref{assump:lipschitz_gradient_k}, and for $\gamma \leq 2/3L$, the sequence $\F(\nu_n)$ is decreasing, and
	\begin{align*}
	\F(\nu_{n+1})-\F(\nu_n)\leq -\gamma (1-\frac{3\gamma}{2}L )\int \Vert \nabla f_{\mu, \nu_n}(x)\Vert^2 \diff \nu_n(x), \quad\forall n\geq 0.
	\end{align*}
\end{proposition}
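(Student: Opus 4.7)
The plan is to exploit the free-energy decomposition from~\eqref{eq:mmd_as_free_energy} and apply a descent-lemma-style argument to each term separately, using the pushforward structure $\nu_{n+1} = T_\# \nu_n$ with $T(x) = x - \gamma\,\phi(x)$ and $\phi = \nabla f_{\mu,\nu_n}$. Since $T$ is a transport map, we can rewrite
\begin{align*}
\F(\nu_{n+1}) - \F(\nu_n) = \int \bigl[V(T(x))-V(x)\bigr]\,d\nu_n(x) + \tfrac{1}{2}\int\!\!\int \bigl[W(T(x),T(y))-W(x,y)\bigr]\,d\nu_n(x)\,d\nu_n(y),
\end{align*}
so the task reduces to controlling each of the two terms by Taylor expansion and then summing the first-order parts using the identity $\phi(x)=\nabla V(x)+\int \nabla_1 W(x,y)\,d\nu_n(y)$, which follows from the definition of $f_{\mu,\nu_n}$.

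First I would handle the potential term. Since $\nabla V(x) = -\int \nabla_1 k(x,x')\,d\mu(x')$, Assumption~\ref{assump:lipschitz_gradient_k} immediately yields that $\nabla V$ is $L$-Lipschitz. The standard descent lemma therefore gives
\begin{align*}
V(T(x))-V(x) \leq -\gamma \langle \nabla V(x), \phi(x)\rangle + \tfrac{L\gamma^2}{2}\|\phi(x)\|^2.
\end{align*}
Next I would handle the interaction term by differentiating along the path $s \mapsto W(x-s\gamma\phi(x),\,y-s\gamma\phi(y))$. Using the symmetry $W(x,y)=W(y,x)$ and Fubini, the first-order contribution collapses to
\begin{align*}
-\gamma \int\!\!\int \langle \nabla_1 W(x,y), \phi(x)\rangle\,d\nu_n(x)d\nu_n(y).
\end{align*}
For the remainder, Assumption~\ref{assump:lipschitz_gradient_k} (applied to $\nabla_1 k$ as a function of its two arguments) gives $\|\nabla_1 W(T_s(x),T_s(y))-\nabla_1 W(x,y)\| \leq Ls\gamma(\|\phi(x)\|+\|\phi(y)\|)$; integrating over $s\in[0,1]$ and applying the AM-GM inequality $\|\phi(x)\|\|\phi(y)\|\leq\tfrac12(\|\phi(x)\|^2+\|\phi(y)\|^2)$ yields the remainder bound $L\gamma^2 \int \|\phi\|^2 d\nu_n$.

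Finally I would sum the three pieces. The first-order terms from $V$ and $W$ combine, using $\phi = \nabla V + \int\nabla_1 W(\cdot,y)\,d\nu_n(y)$, into the clean expression $-\gamma\int \|\phi(x)\|^2 d\nu_n(x)$, while the remainders add to $\bigl(\tfrac{L\gamma^2}{2} + L\gamma^2\bigr)\int\|\phi\|^2 d\nu_n = \tfrac{3L\gamma^2}{2}\int\|\phi\|^2 d\nu_n$. This gives the stated inequality, and monotonicity of $\F(\nu_n)$ follows whenever $\gamma \le 2/(3L)$ makes the prefactor $1-\tfrac{3\gamma L}{2}$ nonnegative.

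The main technical obstacle is the $W$-contribution: unlike a standard descent lemma, both arguments of $W$ are perturbed simultaneously, so one cannot invoke a one-variable Lipschitz bound directly. The fix is to parametrize the perturbation by a single scalar $s$, exploit the symmetry of the kernel to fold the two partial-derivative terms into one, and use AM-GM to absorb the cross term $\|\phi(x)\|\|\phi(y)\|$; getting precisely the constant $3/2$ (rather than a larger one from Cauchy-Schwarz) depends on this choice.
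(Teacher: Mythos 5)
Your proof is correct, and it reaches the stated constant $3L/2$ by a route that is organized differently from the paper's. The paper also Taylor-expands $t\mapsto\F(\rho_t)$ along the interpolation $\rho_t=(I-\gamma t\nabla f_{\mu,\nu_n})_{\#}\nu_n$, but it does so through \cref{lem:derivative_mmd_augmented}: writing $\F(\rho_t)=\tfrac12\Vert f_{\mu,\rho_t}\Vert_{\kH}^2$, it computes $\dot{\F}(\rho_t)$ as an expectation of $\bigl(\nabla_1 k(s_t(x),s_t(x'))-\nabla_1 k(s_t(x),z)\bigr)\cdot\phi(x)$ over independent $x,x'\sim\nu_n$ and $z\sim\mu$, shows $\dot{\F}(\rho_0)=-\gamma\int\Vert\nabla f_{\mu,\nu_n}\Vert^2\diff\nu_n$ and $\vert\dot{\F}(\rho_t)-\dot{\F}(\rho_0)\vert\leq 3Lt\gamma^2\int\Vert\nabla f_{\mu,\nu_n}\Vert^2\diff\nu_n$, and concludes by the fundamental theorem of calculus. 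Your version replaces this lemma by the free-energy split \cref{eq:mmd_as_free_energy}--\cref{eq:potentials}: the confinement term $V$ (the paper's $-k(s_t(x),z)$ contribution) is handled by the standard descent lemma and yields $L\gamma^2/2$, while the interaction term $W$ (the paper's $k(s_t(x),s_t(x'))$ contribution) yields $L\gamma^2$ after symmetrization and AM-GM; the paper instead groups these as the factor $2\Vert\phi(x)\Vert+\Vert\phi(x')\Vert$ and uses $\mathbb{E}[\Vert\phi(x)\Vert\Vert\phi(x')\Vert]=\mathbb{E}[\Vert\phi\Vert]^2\leq\mathbb{E}[\Vert\phi\Vert^2]$ where you use AM-GM. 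The identity $\nabla f_{\mu,\nu_n}=\nabla V+\nabla W\star\nu_n$ that you use to collapse the first-order terms is exactly the footnote in \cref{paragraph:flow_MMD}, so everything you invoke is available under \cref{assump:lipschitz_gradient_k}. What your organization buys is self-containedness and a transparent accounting of where each piece of the constant $3/2$ comes from, at the cost of not reusing the differentiability lemma that the paper also needs for \cref{prop:lambda_convexity} and \cref{thm:convergence_noisy_gradient}. The only points left implicit are routine: pointwise validity of the Taylor expansions (guaranteed since $\nabla k$ is Lipschitz, hence $V$ and $W$ are $C^1$ with Lipschitz gradients) and the integrability needed for Fubini (guaranteed since $\nabla f_{\mu,\nu_n}$ has linear growth and $\nu_n\in\mathcal{P}_2(\X)$).
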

\cref{prop:decreasing_functional}, whose proof is given in \cref{proof:prop:decreasing_functional}, is a discrete analog of \cref{prop:decay_mmd}. %
In fact, \cref{eq:euler_scheme} is intractable in general as it requires the knowledge of $\nabla f_{\mu,\nu_n}$ (and thus of $\nu_n$) exactly at each iteration $n$. Nevertheless, we present in \cref{sec:sample_based} a practical algorithm using a finite number of samples which is provably convergent towards \cref{eq:euler_scheme} as the sample-size increases. We thus begin by studying the convergence properties of the time discretized MMD flow \cref{eq:euler_scheme} in the next section.%

\section{Convergence properties of the MMD flow}\label{sec:convergence_mmd_flow}
We are interested in analyzing the asymptotic properties of the gradient flow of $\F$. %
Although we know from \cref{prop:decay_mmd,prop:decreasing_functional} that $\F$ decreases in time, it can very well  converge to local minima. One way to see this is by looking at the equilibrium condition for \cref{eq:time_evolution_mmd}. As a non-negative and decreasing function, $t \mapsto \F(\nu_t)$  is guaranteed to converge towards a finite limit $l\ge0$, which implies in turn that the r.h.s. of \cref{eq:time_evolution_mmd} converges to $0$. If $\nu_t$ happens to converge towards some distribution $\nu^{*}$, %
it is possible to show that the equilibrium condition \cref{eq:equilibrium_condition} must hold \cite[Prop. 2]{mei2018mean}
,
\begin{align}\label{eq:equilibrium_condition}
\int \left\Vert \nabla f_{\mu,\nu^{*}}(x) \right\Vert^2 \diff \nu^{*}(x) =0.  
\end{align}
Condition \cref{eq:equilibrium_condition} does not necessarily imply that $\nu^{*}$ is a global optimum unless when the loss function has a particular structure \cite{Chizat:2018a}. For instance, this would hold if the kernel is linear in at least one of its dimensions.
However, when a characteristic kernel is required (to ensure the MMD is a distance), such a structure can't be exploited.
Similarly, the claim that KSD flow converges globally,  \cite[Prop. 3, Appendix B.1]{Mroueh:2019}, requires an assumption \cite[Assump. A]{Mroueh:2019} that excludes local minima which are not global (see \cref{subsec:equilibrium_condition}; recall  KSD is related to MMD).
Global convergence of the flow is harder to obtain, and will be the topic of this section. The main challenge is the lack of convexity of $\F$ w.r.t. the Wassertein metric. %
We show that $\F$ is merely $\Lambda$-convex, and that standard optimization techniques  only provide a loose bound on its asymptotic value.
We next exploit a Lojasiewicz type inequality to prove  convergence to the global optimum provided that a particular quantity remains bounded at all times. 

\subsection{Optimization in a ($W_2$) non-convex setting}
\label{subsection:barrier_optimization}
The \textit{displacement convexity} of a functional $\F$ is an important criterion in characterizing the convergence of its Wasserstein gradient flow.
Displacement convexity states that $t\mapsto \F(\rho_t)$ is a convex function whenever $(\rho_t)_{t\in[0,1]}$ is a path of minimal length between two distributions $\mu$ and $\nu$ (see \cref{def:displacement_convexity}). %
Displacement convexity should not be confused with \textit{mixture convexity}, which corresponds to the usual notion of convexity. As a matter of fact, $\F$ is mixture convex in that it satisfies: $\F(t\nu +(1-t)\nu')\leq t\F(\nu)+(1-t)\F(\nu')$ for all $t\in [0,1]$ and $\nu,\nu'\in\mathcal{P}_2(\X)$ (see \cref{lem:mixture_convexity}). Unfortunately, \textit{$\F$ is not displacement convex}. Instead, $\F$ only satisfies a weaker notion of displacement convexity called $\Lambda$-displacement convexity, given in \cref{def:lambda-convexity} (\cref{subsec:lambda_convexity}). 
\begin{proposition}
	\label{prop:lambda_convexity} Under \cref{assump:diff_kernel,assump:lipschitz_gradient_k,assump:bounded_fourth_oder}, $\F$ is $\Lambda$-displacement convex, and satisfies
		\begin{equation}
	\F(\rho_{t})\leq(1-t)\F(\nu)+t\F(\nu')-\int_0^1 \Lambda(\rho_s, v_s ) G(s,t)\diff s
	\end{equation}
for all $\nu, \nu'\in \mathcal{P}_2(\X)$ and any \textit{displacement geodesic} $(\rho_t)_{t\in[0,1]}$ from $\nu$ to $\nu'$ with velocity vectors $(v_t)_{t \in[0,1]}$. The functional $\Lambda$ is defined for any pair $(\rho,v)$ with $\rho\in \mathcal{P}_2(\X)$ and $\Vert v\Vert \in L_2(\rho)$,   
	\begin{align}\label{eq:lambda}
		\Lambda(\rho,v) = \left\Vert \int v(x).\nabla_x k(x,.) \diff \rho(x) \right\Vert^2_{\mathcal{H}} - \sqrt{2}\lambda d \F(\rho)^{\frac{1}{2}}  \int \left\Vert  v(x) \right\Vert^2 \diff \rho(x),
	\end{align}
where $(s,t)\mapsto G(s,t)=  s(1-t) \mathbbm{1}\{s\leq t\}+ t(1-s) \mathbbm{1}\{s\geq t\}$ and $\lambda$ is defined in \cref{assump:bounded_fourth_oder}.%
	\end{proposition}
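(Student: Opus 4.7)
The plan is to prove the stated inequality via a second-order Taylor expansion with Green's function remainder. Setting $\phi(s) \eqdef \F(\rho_s)$, one checks directly (by verifying it on a generic quadratic) that $s \mapsto G(s,t)$ is the Green's function of $-\partial_s^2$ on $[0,1]$ with Dirichlet boundary conditions, so that for any $\phi \in C^2([0,1])$,
\begin{equation*}
\phi(t) = (1-t)\phi(0) + t\phi(1) - \int_0^1 G(s,t)\,\ddot\phi(s)\diff s.
\end{equation*}
Since $G(s,t) \geq 0$ on $[0,1]^2$, it suffices to establish the pointwise lower bound $\ddot\phi(s) \geq \Lambda(\rho_s, v_s)$ for $s \in [0,1]$, after which substitution into the above identity yields the claim.

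To compute $\ddot\phi$ I use the free-energy decomposition of \cref{eq:mmd_as_free_energy} and \cref{eq:potentials} together with the two PDEs that govern a $W_2$-geodesic: the continuity equation $\partial_s \rho_s + \mathrm{div}(\rho_s v_s) = 0$ and the inviscid Burgers equation $\partial_s v_s + (v_s \cdot \nabla) v_s = 0$ (the latter encoding that the velocity is constant along the rectilinear trajectories of the geodesic). Differentiating $\int V\diff\rho_s + \tfrac{1}{2}\iint W(x,y)\diff\rho_s(x)\diff\rho_s(y)$ twice in $s$, the Jacobian-of-velocity terms arising from $\partial_s v_s$ cancel exactly against their counterparts arising from the transport of mass through $\rho_s$ (this is the precise place where Burgers is used). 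After combining the $V$- and $W$-contributions and exploiting the symmetry of $k$, one obtains
\begin{equation*}
\ddot\phi(s) = \iint v_s(x)^\top \nabla_x^2 k(x,x') v_s(x) \diff\rho_s(x) \diff(\rho_s - \mu)(x') + \iint v_s(x)^\top \nabla_{xy}^2 k(x,y) v_s(y) \diff\rho_s(x) \diff\rho_s(y).
\end{equation*}
The smoothness of $k$ from \cref{assump:diff_kernel} together with Lipschitzness of $\nabla k$ (\cref{assump:lipschitz_gradient_k}) justifies the differentiations under the integral sign.

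The second double integral is identified as the first piece of $\Lambda$ via the derivative-reproducing property of $\kH$: since $\langle v_s(x)\cdot\nabla_x k(x,\cdot), v_s(y)\cdot\nabla_y k(y,\cdot)\rangle_{\kH} = v_s(x)^\top \nabla_{xy}^2 k(x,y) v_s(y)$, Fubini gives $\iint v_s(x)^\top \nabla_{xy}^2 k(x,y) v_s(y) \diff\rho_s \diff\rho_s = \|\int v_s(x)\cdot\nabla_x k(x,\cdot)\diff\rho_s(x)\|_{\kH}^2$. For the first double integral, I note that for each fixed $x$ the function $x'\mapsto v_s(x)^\top \nabla_x^2 k(x,x') v_s(x) = \sum_{ij} v_{s,i}(x) v_{s,j}(x)\,\partial_{x_ix_j}^2 k(x,x')$ lies in $\kH$, with $\|\partial_{x_ix_j}^2 k(x,\cdot)\|_{\kH}^2 = \partial_{x_ix_j}^2\partial_{x'_ix'_j}^2 k(x,x')|_{x'=x} \leq \lambda^2$ under \cref{assump:bounded_fourth_oder}. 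A Cauchy-Schwarz on the pair of indices $(i,j)$ then yields $\|x'\mapsto v_s(x)^\top \nabla_x^2 k(x,x') v_s(x)\|_{\kH} \leq \lambda d\, \|v_s(x)\|^2$. Dualizing against the signed measure $\rho_s - \mu$ via the RKHS bound $|\int f\,\diff(\rho-\mu)| \leq \|f\|_{\kH}\cdot MMD(\rho,\mu) = \sqrt{2}\|f\|_{\kH}\F(\rho)^{1/2}$ and then integrating in $x$ produces the lower bound $-\sqrt{2}\lambda d\, \F(\rho_s)^{1/2}\int\|v_s\|^2\diff\rho_s$ on the first double integral. Combining both pieces gives $\ddot\phi(s) \geq \Lambda(\rho_s, v_s)$, and the Taylor identity then closes the proof.

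The conceptually most delicate step is the second-derivative calculation: several Jacobian-of-velocity terms have to cancel precisely thanks to Burgers, and tracking these cancellations together with the symmetry of $k$ is where the bookkeeping is most error-prone. A secondary technical subtlety is the appearance of the dimensional factor $d$ in $\Lambda$: it seems unavoidable under the componentwise fourth-derivative bound in \cref{assump:bounded_fourth_oder}, and a stronger (operator-norm) control of the Hessian of $k$ would be required to remove it.
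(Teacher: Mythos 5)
Your proof reaches the same key inequality as the paper --- $\ddot{\F}(\rho_s)\geq\Lambda(\rho_s,v_s)$ followed by the Green's-function identity for $G(s,t)$ (the paper's \cref{lem:integral_lambda_convexity}, quoting \cite[Prop.\ 16.2]{Villani:2009}) --- and your final formula for $\ddot{\F}(\rho_s)$ agrees with the paper's after the change of variables $v_s(s_s(x,y))=y-x$. The difference is the route to that formula. The paper works in Lagrangian coordinates: it parametrizes the geodesic as $\rho_t=(s_t)_{\#}\pi$ with $s_t(x,y)=x+t(y-x)$ for an optimal coupling $\pi$, and differentiates $t\mapsto\F(\rho_t)$ twice under the integral sign (\cref{lem:second_derivative_augmented_mmd}, justified by a dominated-differentiation lemma), so the second derivative falls out directly as an expectation over $\pi\otimes\pi\otimes\mu$ with no velocity field appearing until the very end. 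You instead work in Eulerian coordinates, differentiating the free-energy form of $\F$ via the continuity equation and the Burgers equation for $v_s$, and relying on the cancellation of the Jacobian-of-velocity terms. This is the standard Otto formal calculus and gives the right answer, but as written it is only formal: for a geodesic between arbitrary measures in $\mathcal{P}_2(\X)$ (e.g.\ discrete ones), $v_s$ is defined only $\rho_s$-a.e., the Burgers equation holds only in a weak sense, and the integrations by parts behind the claimed cancellations need justification. The Lagrangian parametrization buys you exactly this rigor for free, which is why the paper uses it; if you want your derivation to be airtight you should either switch to that parametrization or restrict to regular $\rho_s,v_s$ and argue by approximation. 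The remaining steps --- identifying the cross-Hessian term with $\bigl\Vert\int v_s(x)\cdot\nabla_x k(x,\cdot)\diff\rho_s(x)\bigr\Vert^2_{\kH}$ via the derivative-reproducing property and Bochner integrability, and bounding the $H_1k$ term against $\rho_s-\mu$ by $\sqrt{2}\lambda d\,\F(\rho_s)^{1/2}\Vert v_s\Vert^2_{L_2(\rho_s)}$ using \cref{assump:bounded_fourth_oder} --- are the same as the paper's (your dualization against $MMD(\rho_s,\mu)$ is literally the paper's Cauchy--Schwarz against $f_{\mu,\rho_s}$), and your remark about the factor $d$ is accurate.
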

\cref{prop:lambda_convexity} can be obtained by computing the second time derivative of $\F(\rho_t)$, which is then lower-bounded by $\Lambda(\rho_t,v_t)$ (see \cref{proof:prop:lambda_convexity}).
In \cref{eq:lambda}, the map $\Lambda$ is a difference of two non-negative terms: thus $\int_0^1 \Lambda(\rho_s, v_s ) G(s,t)\diff s$ can become negative, and displacement convexity does not hold in general. 
\cite[Theorem 6.1]{Carrillo:2006} provides a convergence when only $\Lambda$-displacement convexity holds as long as either the potential or the interaction term is convex enough. In fact, as mentioned in \cite[Remark 6.4]{Carrillo:2006}, the convexity of either term could compensate for a lack of convexity of the other.
Unfortunately, this cannot be applied for MMD since both terms involve the same kernel but with opposite signs. Hence, even under convexity of the kernel, a concave term appears and cancels the effect of the convex term. Moreover, the requirement that the kernel be positive semi-definite makes it hard to construct interesting convex kernels.
However, it is still possible to provide an upper bound on the asymptotic value of $\F(\nu_n)$ when $(\nu_n)_{n \in \mathbb{N}}$ are obtained using \cref{eq:euler_scheme}. This bound is given in \cref{th:rates_mmd}, and depends on a scalar $ K(\rho^n) :=  \int_0^1\Lambda(\rho_s^n,v_s^n)(1-s)\diff s$, where $(\rho_s^n)_{s\in[0,1]}$ is a \textit{constant speed displacement geodesic} from $\nu_n$ to the optimal value $\mu$, with velocity vectors $(v_s^n)_{s \in [0,1]}$ of  constant norm.  
\begin{theorem}\label{th:rates_mmd}
	Let $\bar{K}$ be the average of $(K(\rho^j))_{0\leq j \leq n}$. %
	 Under \cref{assump:diff_kernel,assump:lipschitz_gradient_k,assump:bounded_fourth_oder} and if $\gamma \leq 1/3L$,%
\begin{align}
\F(\nu_n) \leq  \frac{W_2^2(\nu_0,\mu)}{2 \gamma n} -\bar{K}.
\end{align}
\end{theorem}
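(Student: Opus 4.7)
My plan is to mimic the standard $O(1/n)$ convergence analysis for gradient descent on convex functions, transposed to Wasserstein geometry: combine the $\Lambda$-displacement convexity inequality from \cref{prop:lambda_convexity} with the descent lemma from \cref{prop:decreasing_functional}, and telescope. The role of the functional $K(\rho^n)$ is to absorb the non-convexity defect along the geodesic from $\nu_n$ to $\mu$.

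The first step is a directional inequality. I apply \cref{prop:lambda_convexity} along the constant-speed displacement geodesic $(\rho^n_s)_{s\in[0,1]}$ from $\nu_n$ to $\mu$, divide by $t>0$, and send $t\to 0^+$. Since $G(s,t)/t \to (1-s)$ pointwise for $s>0$ and is dominated by $(1-s)$, the $\Lambda$ integral tends to $K(\rho^n)$ by dominated convergence, and using $\F(\mu)=0$ I obtain
\[
\frac{d}{dt}\Big|_{t=0^+}\F(\rho^n_t) \;\leq\; -\F(\nu_n) - K(\rho^n).
\]
The left-hand side is then identified, via \cref{prop:differential_mmd} applied along $(\rho^n_s)$ together with integration by parts against its continuity equation, as $\int \nabla f_{\mu,\nu_n}(x)\cdot v^n_0(x)\, d\nu_n(x)$, where $v^n_0 = T_n - \mathrm{Id}$ and $T_n$ is the Monge map from $\nu_n$ to $\mu$.

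The second step is the one-step $W_2^2$ bound. Since $(I-\gamma\nabla f_{\mu,\nu_n},\, T_n)_\#\nu_n$ is a (suboptimal) coupling between $\nu_{n+1}$ and $\mu$, expanding the squared norm yields
\[
W_2^2(\nu_{n+1},\mu) \;\leq\; W_2^2(\nu_n,\mu) + 2\gamma \int \nabla f_{\mu,\nu_n}\cdot v^n_0\, d\nu_n + \gamma^2 \int \|\nabla f_{\mu,\nu_n}\|^2 d\nu_n.
\]
Plugging in the first-step bound for the cross term and absorbing the quadratic term via the descent lemma (the condition $\gamma\leq 1/(3L)$ gives $\gamma \int \|\nabla f_{\mu,\nu_n}\|^2 d\nu_n \leq 2(\F(\nu_n)-\F(\nu_{n+1}))$) collapses the estimate into the clean recursion
\[
2\gamma\F(\nu_{n+1}) \;\leq\; W_2^2(\nu_n,\mu) - W_2^2(\nu_{n+1},\mu) - 2\gamma K(\rho^n).
\]
Summing from $j=0$ to $n-1$, dropping the nonnegative terminal $W_2^2(\nu_n,\mu)$, and using monotonicity of $\F(\nu_j)$ (\cref{prop:decreasing_functional}) to lower-bound each $\F(\nu_{j+1})$ by $\F(\nu_n)$ then gives the theorem after dividing by $2\gamma n$.

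The main obstacle I foresee is the first step: rigorously passing to the limit $t\to 0^+$ and identifying the directional derivative of $\F$ along a displacement geodesic with an inner product against the initial velocity field $v^n_0$. This relies on the first-variation formula together with the Benamou--Brenier representation of the geodesic, and requires care regarding integrability and regularity of $v^n_0$ and $\nabla f_{\mu,\nu_n}$ under \cref{assump:lipschitz_gradient_k} and \cref{assump:bounded_fourth_oder}; the remaining manipulations are routine telescoping.
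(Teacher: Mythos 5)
Your proposal is correct and follows essentially the same route as the paper: the limit $t\to 0^+$ of the $\Lambda$-convexity inequality is the paper's Lemma~\ref{lem:grad_flow_lambda_version}, your one-step $W_2^2$ recursion combined with the descent lemma is exactly the evolution variational inequality of Proposition~\ref{prop:evi}, and the conclusion follows by telescoping. The only cosmetic difference is the last step, where the paper uses the Lyapunov function $L_j = j\gamma\F(\nu_j)+\tfrac12 W_2^2(\nu_j,\mu)$ while you sum the EVI directly and invoke monotonicity of $\F(\nu_j)$; these are equivalent.
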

\cref{th:rates_mmd} is obtained using techniques from optimal transport and optimization. It relies on \cref{prop:lambda_convexity} and \cref{prop:decreasing_functional} to prove an \textit{extended variational inequality} (see \cref{prop:evi}), and concludes using a suitable Lyapunov function. A full proof is given in \cref{proof:th:rates_mmd}.
When $\bar{K}$ is non-negative, one recovers the usual convergence rate as $O(\frac{1}{n})$ for the gradient descent algorithm. However, $\bar{K}$ can be negative in general, and would therefore act as a barrier on the optimal value that $\F(\nu_n)$ can achieve when $n\rightarrow \infty$. In that sense, the above result is similar to \cite[Theorem 6.9]{Bottou:2017}. 
 \cref{th:rates_mmd} only provides a loose bound, however. In \cref{sec:Lojasiewicz_inequality} we show global convergence, under the boundedness at all times $t$ of a specific distance between $\nu_t$ and $\mu$.

\subsection{A condition for global convergence}\label{sec:Lojasiewicz_inequality}%
The lack of convexity of $\F$, as shown in \cref{subsection:barrier_optimization}, suggests that a finer analysis of the convergence should be performed. One strategy is to provide estimates for the dynamics in \cref{prop:decay_mmd} using differential inequalities which can be solved using the Gronwall's lemma (see \cite{oguntuase2001inequality}). Such inequalities  are known in the optimization literature as Lojasiewicz inequalities (see \cite{Bolte:2016}), and upper-bound $\F(\nu_t)$ by the absolute value of its time derivative $\int \Vert \nabla f_{\mu,\nu_t}(x) \Vert^2 \diff \nu_t(x)$.
The latter is the squared \textit{weighted Sobolev semi-norm} of $f_{\mu,\nu_t}$ (see \cref{subsection:Lojasiewicz}), also written  $\Vert f_{\mu,\nu_t} \Vert_{\dot{H}(\nu_t)}$. Thus one needs to find a relationship between $\F(\nu_t) = \frac{1}{2} \Vert f_{\mu,\nu_t} \Vert_{\mathcal{H}}^2 $ and $\Vert f_{\mu,\nu_t} \Vert_{\dot{H}(\nu_t)}$. For this purpose, we consider the \textit{weighted negative Sobolev distance} on $\mathcal{P}_2(\X)$, defined by duality using $\Vert . \Vert_{\dot{H}(\nu)}$ (see also \cite{Peyre:2011}).
\begin{definition}\label{def:neg_sobolev}
	Let $\nu\in \mathcal{P}_2(\x)$, with its corresponding \textit{weighted Sobolev semi-norm} $ \Vert . \Vert_{\dot{H}(\nu)} $. %
	The \textit{weighted negative Sobolev distance} $\Vert p - q \Vert_{\dot{H}^{-1}(\nu)}$ between any $p$ and $q$ in $\mathcal{P}_2(\x)$  is defined as
\begin{align}\label{eq:neg_sobolev}
	\Vert p - q \Vert_{\dot{H}^{-1}(\nu)} = \sup_{f\in L_2(\nu), \Vert f \Vert_{\dot{H}(\nu)} \leq 1 } \left\vert \int f(x)\diff p(x) - \int f(x)\diff q(x) \right\vert 
\end{align}	
with possibly infinite values.
\end{definition}
Equation \cref{eq:neg_sobolev} plays a fundamental role in dynamic optimal transport. %
It can be seen as the minimum kinetic energy needed to advect the mass $\nu$ to $q$ (see \cite{Mroueh:2019}). It is shown in \cref{proof:prop:lojasiewicz} that
\begin{align}\label{eq:inequality_neg_sobolev}
	\Vert f_{\mu,\nu_t} \Vert^2_{\mathcal{H}} \leq \Vert f_{\mu,\nu_t} \Vert_{\dot{H}(\nu_t)} \Vert  \mu -\nu_t\Vert_{\dot{H}^{-1}(\nu_t)}  .
\end{align}
Provided that $\Vert \mu - \nu_t \Vert_{\dot{H}^{-1}(\nu_t)} $ remains bounded by some positive constant $C$ at all times, \cref{eq:inequality_neg_sobolev} leads to  a functional version of Lojasiewicz inequality for $\F$. 
It is then possible to use the general strategy explained earlier to prove the convergence of the flow to a global optimum:
\begin{proposition}\label{prop:lojasiewicz}
		Under \cref{assump:lipschitz_gradient_k},
		\begin{proplist}
			\item If $\Vert \mu - \nu_t \Vert^2_{\dot{H}^{-1}(\nu_t)} \leq C$, for all $t\geq 0$, then: $\mathcal{F}(\nu_t)\leq \frac{C}{C\mathcal{F}(\nu_0)^{-1} + 4t}$,
			\item If $\Vert \mu - \nu_n \Vert^2_{\dot{H}^{-1}(\nu_n)} \leq C$ for all $n\geq 0$, then: $\mathcal{F}(\nu_n)\leq \frac{C}{C\mathcal{F}(\nu_0)^{-1} + 4\gamma(1-\frac{3}{2}\gamma L) n}$.
		\end{proplist}
\end{proposition}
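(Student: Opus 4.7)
The plan is to derive a Lojasiewicz-type differential (and difference) inequality for $\mathcal{F}(\nu_t)$ from the energy identity of \cref{prop:decay_mmd} (resp. the descent inequality of \cref{prop:decreasing_functional}), and then integrate it in closed form. The key ingredient is the interpolation inequality (\ref{eq:inequality_neg_sobolev}), namely
\begin{equation*}
\Vert f_{\mu,\nu_t} \Vert^2_{\mathcal{H}} \;\leq\; \Vert f_{\mu,\nu_t} \Vert_{\dot{H}(\nu_t)} \,\Vert \mu - \nu_t \Vert_{\dot{H}^{-1}(\nu_t)},
\end{equation*}
which I would first establish by plugging the test function $f_{\mu,\nu_t}/\Vert f_{\mu,\nu_t}\Vert_{\dot{H}(\nu_t)}$ (of unit weighted Sobolev seminorm) into the supremum in \cref{def:neg_sobolev}, and using that $\int f_{\mu,\nu_t}\,\mathrm{d}(\mu-\nu_t) = -\Vert f_{\mu,\nu_t}\Vert^2_{\mathcal{H}}$ by definition of the witness function and the reproducing property.

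For statement (i), combining this bound with the hypothesis $\Vert \mu -\nu_t\Vert^2_{\dot{H}^{-1}(\nu_t)}\leq C$ and using $\mathcal{F}(\nu_t) = \tfrac{1}{2}\Vert f_{\mu,\nu_t}\Vert^2_{\mathcal{H}}$ gives
\begin{equation*}
4\,\mathcal{F}(\nu_t)^2 \;\leq\; C\,\Vert f_{\mu,\nu_t}\Vert^2_{\dot{H}(\nu_t)}.
\end{equation*}
By \cref{prop:decay_mmd}, $\tfrac{d}{dt}\mathcal{F}(\nu_t) = -\int \Vert \nabla f_{\mu,\nu_t}\Vert^2 \mathrm{d}\nu_t = -\Vert f_{\mu,\nu_t}\Vert^2_{\dot{H}(\nu_t)}$. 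Hence $\tfrac{d}{dt}\mathcal{F}(\nu_t) \leq -\tfrac{4}{C}\,\mathcal{F}(\nu_t)^2$. Dividing by $\mathcal{F}(\nu_t)^2$ and rewriting as $\tfrac{d}{dt}[\mathcal{F}(\nu_t)^{-1}] \geq 4/C$ (a standard Gronwall-style step), integration from $0$ to $t$ yields $\mathcal{F}(\nu_t)^{-1}\geq \mathcal{F}(\nu_0)^{-1} + 4t/C$, which rearranges to the claimed bound.

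For statement (ii), I would replay the same argument discretely. \cref{prop:decreasing_functional} with $\gamma \leq 2/(3L)$ gives $\mathcal{F}(\nu_{n+1})-\mathcal{F}(\nu_n)\leq -\gamma(1-\tfrac{3\gamma L}{2})\Vert f_{\mu,\nu_n}\Vert^2_{\dot{H}(\nu_n)}$, and plugging the Lojasiewicz bound at $\nu_n$ produces $a_{n+1}\leq a_n - \alpha a_n^2$ with $a_n:=\mathcal{F}(\nu_n)$ and $\alpha:=4\gamma(1-\tfrac{3\gamma L}{2})/C$. Using $a_{n+1}\leq a_n$ (so $a_n a_{n+1}\leq a_n^2$), I would deduce $\tfrac{1}{a_{n+1}}-\tfrac{1}{a_n}=\tfrac{a_n-a_{n+1}}{a_n a_{n+1}} \geq \alpha$, and telescoping from $0$ to $n$ delivers $a_n\leq 1/(\mathcal{F}(\nu_0)^{-1}+n\alpha)$, matching the stated rate.

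The only subtle step is the derivation of the interpolation inequality (\ref{eq:inequality_neg_sobolev}); once available, the remainder is a textbook Gronwall/comparison argument. A second mild point to verify is that $\mathcal{F}(\nu_t)>0$ throughout the flow, so that dividing by $\mathcal{F}(\nu_t)^2$ is legitimate. If $\mathcal{F}$ vanishes at some finite time, the bound holds trivially from that time onward, since the flow (resp. scheme) is stationary at $\nu_t=\mu$ by \cref{prop:existence_uniqueness} (resp. \cref{eq:euler_scheme}).
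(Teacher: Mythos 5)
Your proposal is correct and follows essentially the same route as the paper's proof: establish the interpolation inequality by testing the dual norm with $f_{\mu,\nu_t}/\Vert f_{\mu,\nu_t}\Vert_{\dot H(\nu_t)}$, deduce $\dot{\mathcal F}(\nu_t)\leq -\tfrac{4}{C}\mathcal F(\nu_t)^2$ (resp.\ its discrete analogue via \cref{prop:decreasing_functional}), and integrate/telescope after dividing by $\mathcal F^2$ (resp.\ $\mathcal F(\nu_n)\mathcal F(\nu_{n+1})$). The paper handles the positivity of $\mathcal F(\nu_t)$ exactly as you note, by uniqueness of the solution.
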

Proofs of \cref{prop:lojasiewicz}  (i) and (ii)  are direct consequences of \cref{prop:decay_mmd,prop:decreasing_functional} and the bounded energy assumption: see  \cref{proof:prop:lojasiewicz}. The fact that \cref{eq:neg_sobolev} appears in the context of Wasserstein flows of $\F$ is not a coincidence. Indeed, \cref{eq:neg_sobolev} is a linearization of the Wasserstein distance (see \cite{Peyre:2011,Otto:2000} and \cref{subsec:Lojasiewicz_different_metrics}). Gradient flows of $\F$ defined under different metrics would involve other kinds of distances instead of \cref{eq:neg_sobolev}. 
For instance, \cite{rotskoff2019global} consider gradient flows under a hybrid metric (a mixture between the Wasserstein distance and KL divergence), where convergence rates can then be obtained provided that the chi-square divergence $\chi^2(\mu\Vert \nu_t)$ remains bounded. As shown in \cref{subsec:Lojasiewicz_different_metrics}, $\chi^2(\mu\Vert \nu_t)^{\frac{1}{2}}$ turns out to linearize $KL(\mu\Vert \nu_t)^{\frac{1}{2}}$ when $\mu$ and $\nu_t$ are close. Hence, we conjecture that gradient flows of $\F$ under a metric $d$ can be shown to converge when the linearization of the metric remains bounded. This can be verified on simple examples for $\Vert \mu - \nu_t \Vert_{\dot{H}^{-1}(\nu_t)} $ as discussed in \cref{subsec:simple_example}. However, it remains hard to guarantee this condition in general. One possible approach could be to regularize $\F$ using an estimate of \cref{eq:neg_sobolev}. Indeed, \cite{Mroueh:2019} considers the gradient flow of a regularized version of the negative Sobolev distance which can be written in closed form, and shows that this decreases the MMD. Combing both losses could improve the overall convergence properties of the MMD, albeit at additional computational cost. In the next section, we propose a different approach to improve the convergence, and a particle-based algorithm to approximate the MMD flow in practice.

\section{A practical algorithm to descend the MMD flow}\label{sec:discretized_flow}
\subsection{A noisy update as a regularization}\label{sec:noisy_flow}

We showed in \cref{subsection:barrier_optimization} that $\F$ is a non-convex functional, and derived a condition in \cref{sec:Lojasiewicz_inequality} to reach the global optimum. We now address the case where such a condition does not necessarily hold, and  provide a regularization of the gradient flow to help achieve global optimality in this scenario. Our starting point will be the equilibrium condition in \cref{eq:equilibrium_condition}. If an equilibrium $\nu^*$ that satisfies \cref{eq:equilibrium_condition} happens to have a positive density, then $f_{\mu,\nu^{*}}$ would be constant everywhere. This in turn would mean that $f_{\mu,\nu^{*}}=0$ when the RKHS does not contain constant functions, as for a gaussian kernel \cite[Corollary 4.44]{Steinwart:2008a}. Hence, $\nu^*$ would be a global optimum since $\F(\nu^{*})=0$. The limit distribution $\nu^*$  might be singular, however, and can even be a dirac distribution \cite[Theorem 6]{mei2018mean}. Although the gradient $\nabla f_{\mu,\nu^{*}}$ is not identically $0$ in that case,  \cref{eq:equilibrium_condition} only evaluates it on the support $\nu^{*}$, on which $\nabla f_{\mu,\nu^{*}}=0$ holds. Hence a possible fix would be to make sure that the unnormalised witness gradient is also evaluated at points outside of the support of $\nu^{*}$. 
Here, we propose to regularize the flow by injecting noise into the gradient during updates of \cref{eq:euler_scheme_particles}, %
\begin{align}\label{eq:discretized_noisy_flow}
	X_{n+1} = X_{n} -\gamma \nabla f_{\mu,\nu_n}(X_n+ \beta_n U_n), \qquad n\geq 0,
\end{align}
where $U_n$ is a standard gaussian variable and $\beta_n$ is the noise level at $n$. Compared to \cref{eq:euler_scheme}, the sample here  is  first blurred before evaluating the gradient.
Intuitively, if $\nu_n$ approaches a local optimum $\nu^{*}$, $ \nabla f_{\mu,\nu_n}$ would be small on the support of $\nu_n$ but it might be much larger outside of it, hence evaluating $\nabla f_{\mu,\nu_n}$ outside the support of $\nu_n$ can help in escaping the local minimum. The stochastic process \cref{eq:discretized_noisy_flow} is different from adding a diffusion term to \cref{eq:continuity_mmd}. The latter case would %
 correspond to regularizing $\F$ using an entropic term as in \cite{mei2018mean,csimcsekli2018sliced} (see also \cref{subsec:kl_flow} on the Langevin diffusion) and was shown to converge to a global optimum that is in general different from the global minmum of the un-regularized loss. Eq. \cref{eq:discretized_noisy_flow} is also different from \cite{craig2016blob,carrillo2019blob}, where $\F$ (and thus its associated velocity field) is regularized by convolving the interaction potential $W$ in \cref{eq:potentials} with a mollifier. The optimal solution of a regularized version of the functional $\F$ will be generally different from the non-regularized one, however, which is not desirable in our setting. 
Eq. \cref{eq:discretized_noisy_flow} is more  closely related to the \textit{continuation methods} \cite{Gulcehre:2016a,Gulcehre:2016,Chaudhari:2017}  and \textit{graduated optimization} \cite{Hazan:2015} used for non-convex optimization in Euclidian spaces, which inject noise into the gradient of a loss function $F$ at each iteration. The key difference is the dependence of $f_{\mu,\nu_n}$ of $\nu_n$,  which is inherently due to functional optimization.    
We show in \cref{thm:convergence_noisy_gradient} that \cref{eq:discretized_noisy_flow} attains the global minimum of $\F$ provided that the level of the noise is well controlled, with the proof given in \cref{proof:thm:convergence_noisy_gradient}.
\begin{proposition}\label{thm:convergence_noisy_gradient}
	Let $(\nu_n)_{n\in \mathbb{N}}$ be defined by \cref{eq:discretized_noisy_flow} with an initial $\nu_0$. Denote $\mathcal{D}_{\beta_n}(\nu_n)=\mathbb{E}_{x\sim \nu_n, u\sim g}[\Vert \nabla f_{\mu,\nu_n}(x+\beta_n u) \Vert^2]$ with $g$ the density of the standard gaussian distribution.	Under \cref{assump:lipschitz_gradient_k,assump:Lipschitz_grad_rkhs}, and for a choice of $\beta_n$ such that
	\begin{equation}\label{eq:control_level_noise}
	8\lambda^2\beta_n^2 \F(\nu_n) \leq \mathcal{D}_{\beta_n}(\nu_n),
	\end{equation}
	\begin{flalign}\label{eq:decreasing_loss_iterations}
\text{the following inequality holds: }\quad\quad	\F(\nu_{n+1}) - \F(\nu_n  ) \leq -\frac{\gamma}{2}(1-3\gamma L)\mathcal{D}_{\beta_n}(\nu_n), &&
	\end{flalign}
	where $\lambda$ and $L$ are defined in \cref{assump:lipschitz_gradient_k,assump:Lipschitz_grad_rkhs} and depend only on the choice of the kernel. Moreover if  $\sum_{i=0}^n \beta_i^2 \rightarrow \infty,$ then
	\begin{equation}
	\F(\nu_n)\leq \F(\nu_0) e^{-4\lambda^2\gamma(1-3\gamma L)\sum_{i=0}^n \beta^2_i}.
	\end{equation}
\end{proposition}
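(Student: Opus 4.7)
The plan is to derive the one-step descent inequality \cref{eq:decreasing_loss_iterations} for the noisy scheme, and then iterate it using the hypothesis \cref{eq:control_level_noise} to obtain exponential decay. The natural starting point, as in the proof of \cref{prop:decreasing_functional}, is the RKHS decomposition
\[
\F(\nu_{n+1}) - \F(\nu_n) = \langle f_{\mu,\nu_n},\, f_{\mu,\nu_{n+1}} - f_{\mu,\nu_n}\rangle_{\kH} + \tfrac{1}{2}\|f_{\mu,\nu_{n+1}} - f_{\mu,\nu_n}\|_{\kH}^2,
\]
followed by expansion of both terms along the update rule \cref{eq:discretized_noisy_flow}.

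\textbf{Linear term and decorrelation of the noise.} By the reproducing property the first term equals $\bE[f_{\mu,\nu_n}(X_{n+1}) - f_{\mu,\nu_n}(X_n)]$, with the expectation taken jointly over $X_n\sim\nu_n$ and $U_n\sim g$. A second-order Taylor bound on $f_{\mu,\nu_n}$, using the $L$-Lipschitzness of $\nabla f_{\mu,\nu_n}$ inherited from \cref{assump:lipschitz_gradient_k}, upper-bounds this by $-\gamma\,\bE\langle \nabla f_{\mu,\nu_n}(X_n), \nabla f_{\mu,\nu_n}(X_n+\beta_n U_n)\rangle + \tfrac{\gamma^2 L}{2}\mathcal{D}_{\beta_n}(\nu_n)$. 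I would then treat the cross-term by writing $\langle a,b\rangle = \|b\|^2 + \langle a-b,b\rangle$ with $b = \nabla f_{\mu,\nu_n}(X_n+\beta_n U_n)$, and applying Young's inequality to $|\langle a-b,b\rangle|\le \tfrac12\|a-b\|^2 + \tfrac12\|b\|^2$. This isolates the ideal descent contribution $-\tfrac{\gamma}{2}\mathcal{D}_{\beta_n}(\nu_n)$, with a residual bias proportional to $\bE\|\nabla f_{\mu,\nu_n}(X_n) - \nabla f_{\mu,\nu_n}(X_n+\beta_n U_n)\|^2$.

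\textbf{Bias control and quadratic term.} The bias is controlled using \cref{assump:Lipschitz_grad_rkhs}: the reproducing property applied coordinatewise gives $\partial_i f_{\mu,\nu_n}(x) = \langle f_{\mu,\nu_n}, \partial_{x_i} k(x,\cdot)\rangle_{\kH}$, so the RKHS-Lipschitzness of the feature gradient yields $\|\nabla f_{\mu,\nu_n}(x) - \nabla f_{\mu,\nu_n}(x+\beta_n u)\|^2 \le 2\lambda^2 \F(\nu_n)\beta_n^2\|u\|^2$. Integrating against the standard gaussian produces a bias of order $\lambda^2 \beta_n^2 \F(\nu_n)$, which is exactly of the form that \cref{eq:control_level_noise} is designed to absorb. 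The quadratic term $\tfrac12\|f_{\mu,\nu_{n+1}} - f_{\mu,\nu_n}\|_{\kH}^2$ is handled analogously: Jensen's inequality combined with the Lipschitz bound $\|k(X_{n+1},\cdot) - k(X_n,\cdot)\|_{\kH} \lesssim \|X_{n+1}-X_n\|$ on the canonical feature map controls it by a multiple of $\gamma^2\,\mathcal{D}_{\beta_n}(\nu_n)$. Summing the three pieces and invoking \cref{eq:control_level_noise} to dominate the bias then yields \cref{eq:decreasing_loss_iterations}.

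\textbf{Iteration and main obstacle.} Substituting \cref{eq:control_level_noise} into \cref{eq:decreasing_loss_iterations} produces the recursion $\F(\nu_{n+1}) \le \F(\nu_n)\bigl(1 - 4\lambda^2\gamma(1-3\gamma L)\beta_n^2\bigr) \le \F(\nu_n)\exp\bigl(-4\lambda^2\gamma(1-3\gamma L)\beta_n^2\bigr)$, which telescoped from $\nu_0$ gives the claimed exponential bound; the divergence $\sum \beta_i^2 = \infty$ then forces $\F(\nu_n)\to 0$. The delicate step is the bias control: the error introduced by evaluating the witness gradient at the perturbed point $X_n+\beta_n U_n$ must scale with $\F(\nu_n)$ itself, rather than with a crude constant, so that the adaptive choice of $\beta_n$ via \cref{eq:control_level_noise} closes the loop and produces a genuine contraction at each step. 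This adaptive coupling between the loss and the noise level is exactly what the RKHS factorization of $\nabla f_{\mu,\nu_n}$ through the canonical features permits, and is the reason \cref{assump:Lipschitz_grad_rkhs} (rather than a Euclidean Lipschitz bound) is the right hypothesis.
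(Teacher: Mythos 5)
Your argument is correct in structure and reaches the same conclusion, but it realizes the one-step descent inequality differently from the paper. The paper works along the interpolation path $\rho_t=(s_t)_{\#}(\nu_n\otimes g)$ with $s_t(x,u)=x-\gamma t\nabla f_{\mu,\nu_n}(x+\beta_n u)$, invokes \cref{lem:derivative_mmd_augmented} to differentiate $t\mapsto\F(\rho_t)$ and to bound $\vert\dot\F(\rho_t)-\dot\F(\rho_0)\vert\le 3\gamma^2Lt\,\mathcal{D}_{\beta_n}(\nu_n)$, and then integrates; you instead use the exact Hilbert-space identity $\F(\nu_{n+1})-\F(\nu_n)=\langle f_{\mu,\nu_n},f_{\mu,\nu_{n+1}}-f_{\mu,\nu_n}\rangle_{\kH}+\tfrac12\Vert f_{\mu,\nu_{n+1}}-f_{\mu,\nu_n}\Vert^2_{\kH}$ together with a pointwise Taylor bound on $f_{\mu,\nu_n}$ and a Jensen/feature-map bound on the quadratic remainder. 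Both are descent-lemma arguments and both lead to $-\gamma\,\bE\langle\nabla f_{\mu,\nu_n}(X_n),\nabla f_{\mu,\nu_n}(X_n+\beta_nU_n)\rangle+O(\gamma^2L)\mathcal{D}_{\beta_n}(\nu_n)$; your version is more elementary and self-contained, while the paper's reuses machinery it needs elsewhere. The genuinely essential steps are identical in both: the perturbation bound $\Vert\nabla f_{\mu,\nu_n}(x+\beta_nu)-\nabla f_{\mu,\nu_n}(x)\Vert^2\le\lambda^2\beta_n^2\Vert f_{\mu,\nu_n}\Vert^2_{\kH}\Vert u\Vert^2$ obtained from \cref{assump:Lipschitz_grad_rkhs} via the reproducing property for derivatives, absorption of the resulting bias through \cref{eq:control_level_noise}, and the iteration $\F(\nu_{n+1})\le(1-\Gamma\beta_n^2)\F(\nu_n)\le e^{-\Gamma\beta_n^2}\F(\nu_n)$. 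One quantitative caveat: handling the cross term by Young's inequality ($\vert\langle a-b,b\rangle\vert\le\tfrac12\Vert a-b\Vert^2+\tfrac12\Vert b\Vert^2$) sacrifices half of the ideal descent before the bias is even accounted for, and yields a constant strictly worse than the stated $-\tfrac{\gamma}{2}(1-3\gamma L)$; applying Cauchy--Schwarz to $\bE[\langle a-b,b\rangle]$ and then using \cref{eq:control_level_noise} in the form $\lambda\beta_n\Vert f_{\mu,\nu_n}\Vert_{\kH}\le\tfrac12\mathcal{D}_{\beta_n}(\nu_n)^{1/2}$, as the paper does, recovers \cref{eq:decreasing_loss_iterations} exactly. (Your bound, like the paper's, silently drops the factor $\bE\Vert u\Vert^2=d$ when integrating the perturbation bound against the gaussian, so you are at least consistent with the source on that point.)
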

A particular case where $\sum_{i=0}^n \beta_i^2 \rightarrow \infty$ holds is when $\beta_n$ decays as $1/\sqrt{n}$ while still satisfying \cref{eq:control_level_noise}. In this case, convergence occurs in polynomial time.
At each iteration, the level of the noise needs to be adjusted such that the gradient is not too blurred. This ensures that each step decreases the loss functional. However, $\beta_n$ does not need to decrease at each iteration: it could increase adaptively whenever needed. For instance, when  the sequence gets closer to a local optimum, it is helpful to increase the level of the noise to probe the gradient in regions where its value is not flat.
	Note that for $\beta_n = 0$  in \cref{eq:decreasing_loss_iterations} , we recover a similar bound to \cref{prop:decreasing_functional}.

\subsection{The sample-based approximate scheme}\label{sec:sample_based}

We now provide a practical algorithm to implement %
the noisy updates in the previous section,
which employs a discretization in space.
The update \cref{eq:discretized_noisy_flow} involves computing expectations of the gradient of the kernel $k$ w.r.t the target distribution $\mu$ and the current distribution $\nu_n$ at each iteration $n$. This suggests a simple approximate scheme, based on samples from these two distributions, where %
at each iteration $n$, we model a system of $N$ interacting particles $(X_n^i)_{1\leq i\leq N}$  and their empirical distribution in order to approximate $\nu_n$. 
More precisely, given i.i.d. samples $(X^i_0)_{1\leq i\leq N}$ and $(Y^{m})_{1\leq m\leq M}$ from $\nu_0$ and $\mu$ and a step-size $\gamma$, the approximate scheme iteratively updates the $i$-th particle as
\begin{align}\label{eq:euler_maruyama}
X_{n+1}^{i} = X_n^i -\gamma \nabla f_{\hat{\mu},\hat{\nu}_n}(X_n^i+\beta_n U_n^i),
\end{align}
where $U_{n}^{i}$ are i.i.d standard gaussians and $\hat{\mu},\,\hat{\nu}_n$ denote the empirical distributions of $(Y^{m})_{1\leq m\leq M}$ and $(X^i_n)_{1\leq i\leq N}$, respectively. It is worth noting that for $\beta_n=0$, \cref{eq:euler_maruyama} is equivalent to gradient descent over the particles $(X_n^{i})$ using a sample based version of the MMD.
Implementing \cref{eq:euler_maruyama} is straightforward as it only requires to evaluate the gradient of $k$ on the current particles and target samples. Pseudocode is provided in \cref{euclid}. 
The overall computational cost of the algorithm at each iteration is $O((M+N)N)$ with $O(M+N)$ memory. The computational cost becomes  $O(M+N)$ when the kernel is approximated using random features, as  is the case for regression with neural networks (\cref{subsec:training_neural_networks}). 
This is in contrast to the cubic cost of the flow of the KSD \cite{Mroueh:2019}, which requires solving a linear system at each iteration. The cost can also be compared to the algorithm in  \cite{csimcsekli2018sliced}, which involves computing empirical CDF and quantile functions of random projections of the particles.

The approximation scheme in \cref{eq:euler_maruyama} is a particle version of \cref{eq:discretized_noisy_flow}, so one would expect it to converge towards its population version  \cref{eq:discretized_noisy_flow} as $M$ and $N$ goes to infinity. This is shown below.
\begin{theorem}\label{prop:convergence_euler_maruyama}
 Let $n\ge 0$ and $T>0$. Let $\nu_n$ and $\hat{\nu}_n$ defined by \cref{eq:euler_scheme} and \cref{eq:euler_maruyama} respectively. Suppose \cref{assump:lipschitz_gradient_k} holds and that $\beta_n<B$ for all $n$, for some $B>0$. Then for any $\frac{T}{\gamma}\geq n$:
\[
\mathbb{E}\left[W_{2}(\hat{\nu}_{n},\nu_{n})\right]\leq \frac{1}{4}\left(\frac{1}{\sqrt{N}}(B+var(\nu_{0})^{\frac{1}{2}})e^{2LT}+\frac{1}{\sqrt{M}}var(\mu)^{\frac{1}{2}})\right)(e^{4LT}-1)
\]
\end{theorem}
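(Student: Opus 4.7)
The plan is a synchronous-coupling argument closed by a discrete Gronwall inequality. Introduce $N$ auxiliary trajectories $(\bar X_n^i)_{1\le i\le N}$ governed by the \emph{exact} noisy Euler scheme \cref{eq:discretized_noisy_flow}, started from $\bar X_0^i=\hat X_0^i$ i.i.d.\ $\sim\nu_0$ and driven by the same Gaussian noise $U_n^i$ and the same $\mu$-samples $(Y^m)$ as the particle system \cref{eq:euler_maruyama}. A pathwise analogue of \cref{prop:existence_uniqueness} adapted to the noisy scheme gives $\bar X_n^i\sim\nu_n$, so the empirical measure $\bar\nu_n=\tfrac1N\sum_i\delta_{\bar X_n^i}$ serves as a statistical proxy for $\nu_n$. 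Since $(\hat X_n^i,\bar X_n^i)_i$ is an admissible coupling between $\hat\nu_n$ and $\bar\nu_n$, the task reduces to controlling $a_n := \mathbb{E}[\|\hat X_n^1-\bar X_n^1\|^2]^{1/2}$, which is the same for every $i$ by exchangeability.

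Subtracting the two updates and decomposing the gradient difference by successively inserting $\nabla f_{\hat\mu,\bar\nu_n}(\hat Z_n^1)$ and $\nabla f_{\hat\mu,\bar\nu_n}(\bar Z_n^1)$ (with $\hat Z_n^1 := \hat X_n^1+\beta_n U_n^1$ and analogously for $\bar Z_n^1$) yields four pieces: \textbf{(i)} Lipschitz-in-the-input, proportional to $L\|\hat X_n^1-\bar X_n^1\|$ via \cref{assump:lipschitz_gradient_k}; \textbf{(ii)} coupling in the $\nu$-mean, proportional to $\tfrac{L}{N}\sum_j\|\hat X_n^j-\bar X_n^j\|$, whose $L_2$ norm is $L\,a_n$ by exchangeability and Cauchy--Schwarz; \textbf{(iii)} a Monte-Carlo piece $\|\tfrac1N\sum_j\nabla k(\bar X_n^j,\bar Z_n^1)-\int\nabla k(x,\bar Z_n^1)d\nu_n(x)\|$; \textbf{(iv)} an analogous Monte-Carlo piece for $\mu$ vs $\hat\mu$. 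Piece (iv) is independent of $(Y^m)$, yielding $L\sqrt{\mathrm{var}(\mu)/M}$. For piece (iii), to dispose of the $\mathrm{var}(\nu_n)^{1/2}$ factor I would propagate moments along the exact recursion: $O(L)$-Lipschitzness of $\nabla f_{\mu,\nu_n}$ combined with $\|\beta_n U_n^1\|\le B\|U_n^1\|$ gives $\mathrm{var}(\nu_n)^{1/2}\le(\mathrm{var}(\nu_0)^{1/2}+B)e^{2Ln\gamma}$, which explains the $e^{2LT}$ prefactor on the $\sqrt N$-term in the stated bound.

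Collecting the four pieces gives, for $n\gamma\le T$, a recursion of the form
\begin{align*}
a_{n+1}\le(1+c\,\gamma L)\,a_n+\gamma L\Bigl[\tfrac{1}{\sqrt N}(B+\mathrm{var}(\nu_0)^{1/2})e^{2LT}+\tfrac{1}{\sqrt M}\mathrm{var}(\mu)^{1/2}\Bigr],
\end{align*}
where $c$ is an explicit universal constant (careful bookkeeping gives $c=4$, matching the prefactor of the theorem). The discrete Gronwall lemma with $a_0=0$ and $(1+c\gamma L)^n\le e^{cLT}$, followed by division by the effective contraction constant $cL$, produces the outer factor $\tfrac{1}{c}(e^{cLT}-1)$. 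The main obstacle is piece (iii): the evaluation point $\bar Z_n^1$ is correlated with the summands $\bar X_n^j$, so the i.i.d.\ variance identity does not apply directly. I would resolve this by isolating the $j=1$ self-interaction term (an $O(1/N)$ lower-order correction absorbed in the leading rate) and then conditioning on $(\bar X_n^1,U_n^1)$: the remaining $N-1$ summands are conditionally i.i.d.\ and independent of the evaluation point, recovering the $L\sqrt{\mathrm{var}(\nu_n)/N}$ estimate. The noisy-scheme analogue of \cref{prop:existence_uniqueness} required to identify $\bar X_n^i\sim\nu_n$, and the moment-propagation bound for $\mathrm{var}(\nu_n)$, are the remaining, mostly routine, technical points.
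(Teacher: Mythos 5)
Your proposal follows essentially the same route as the paper's proof: a synchronous coupling with auxiliary trajectories driven by the exact noisy update \cref{eq:discretized_noisy_flow}, a triangle-inequality decomposition into a Lipschitz piece, a Monte-Carlo piece for $\nu_n$ versus $\bar\nu_n$ and one for $\mu$ versus $\hat\mu$, a variance-propagation bound $\mathrm{var}(\nu_n)^{1/2}\le(B+\mathrm{var}(\nu_0)^{1/2})e^{2LT}$, and a discrete Gronwall lemma with the same constants. Your treatment of the correlation between the evaluation point and the summands in the Monte-Carlo term is in fact more careful than the paper's, which expands the squared norm as if the cross terms vanished for all $j$.
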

\cref{prop:convergence_euler_maruyama} controls the propagation of the chaos at each iteration, and uses techniques from \cite{jourdain2007nonlinear}. Notice also that these rates remain true when no noise is added to the updates, i.e. for the original flow when $B=0$. A proof is provided in \cref{proof:propagation_chaos}. The dependence in $\sqrt{M}$ underlines the fact that our procedure could be interesting as a sampling algorithm when one only has access to $M$ samples of $\mu$ (see \cref{subsec:kl_flow} for a more detailed discussion).

\textbf{Experiments}
\begin{figure}[ht]
	\centering
	\includegraphics[width=1.\linewidth]{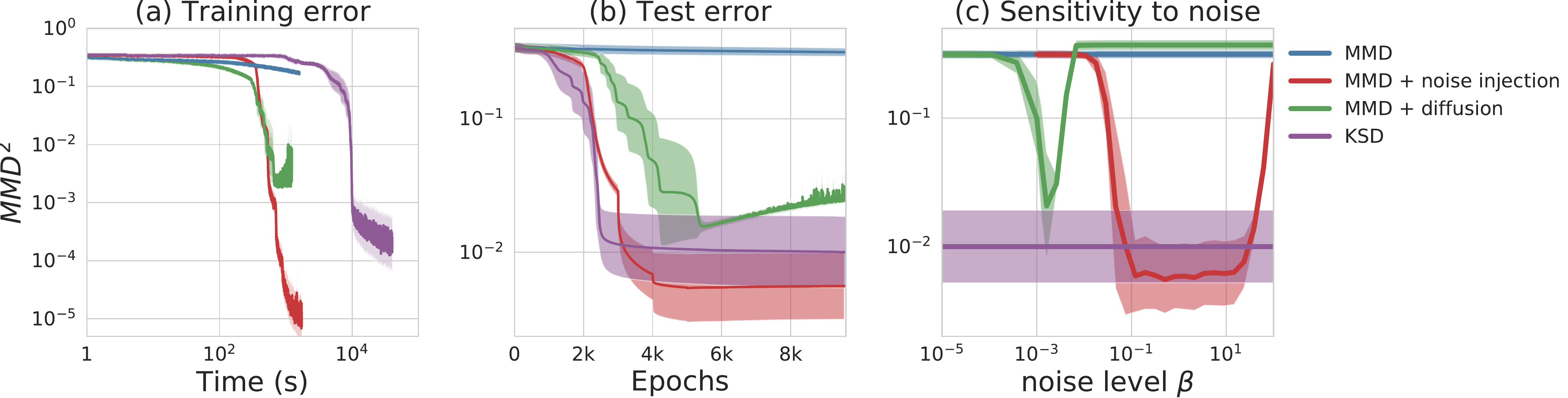}
	\caption{Comparison between different training methods for student-teacher ReLU networks with  gaussian output non-linearity and synthetic data uniform on a hyper-sphere. In blue, \cref{eq:euler_maruyama} is used without noise $\beta_n=0$ while in red noise is added with the following schedule: $\beta_0>0$ and $\beta_n$ is decreased by half after every $10^3$ epochs. In green, a diffusion term is added to the particles with noise level kept fixed during training ($\beta_n=\beta_0$). In purple, the KSD is used as a cost function instead of the MMD. In all cases, the kernel is estimated using random features (RF) with a batch size of $10^2$. Best step-size was selected for each method from $\{10^{-3},10^{-2},10^{-1}\}$ and was used for $10^4$ epochs on a dataset of $10^3$ samples (RF). Initial parameters of the networks are drawn from i.i.d. gaussians: $\mathcal{N}(0,1)$ for the teacher and $\mathcal{N}(10^{-3},1)$ for the student. Results are averaged over 10 different runs. } 
	\label{fig:experiments_student_teacher}
\end{figure}
\cref{fig:experiments_student_teacher} illustrates the behavior of the proposed algorithm \cref{eq:euler_maruyama} in a simple setting and compares it with three other methods:  MMD without noise injection (blue traces), MMD with diffusion (green traces) and KSD (purple traces, \cite{Mroueh:2019}). Here, a student network is trained to produce the outputs of a teacher network using gradient descent. More details on the experiment are provided in \cref{sec:experiments_neural_network}. As discussed in \cref{subsec:training_neural_networks}, this setting can be seen as a \textit{stochastic} version of the MMD flow since the kernel is estimated using random features at each iteration (\cref{eq:random_features_kernel} in  \cref{sec:experiments_neural_network}).  
Here, the MMD flow fails to converge towards the global optimum. Such behavior is consistent with the observations in \cite{Chizat:2018a} when the parameters are initialized from a gaussian noise with relatively high variance (which is the case here). On the other hand, adding noise to the gradient seems to lead to global convergence. Indeed, the training error decreases below $10^{-5}$ and leads to much better validation error. While adding a small diffusion term (green) help convergence, the noise-injection (red) still outperforms it. This also holds for 
KSD  (purple) which leads to a good solution (b) although at  a much higher computational cost (a).
Our noise injection method (red) is also robust to the amount of noise and achieves best performance over a wide region (c).
 On the other hand,  MMD + diffusion (green) performs well only for much smaller values of noise that are located in a narrow region.
  This is expected since adding a diffusion changes the optimal solution, unlike the injection where the global optimum of the MMD remains a fixed point of the algorithm.
  
Another illustrative experiment on a simple flow between Gaussians is given in \cref{sec:experiments_gaussian}.
\section{Conclusion}
We have introduced MMD flow, a novel flow over the space of distributions, with a practical space-time discretized implementation and a regularisation scheme to improve convergence. We provide theoretical results, highlighting intrinsic properties of the regular MMD flow, and guarantees on convergence based on  recent results in optimal transport, probabilistic interpretations of PDEs, and particle algorithms. Future work will focus on a deeper understanding of  regularization for  MMD flow,
and its application in sampling and optimization for large neural networks.

\printbibliography

\clearpage

\appendix

This appendix is organized as follows. In \cref{sec:appendix_math_background}, the mathematical background needed for this paper is given. In \cref{sec:assumptions_kernel}, we state the main assumptions used in this work. \cref{sec:appendix_gradient_flow} is dedicated to the construction of the gradient flow of the MMD. \cref{sec:appendix_convergence} provides proofs for the convergence results in \cref{sec:convergence_mmd_flow}. \cref{sec:appendix_algorithms} is dedicated to the modified gradient flow based on noise injection. In \cref{subsec:training_neural_networks}, we discuss the connexion with optimization of neural networks. \cref{sec:experiments} provides details about the experiments. Finally, some auxiliary  results are provided in \cref{sec:auxiliary_results}.

\section{Mathematical background}\label{sec:appendix_math_background}
We define $\X\subset\R^d$ as the closure of a convex open set, and $\mathcal{P}_2(\X)$ as the set of probability distributions on $\X$ with finite second moment, equipped with the 2-Wassertein metric denoted $W_2$. For any $\nu \in \mathcal{P}_2(\X)$, $L_2(\nu)$ is the set of square integrable functions w.r.t. $\nu$.
\subsection{Maximum Mean Discrepancy and Reproducing Kernel Hilbert Spaces}\label{sec:rkhs}
We recall here fundamental definitions and properties of reproducing kernel Hilbert spaces (RKHS) (see \cite{smola1998learning}) and Maximum Mean Discrepancies (MMD). 
Given a positive semi-definite kernel $(x,y)\mapsto k(x,y)\in \R$ defined for all $x,y\in\X$, we denote by $\kH$ its corresponding RKHS (see \cite{smola1998learning}). The space $\kH$ is a Hilbert space with inner product $\langle .,. \rangle_{\kH}$ and corresponding norm $\Vert . \Vert_{\kH}$. A key property of $\kH$ is the reproducing property: for all $f \in \kH, f(x) = \langle f, k(x, .)\rangle_{\kH}$. Moreover, if $k$ is $m$-times differentiable w.r.t. each of its coordinates, then any $f\in \kH$  is $m$-times differentiable  and $\partial^{\alpha}f(x)=\langle f, \partial^{\alpha} k(x,.) \rangle_{\kH}$ where $\alpha$ is any multi-index with $\alpha \leq m$ \cite[Lemma 4.34]{Steinwart:2008a}. When $k$ has at most quadratic growth, then for all $\mu\in \mathcal{P}_2(\X)$, $\int k(x,x) \diff \mu(x) <\infty$. In that case, for any $\mu\in \mathcal{P}_2(\X)$,  $ \phi_{\mu} := \int k(.,x)\diff \mu(x)$ is a well defined element in $\kH$ called the mean embedding of $\mu$. The kernel $k$ is said to be characteristic when such mean embedding is injective, that is any mean embedding is associated to a unique probability distribution. When $k$ is characteristic, it is possible to define a distance between distributions in $\mathcal{P}_2(\X)$ called the Maximum Mean Discrepancy:
\begin{align}
	MMD(\mu,\nu) = \Vert \phi_{\mu} - \phi_{\nu}\Vert_{\kH} \qquad \forall \; \mu,\nu \in \mathcal{P}_2(\X).
\end{align}
The difference between the mean embeddings of $\mu$ and $\nu$ is an element in $\kH$ called the unnormalised witness function between $\mu$ and $\nu$:  $f_{\mu,\nu} = \phi_{\nu} - \phi_{\mu}$. The MMD can also be seen as an \textit{Integral Probability Metric}:
\begin{align}
MMD(\mu,\nu) = \sup_{g\in \mathcal{B}} \int g\diff\mu - \int g \diff\nu
\end{align}
where $\mathcal{B} = \{ g\in \kH : \; \Vert g\Vert_{\kH}\leq 1 \}$  is the unit ball in the RKHS.

\subsection{$2$-Wasserstein geometry}\label{subsec:wasserstein_flow}
For two given probability distributions $\nu$ and $\mu$ in $\mathcal{P}_2(\X)$, we denote by $\Pi(\nu,\mu)$ the set of possible couplings between $\nu$ and $\mu$. In other words $\Pi(\nu,\mu)$ contains all possible distributions $\pi$ on $\X\times \X$ such that if $(X,Y) \sim \pi $ then $X \sim \nu $ and $Y\sim \mu$. The $2$-Wasserstein distance on $\mathcal{P}_2(\X)$ is defined by means of an optimal coupling between $\nu$ and $\mu$ in the following way:
\begin{align}\label{eq:wasserstein_2}
W_2^2(\nu,\mu) := \inf_{\pi\in\Pi(\nu,\mu)} \int \left\Vert x - y\right\Vert^2 \diff \pi(x,y) \qquad \forall \nu, \mu\in \mathcal{P}_2(\X)
\end{align}
It is a well established fact that such optimal coupling $\pi^*$ exists \cite{Villani:2009,Santambrogio:2015} . Moreover, it can be used to define a path $(\rho_t)_{t\in [0,1]}$ between $\nu$ and $\mu$ in $\mathcal{P}_2(\X)$. For a given time $t$ in $[0,1]$ and given a sample $(x,y)$ from $\pi^{*}$, it is possible to construct a sample $z_t$ from $\rho_t$ by taking the convex combination of $x$ and $y$: $z_t = s_t(x,y)$ where $s_t$ is given by:
\begin{equation}\label{eq:convex_combination}
s_t(x,y) = (1-t)x+ty \qquad \forall x,y\in \X, \; \forall t\in [0,1].
\end{equation}
The function $s_t$ is well defined since $\X$ is a convex set. More formally, $\rho_t$ can be written as the projection or push-forward of the optimal coupling $\pi^{*}$ by $s_t$:  
\begin{equation}\label{eq:displacement_geodesic}
\rho_t = (s_t)_{\#}\pi^{*}
\end{equation}
We recall that for any $T: \X \rightarrow \X$ a measurable map, and any $\rho \in \mathcal{P}(\X)$, the push-forward measure $T_{\#}\rho$ is characterized by:
\begin{align}
\int_{y \in \X} \phi(y) \diff T_{\#}\rho(y) =\int_{x \in \X}\phi(T(x)) \diff \rho(x) \text{ for every measurable and bounded function $\phi$.}
\end{align}
It is easy to see that \cref{eq:displacement_geodesic} satisfies the following boundary conditions at $t=0,1$:
\begin{align}\label{eq:boundary_conditions}
\rho_0 = \nu \qquad \rho_1 = \mu.
\end{align}
Paths of the form of \cref{eq:displacement_geodesic} are called \textit{displacement geodesics}. They can be seen as the shortest paths from $\nu$ to $\mu$ in terms of mass transport (\cite{Santambrogio:2015} Theorem 5.27). It can be shown that there exists a \textit{velocity vector field} $(t,x)\mapsto V_t(x)$ with values in $\R^d$ such that $\rho_t$ satisfies the continuity equation:
\begin{equation}\label{eq:continuity_equation}
\partial_t \rho_t + div(\rho_t V_t ) = 0 \qquad \forall t\in[0,1].
\end{equation}
This equation expresses two facts, the first one is that $-div(\rho_t V_t)$ reflects the infinitesimal changes in $\rho_t$ as dictated by the vector field (also referred to as velocity field) $V_t$, the second one is that the total mass of $\rho_t$ does not vary in time as a consequence of the divergence theorem. Equation \cref{eq:continuity_equation} is well defined in the distribution sense even when $\rho_t$ does not have a density. At each time $t$, $V_t$ can be interpreted as a tangent vector to the curve $(\rho_t)_{t\in[0,1]}$ so that the length $l((\rho_t)_{t\in[0,1]})$ of the curve $(\rho_t)_{t\in[0,1]}$ would be given by:
\begin{equation}
l((\rho_t)_{t\in[0,1]})^2 = \int_0^1 \Vert V_t \Vert^2_{L_2(\rho_t)} \diff t \quad \text{ where } \quad 
\left\Vert V_t \right\Vert^2_{L_2(\rho_t)} =  \int \left\Vert V_t(x) \right\Vert^2 \diff \rho_t(x)
\end{equation}
This perspective allows to provide a dynamical interpretation of the $W_2$ as the length  of the shortest path from $\nu$ to $\mu$ and is summarized by the celebrated Benamou-Brenier formula (\cite{benamou2000computational}):
\begin{align}\label{eq:benamou-brenier-formula}
W_2(\nu,\mu) = \inf_{(\rho_t,V_t)_{t\in[0,1]}} l((\rho_t)_{t\in[0,1]})
\end{align}
where the infimum is taken  over all couples  $\rho$ and $v$ satisfying  \cref{eq:continuity_equation}  with boundary conditions given by \cref{eq:boundary_conditions}. If $(\rho_t,V_t)_{t\in[0,1]}$ satisfies  \cref{eq:continuity_equation} and \cref{eq:boundary_conditions} and realizes the infimum in \cref{eq:benamou-brenier-formula}, it is then simply called a geodesic between $\nu$ and $\mu$; moreover it is called a constant-speed geodesic if, in addition, the norm of $V_t$ is constant for all $t\in[0,1]$. As a consequence, \cref{eq:displacement_geodesic} is a constant-speed displacement geodesic. 
\begin{remark}
	Such paths should not be confused with another kind of paths called \textit{mixture geodesics}. The mixture geodesic $(m_t)_{t\in[0,1]}$ from $\nu$ to $\mu$ is obtained by first choosing either $\nu$ or $\mu$ according to a Bernoulli distribution of parameter $t$ and then sampling from the chosen distribution:
	\begin{align}\label{eq:mixture_geodesic}
	m_t = (1-t)\nu + t\mu \qquad \forall t \in [0,1].
	\end{align}
	Paths of the form \cref{eq:mixture_geodesic} can be thought as the shortest paths between two distributions when distances on $\mathcal{P}_2(\X)$ are measured using the MMD (see \cite{Bottou:2017} Theorem 5.3). We refer to \cite{Bottou:2017} for an overview of the notion of shortest paths in probability spaces and for the differences between mixture geodesics and displacement geodesics.
	Although, we will be interested in the MMD as a loss function, we will not consider the geodesics that are naturally associated to it and will rather consider the displacement geodesics defined in \cref{eq:displacement_geodesic} for reasons that will become clear in \cref{subsec:lambda_convexity}.
\end{remark}

\subsection{Gradient flows on the space of probability measures}\label{subsec:gradient_flows_functionals}
Consider a real valued functional $\F$ defined over $\mathcal{P}_2(\x)$. We call $\frac{\partial{\F}}{\partial{\nu}}$ if it exists, the unique (up to additive constants) function such that $\frac{d}{d\epsilon}\F(\nu+\epsilon  (\nu'-\nu))\vert_{\epsilon=0}=\int\frac{\partial{\F}}{\partial{\nu}}(\nu) (\diff \nu'-\diff \nu) $ for any $\nu' \in \mathcal{P}_2(\X)$. The function $\frac{\partial{\F}}{\partial{\nu}}$ is called the first variation of $\F$ evaluated at $\nu$. 
We consider here functionals $\F$ of the form:
\begin{equation}\label{eq:lyapunov}
\F(\nu)=\int U(\nu(x)) \nu(x)dx + \int V(x)\nu(x)dx + \int W(x,y)\nu(x)\nu(y)dxdy
\end{equation}
where  $U$ is the internal potential, $V$ an external potential and $W$ an
interaction potential. The formal gradient flow equation associated to such functional can be written (see \cite{carrillo2006contractions}, Lemma 8 to 10):
\begin{equation}\label{eq:continuity_equation1}
\frac{\partial \nu}{\partial t}= div( \nu \nabla \frac{\partial \F}{\partial \nu})=div( \nu \nabla (U'(\nu) + V + W * \nu))
\end{equation}
where $div$ is the divergence operator and $\nabla \frac{\partial \F}{\partial \nu}$ is the strong subdifferential of $\F$ associated to the $W_2$ metric (see \cite{ambrosio2008gradient}, Lemma 10.4.1). Indeed, for some generalized notion of gradient $\nabla_{W_2}$, and for sufficiently regular $\nu$ and $\F$, the r.h.s. of \cref{eq:continuity_equation1} can be formally written as $-\nabla_{W_2}\F(\nu)$.
The dissipation of energy along the flow is then given by: 
\begin{align}\label{eq:dissipation_energy}
\frac{d \F(\nu_t)}{dt} =-D(\nu_t) \quad \text{ with } D(\nu)= \int \Vert\nabla \frac{\partial \F(\nu_t(x))}{\partial \nu}\Vert ^2 \nu_t(x)dx
\end{align}
Such expression can be obtained by the following formal calculations:
	\begin{equation*}
	\frac{d\F(\nu_t)}{dt}=\int \frac{\partial \F(\nu_t)}{\partial \nu_t} \frac{\partial \nu_t}{\partial t}=\int \frac{\partial \F(\nu_t)}{\partial \nu} div( \nu_t \nabla \frac{\partial \F(\nu_t)}{\partial \nu})=-\int \|\nabla \frac{\partial \F(\nu_t)}{\partial \nu}\|^2d \nu_t.
	\end{equation*}

\subsection{Displacement convexity}\label{subsec:lambda_convexity}
Just as for Euclidian spaces, an  important criterion to characterize the convergence of the Wasserstein gradient flow of a functional $\F$ is given by displacement convexity (see \cite[Definition 16.5 (1st bullet point)]{Villani:2004})):

\begin{definition}\label{def:displacement_convexity}[Displacement convexity] 
We say that a functional $\nu\mapsto\mathcal{F}(\nu)$ is displacement convex
	if for any $\nu$ and $\nu'$ and a constant speed geodesic $(\text{\ensuremath{\rho_{t}}})_{t \in [0,1]}$
	between $\nu$ and $\nu'$ with velocity vector field $(V_{t})_{t \in [0,1]}$ as defined by \cref{eq:continuity_equation},
	the following holds:
	\begin{equation}\label{eq:displacement_convex}
		\F(\rho_{t})\leq(1-t)\F(\nu_{0})+t\F(\nu_{1}) \qquad\forall\; t\in[0,1].
	\end{equation}
\end{definition}
\cref{def:displacement_convexity} can be relaxed to a more general notion of convexity called $\Lambda$-displacement convexity (see \cite[Definition 16.5 (3rd bullet point)]{Villani:2009}). We first define an admissible functional $\Lambda$:
\begin{definition}\label{def:conditions_lambda}[Admissible $\Lambda$ functional]
	Consider a functional $(\rho,v)\mapsto \Lambda(\rho,v) \in \R$  defined for any probability distribution $\rho\in \mathcal{P}_2(\X)$ and any square integrable vector field $v$ w.r.t $\rho$. We say that $\Lambda$ is admissible, if it satisfies:
	\begin{itemize}
	\item For any $\rho \in \mathcal{P}_2(\X)$,  $v\mapsto \Lambda(\rho,v)$ is a quadratic form.
	\item For any geodesic $(\rho_t)_{0\leq t\leq 1}$ between two distributions $\nu$ and $\nu'$ with corresponding vector fields $(V_t)_{t \in [0,1]}$ it holds that $\inf_{0\leq t\leq 1}\Lambda(\rho_t,V_t)/\Vert V_t\Vert_{L_{2}(\rho_t)}^{2}>-\infty$ 
\end{itemize}
\end{definition}
We can now define the notion of $\Lambda$-convexity:
\begin{definition}\label{def:lambda-convexity}[$\Lambda$ convexity]
	We say that a functional $\nu\mapsto\mathcal{F}(\nu)$ is $\Lambda$-convex
	if for any $\nu,\nu'\in \mathcal{P}_2(\X)^2$ and a constant speed geodesic $(\text{\ensuremath{\rho_{t}}})_{t \in [0,1]}$
	between $\nu$ and $\nu'$ with velocity vector field $(V_{t})_{t \in [0,1]}$ as defined by \cref{eq:continuity_equation},
	the following holds:
	\begin{equation}\label{eq:lambda_displacement_convex}
		\F(\rho_{t})\leq(1-t)\F(\nu_{0})+t\F(\nu_{1})-\int_{0}^{1}\Lambda(\rho_{s},V_{s})G(s,t)ds \qquad\forall\; t\in[0,1].
	\end{equation}
	where $(\rho,v)\mapsto\Lambda(\rho,v)$ satisfies \cref{def:conditions_lambda},
	and $G(s,t)=s(1-t) \mathbb{I}\{s\leq t\}
	+t(1-s) \mathbb{I}\{s\geq t\}$.
	A particular case is when $\Lambda(\rho,v)= \lambda \int \left\Vert v(x) \right\Vert^2 \diff \rho(x)   $ for some $\lambda\in \R$. In that case, \cref{eq:lambda_displacement_convex} becomes:
\begin{align}\label{eq:semi-convexity}
	\F(\rho_{t})\leq(1-t)\F(\nu_{0})+t\F(\nu_{1})-\frac{\lambda}{2}t(1-t)W_2^2(\nu_0,\nu_1)  \qquad\forall\; t\in[0,1].
\end{align}
\end{definition}
\cref{def:displacement_convexity} is a particular case of \cref{def:lambda-convexity}, where in \cref{eq:semi-convexity} one has $\lambda =0$.

\subsection{Comparison with the Kullback Leilber divergence flow}\label{subsec:kl_flow}

\textit{Continuity equation and McKean Vlasov process.}	A famous example of a free energy \cref{eq:lyapunov} is the Kullback-Leibler divergence, defined for $\nu, \mu \in \mathcal{P}(\X)$ by
	$KL(\nu,\mu)=\int log(\frac{\nu(x)}{\mu(x)})\nu(x)dx$. Indeed, $KL(\nu, \mu)=\int U(\nu(x))dx + \int V(x) \nu(x)dx$ with $U(s)=s\log(s)$ the entropy function and $V(x)=-log(\mu(x))$. In this case, $\nabla \frac{\partial \F}{\partial \nu}= \nabla \log(\nu) + \nabla V=  \nabla \log(\frac{\nu}{\mu})$ and equation \cref{eq:continuity_equation1} leads to the classical Fokker-Planck equation
	\begin{equation}\label{eq:Fokker-Planck}
	\frac{\partial{\nu}}{\partial t}= div(\nu \nabla V )+ \Delta \nu,
	\end{equation}
	where $\Delta$ is the Laplacian operator. It is well-known (see for instance \cite{jordan1998variational}) that the distribution of the Langevin diffusion in \cref{eq:langevin_diffusion} satisfies \cref{eq:Fokker-Planck},
	\begin{equation}\label{eq:langevin_diffusion}
	dX_t= -\nabla \log \mu (X_t)dt+\sqrt{2}dB_t.
	\end{equation}
	Here, $(B_t)_{t\ge0}$ is a $d$-dimensional Brownian motion. While the entropy term in the $KL$ functional prevents the particles from "crashing" onto the mode of $\mu$, this role could be played by the interaction energy $W$ defined in \cref{eq:potentials} for the MMD. Indeed, consider for instance the gaussian kernel $k(x,x')=e^{-\|x-x'\|^2}$. It is convex thus attractive at long distances ($\|x-x'\|>1$) but repulsive at small distances so repulsive. 
	
\textit{Convergence to a global minimum.} The solution to the Fokker-Planck equation describing the gradient flow of the $KL$ can be shown to converge towards $\mu$ under mild assumptions. This follows from the displacement convexity of the $KL$ along the Wasserstein geodesics. Unfortunately the MMD is not displacement convex in general, as shown in \cref{subsection:barrier_optimization} or \cref{subsec:appendix_lambda_convexity}. This makes the task of proving the convergence of the gradient flow of the MMD to the global optimum $\mu$ much harder. %

\textit{Sampling algorithms derived from gradient flows}. Two settings are usually encountered in the sampling literature: \textit{density-based}, i.e. the target $\mu$ is known up to a constant, or \textit{sample-based}, i.e. only a set of samples $X \sim \mu$ is accessible.
	The Unadjusted Langevin Algorithm (ULA), which involves a time-discretized version of the Langevin diffusion falls into the first category since it requires the knowledge of $\nabla \log \mu$. In a sample-based setting, it may be difficult to adapt the ULA algorithm, since this would require to estimate $\nabla \log(\mu)$ based on a set of samples of $\mu$, before plugging this estimate in the update of the algorithm. This problem, sometimes referred to as \textit{score estimation} in the literature, has been the subject of a lot of work but remains hard especially in high dimensions (see \cite{sutherland2017efficient},\cite{li2018gradient},\cite{shi2018spectral}). In contrast, the discretized flow (in time and space) of the MMD presented in \cref{sec:sample_based} is naturally adapted to the sample-based setting. 

\section{Main assumptions}\label{sec:assumptions_kernel}

We state here all the assumptions on the kernel $k$ used to prove all the results:%
\begin{assumplist}
	\item \label{assump:lipschitz_gradient_k} $k$ is continuously differentiable on $\X$ with $L$-Lipschitz gradient: $\Vert  \nabla k(x,x') - \nabla k(y,y')\Vert \leq L(\Vert  x-y\Vert + \Vert x'-y' \Vert ) $ for all $x,x',y,y' \in \X$.
	\item \label{assump:diff_kernel} $k$ is twice differentiable on $\X$.
	\item \label{assump:bounded_fourth_oder} $\Vert Dk(x,y) \Vert\leq \lambda  $ for all $x,y\in \X$, where $Dk(x,y)$ is an $\mathbb{R}^{d^2}\times \mathbb{R}^{d^2}$ matrix with entries given by $\partial_{x_{i}}\partial_{x_{j}}\partial_{x'_{i}}\partial_{x_{j}'}k(x,y)$.%
	\item \label{assump:Lipschitz_grad_rkhs} $ \sum_{i=1}^d\Vert  \partial_i k(x,.) - \partial_i k(y,.)\Vert^2_{\kH} \leq \lambda^2 \Vert  x-y\Vert^2 $ for all $x,y\in \X$.%

\end{assumplist}

\section{Construction of the gradient flow of the MMD}\label{sec:appendix_gradient_flow}

\subsection{Continuous time flow}

Existence and uniqueness of a solution to \cref{eq:continuity_mmd,eq:mcKean_Vlasov_process} is guaranteed under Lipschitz regularity of $\nabla k$.%
\begin{proof}[Proof of \cref{prop:existence_uniqueness}]\label{proof:prop:existence_uniqueness}[Existence and uniqueness]
Under \cref{assump:lipschitz_gradient_k}, the map $(x,\nu)\mapsto \nabla f_{\mu,\nu}(x)=\int \nabla k(x,.)d \nu - \int \nabla k(x,.) d \mu$ is Lipschitz continuous on $\X \times \mathcal{P}_2(\X)$ (endowed with the product of the canonical metric on $\X$ and $W_2$ on $\mathcal{P}_2(\X)$), see \cref{prop:grad_witness_function}. Hence, we benefit from standard existence and uniqueness results of McKean-Vlasov processes (see \cite{jourdain2007nonlinear}). Then, it is straightforward to verify that the distribution of \cref{eq:mcKean_Vlasov_process} is solution of \cref{eq:continuity_mmd} by ItÃŽ's formula (see \cite{ito1951stochastic}). The uniqueness of the gradient flow, given a starting distribution $\nu_0$, results from the $\lambda$-convexity of $\F$ (for $\lambda=3L$) which is given by \cref{lem:lambda_convexity_bis}, and  \cite[Theorem 11.1.4]{ambrosio2008gradient}. The existence derive from the fact that the sub-differential of $\F$ is single-valued, as stated by \cref{prop:differential_mmd}, and that any $\nu_0$ in $\mathcal{P}_2(\X)$ is in the domain of $\F$. One can then apply \cite[Theorem 11.1.6 and Corollary 11.1.8]{ambrosio2008gradient}.
\end{proof}

\begin{proof}[Proof of \cref{prop:decay_mmd}]\label{proof:prop:decay_mmd}[Decay of the MMD]
Recalling the discussion in \cref{subsec:gradient_flows_functionals}, the time derivative of $\F(\nu_t)$ along the flow is formally given by \cref{eq:dissipation_energy}. But we know from \cref{prop:differential_mmd} that the strong differential $\nabla  \frac{\delta \F(\nu)}{\delta\nu}$ is given by $\nabla f_{\mu,\nu}$. Therefore, one formally obtains the desired expression by exchanging the order of derivation and integration, performing an integration by parts and using the continuity equation (see \eqref{eq:dissipation_energy}). We refer to \cite{Mroueh:2019} for similar calculations. 
One can also obtain directly the same result using the energy identity in \cite[Theorem 11.3.2]{ambrosio2008gradient} which holds for $\lambda$-displacement convex functionals. The result applies here since, by \cref{lem:lambda_convexity_bis}, we know that $\F$ is $\lambda$-displacement convex with $\lambda = 3L$. 
 \end{proof}

\subsection{Time-discretized flow}\label{appendix:subsec:convegence_time_discrete}
We  prove that \cref{eq:euler_scheme} approximates \cref{eq:continuity_mmd}. To make  the dependence on the step-size $\gamma$ explicit, we will write: $\nu_{n+1}^{\gamma} =(I-\gamma\nabla f_{\mu,\nu_n^{\gamma}})_{\#}\nu_{n}^{\gamma}$ (so $\nu_n^{\gamma}=\nu_n$ for any $n \ge 0$). We start by introducing an auxiliary  sequence $\bar{\nu}_{n}^{\gamma}$ built by iteratively applying $\nabla f_{\mu,\nu_{\gamma n}}$ where $\nu_{\gamma n}$  is the solution of \cref{eq:continuity_mmd} at time $t= \gamma n$: 
 \begin{equation}\label{eq:intermed_process_time}
 \bar{\nu}_{n+1}^{\gamma} =(I-\gamma\nabla f_{\mu,\nu_{\gamma n}})_{\#}\bar{\nu}_{n}^{\gamma}
  \end{equation}
 with $\bar{\nu}_{0}=\nu_{0}$. Note that the latter sequence involves the continuous-time process $\nu_t$ of \cref{eq:continuity_mmd} with $t=\gamma n$. Using $\nu_n^{\gamma}$, we also consider the interpolation path $\rho_{t}^{\gamma}=(I-(t-n\gamma)\nabla f_{\mu,\nu_{n}^{\gamma}})_{\#}\nu_{n}^{\gamma}$ for all $t\in[n\gamma,(n+1)\gamma)$ and $n\in \mathbb{N}$, which is the same as in \cref{prop:convergence_euler_scheme}.
\begin{proof}[Proof of \cref{prop:convergence_euler_scheme}]\label{proof:prop:convergence_euler_scheme}
Let $\pi$ be an optimal coupling between $\nu_{n}^{\gamma}$ and $\nu_{\gamma n}$,
and $(x,y)$ a sample from $\pi$. For $t\in[n\gamma,(n+1)\gamma)$ we write $y_{t} =y_{n\gamma}-\int_{n\gamma}^{t}\nabla f_{\mu,\nu_{s}}(y_u)\diff u$ and $x_{t}  =x-(t-n\gamma)\nabla f_{\mu,\nu_{n}^{\gamma}}(x)$ where $y_{n\gamma}= y$. We also introduce the approximation error $  E(t,n\gamma):=y_{t}-y+(t-n\gamma)\nabla f_{\mu,\nu_{\gamma n}}(y)$ for which we know by  \cref{lem:Taylor-expansion} that $\mathcal{E}(t,n\gamma):=\mathbb{E}[E(t,n\gamma)^2]^{\frac{1}{2}}$ is upper-bounded by $(t-n\gamma)^{2}C$ for some positive constant $C$ that depends only on $T$ and the Lipschitz constant $L$. This allows to write:
\begin{align*}
W_{2}(\rho_{t}^{\gamma},\nu_{t}) & \leq\mathbb{E}\left[\left\Vert y-x+(t-n\gamma)(\nabla f_{\mu,\nu_{n}^{\gamma}}(x)-\nabla f_{\mu,\nu_{\gamma n}}(y))+E(t,n\gamma)\right\Vert^{2}\right]^{\frac{1}{2}}\\
 & \leq W_{2}(\nu_{n}^{\gamma},\nu_{\gamma n})+4L(t-n\gamma)W_{2}(\nu_{n}^{\gamma},\nu_{\gamma n})+\mathcal{E}(t,n\gamma)\\
 & \leq(1+4\gamma L)W_{2}(\nu_{n}^{\gamma},\nu_{\gamma n})+(t-\gamma n)^2C\\ &\leq(1+4\gamma L)\left( W_{2}(\nu_{n}^{\gamma},\bar{\nu}_{n}^{\gamma})+W_{2}(\nu_{\gamma n},\bar{\nu}_{n}^{\gamma}) \right)+\gamma^{2}C \\
 & \leq\gamma\left[\left(1+4\gamma L\right)M(T)+\gamma C\right]
\end{align*}
The second line is obtained using that $\nabla f_{\mu,\nu_{\gamma n}}(x)$ is jointly $2L$-Lipschitz in $x$  and $\nu$ (see \cref{prop:grad_witness_function}) and by the fact that $W_{2}(\nu_{n}^{\gamma},\nu_{\gamma n}) = \mathbb{E}_{\pi}[\Vert y-x\Vert^2]^{\frac12}$. The third one is obtained using $t-n \gamma\le \gamma$. For the last inequality, we used \cref{lem:euler_error_1,lem:euler_error_2}  where $M(T)$ is a constant that depends only on $T$. Hence for $\gamma\leq\frac{1}{4L}$ we get $W_{2}(\rho_{t}^{\gamma},\nu_{t})\leq\gamma(\frac{C}{4L}+2M(T)).$
\end{proof}

\begin{lemma}\label{lem:euler_error_1}
For any $n\geq0$:
\begin{equation*}
W_{2}(\nu_{\gamma n},\bar{\nu}_{n}^{\gamma})\le\gamma\frac{C}{2L}(e^{n\gamma 2L}-1)
\end{equation*}
\end{lemma}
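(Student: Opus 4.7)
The strategy is a synchronous coupling between the continuous-time flow and the auxiliary discrete process defined by~\cref{eq:intermed_process_time}. Let $(Y_t)_{t\ge 0}$ solve the McKean--Vlasov SDE~\cref{eq:mcKean_Vlasov_process} so that $Y_t \sim \nu_t$, and define $\bar{Y}_{n+1} := \bar{Y}_n - \gamma\, \nabla f_{\mu,\nu_{\gamma n}}(\bar{Y}_n)$ with $\bar{Y}_0 := Y_0 \sim \nu_0$, so that $\bar{Y}_n \sim \bar{\nu}_n^{\gamma}$. The joint law of $(Y_{\gamma n},\bar{Y}_n)$ is then a valid coupling between $\nu_{\gamma n}$ and $\bar{\nu}_n^{\gamma}$, hence setting $a_n := \mathbb{E}[\|Y_{\gamma n} - \bar{Y}_n\|^2]^{\frac{1}{2}}$ gives $W_2(\nu_{\gamma n},\bar{\nu}_n^{\gamma}) \le a_n$, so it suffices to control $a_n$.

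Next I would derive a one-step recursion for $a_n$. Subtracting the two defining equations and inserting $\pm \nabla f_{\mu, \nu_{\gamma n}}(Y_{\gamma n})$ inside the integrand, one obtains
\begin{align*}
Y_{\gamma(n+1)} - \bar{Y}_{n+1} = (Y_{\gamma n} - \bar{Y}_n) + E(\gamma(n+1),\gamma n) - \int_{\gamma n}^{\gamma(n+1)} \bigl[\nabla f_{\mu,\nu_{\gamma n}}(Y_{\gamma n}) - \nabla f_{\mu,\nu_{\gamma n}}(\bar{Y}_n)\bigr] \diff s,
\end{align*}
where $E(\gamma(n+1),\gamma n) = \int_{\gamma n}^{\gamma(n+1)}[\nabla f_{\mu,\nu_{\gamma n}}(Y_{\gamma n}) - \nabla f_{\mu,\nu_s}(Y_s)]\,\diff s$ is precisely the Taylor-type residual already handled in the proof of \cref{prop:convergence_euler_scheme}; by \cref{lem:Taylor-expansion} its $L^2(\mathbb{P})$-norm is bounded by $\gamma^2 C$. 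Under \cref{assump:lipschitz_gradient_k}, the spatial map $x \mapsto \nabla f_{\mu,\nu_{\gamma n}}(x)$ is $2L$-Lipschitz (since $\nabla k$ is $L$-Lipschitz in each argument and $\mu,\nu_{\gamma n}$ are probability measures). Applying Minkowski's integral inequality in $L^2(\mathbb{P})$ then gives
\begin{align*}
a_{n+1} \le a_n + \gamma^2 C + 2L\gamma\, a_n = (1 + 2L\gamma)\, a_n + \gamma^2 C.
\end{align*}

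Finally, I would solve this linear recursion with $a_0 = 0$. A straightforward induction yields $a_n \le \gamma^2 C \sum_{k=0}^{n-1}(1+2L\gamma)^k = \frac{\gamma C}{2L}\bigl[(1+2L\gamma)^n - 1\bigr]$, and the elementary bound $(1+2L\gamma)^n \le e^{2L\gamma n}$ gives the claim. The only non-routine ingredient is the local error estimate from \cref{lem:Taylor-expansion}, which absorbs the fact that the drift $\nabla f_{\mu,\nu_s}$ itself varies with $s$ inside a single step; everything else is a standard discrete Grönwall-style propagation of a one-step error, and the main care needed is simply to recognize $E(\gamma(n+1),\gamma n)$ inside the coupled increment.
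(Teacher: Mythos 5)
Your proof is correct and follows essentially the same route as the paper: decompose the one-step increment into the Taylor residual $E((n+1)\gamma,n\gamma)$ controlled by \cref{lem:Taylor-expansion} plus a $2L$-Lipschitz propagation term, obtain the recursion $a_{n+1}\le(1+2L\gamma)a_n+\gamma^2 C$, and close with the discrete Gronwall bound. The only (immaterial) difference is that you use a synchronous coupling launched from a common initial point, so your $a_n$ is an upper bound on $W_2(\nu_{\gamma n},\bar{\nu}_n^{\gamma})$ rather than equal to it, whereas the paper re-selects an optimal coupling at each step; both yield the identical recursion and final estimate.
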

\begin{proof}
Let $\pi$ be an optimal coupling between $\bar{\nu}_{n}^{\gamma}$
and $\nu_{\gamma n}$ and  $(\bar{x}$, $x)$ a joint sample
from $\pi$. Consider also the joint sample $(\bar{y},y)$ obtained from  $(\bar{x}$ ,$x)$ by applying the gradient flow of $\F$ in continuous time to get $y := x_{(n+1)\gamma}=x_{n \gamma}-\int_{n\gamma}^{(n+1)\gamma}\nabla f_{\mu,\nu_{s}}(x_u)\diff u$ with $x_{n\gamma} = x$ and by taking a discrete step from $\bar{x}$ to write $\bar{y}=\bar{x}-\gamma\nabla f_{\mu,\nu_{\gamma n}}(\bar{x})$. It is easy to see that $y\sim\nu_{\gamma(n+1)}$ (i.e. a sample from the continous process \cref{eq:continuity_mmd} at time $t=(n+1)\gamma$) and $\bar{y}\sim\bar{\nu}_{n+1}^{\gamma}$ (i.e. a sample from \cref{eq:intermed_process_time}). Moreover, we introduce the approximation error $E((n+1)\gamma,n\gamma):=y-x+\gamma\nabla f_{\mu,\nu_{\gamma n}}(x)$ for which we know by \cref{lem:Taylor-expansion} that $\mathcal{E}((n+1)\gamma,n\gamma):=\mathbb{E}[E((n+1)\gamma,n\gamma)^2]^{\frac{1}{2}}$ is upper-bounded by $\gamma^{2}C$ for some positive constant $C$ that depends only on $T$ and the Lipschitz constant $L$. Denoting by $a_{n}=W_{2}(\nu_{\gamma n},\bar{\nu}_{n}^{\gamma})$, one can therefore write:
\begin{align*}
a_{n+1}\leq & \mathbb{E_{\pi}}\left[\left\Vert x-\gamma\nabla f_{\mu,\nu_{\gamma n}}(x)-\bar{x}+\gamma\nabla f_{\mu,\nu_{\gamma n}}(\bar{x})+E((n+1)\gamma,n\gamma)\right\Vert^{2}\right]^{\frac{1}{2}}\\
\leq & \mathbb{E_{\pi}}\left[\left\Vert x-\bar{x}\right\Vert^{2}\right]^{\frac{1}{2}}+\gamma\mathbb{E_{\pi}}\left[\left\Vert\nabla f_{\mu,\nu_{\gamma n}}(x)-\nabla f_{\mu,\nu_{\gamma n}}(\bar{x}))\right\Vert^{2}\right]^{\frac{1}{2}}+\gamma^{2}C
\end{align*}
Using that $\nabla f_{\mu,\nu_{\gamma n}}$ is $2L$-Lipschitz by \cref{prop:grad_witness_function} and
recalling that $\mathbb{E}_{\pi}\left[\Vert x-\bar{x}\Vert^{2}\right]^{\frac{1}{2}}=W_{2}(\nu_{\gamma n},\bar{\nu}_{n}^{\gamma})$, we get the recursive inequality $a_{n+1}\leq(1+2 \gamma L)a_{n}+\gamma^{2}C$. Finally, 
using \cref{lem:Discrete-Gronwall-lemma} and recalling that $a_{0}=0$, since by
definition $\bar{\nu}_{0}^{\gamma}=\nu_{0}^{\gamma}$, we conclude that $a_{n}\leq\gamma\frac{C}{2L}(e^{n\gamma 2L}-1)$.
\end{proof}
\begin{lemma}\label{lem:euler_error_2}
For any $T>0$ and $n$ such that $n\gamma\leq T$
\begin{equation}
W_{2}(\nu_{n}^{\gamma},\bar{\nu}_{n}^{\gamma})\leq\gamma\frac{C}{8L^2}(e^{4TL}-1)^{2}
\end{equation}
\end{lemma}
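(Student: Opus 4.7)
The plan is to mirror the strategy of the preceding Lemma~\ref{lem:euler_error_1}, but now comparing the two \emph{discrete} schemes $\nu_{n}^{\gamma}$ and $\bar\nu_{n}^{\gamma}$, which use the velocity fields $\nabla f_{\mu,\nu_{n}^{\gamma}}$ and $\nabla f_{\mu,\nu_{\gamma n}}$ respectively. Set $b_n := W_{2}(\nu_{n}^{\gamma},\bar\nu_{n}^{\gamma})$ and let $a_n := W_{2}(\nu_{\gamma n},\bar\nu_{n}^{\gamma})$, for which Lemma~\ref{lem:euler_error_1} already provides the bound $a_n \le \frac{\gamma C}{2L}(e^{2L n\gamma}-1)$. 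Since $\nu_0^{\gamma}=\bar\nu_0^{\gamma}=\nu_0$, we have $b_0 = 0$, so the whole task reduces to deriving a Gronwall-type recursion on $b_n$.

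First, I would take an optimal coupling $\pi$ between $\nu_{n}^{\gamma}$ and $\bar\nu_{n}^{\gamma}$, draw $(x,\bar x)\sim\pi$, and build a (sub-optimal) coupling of $\nu_{n+1}^{\gamma}$ and $\bar\nu_{n+1}^{\gamma}$ via
\begin{equation*}
y \;=\; x - \gamma \nabla f_{\mu,\nu_{n}^{\gamma}}(x), \qquad \bar y \;=\; \bar x - \gamma \nabla f_{\mu,\nu_{\gamma n}}(\bar x).
\end{equation*}
Writing $y - \bar y = (x-\bar x) - \gamma\bigl(\nabla f_{\mu,\nu_{n}^{\gamma}}(x) - \nabla f_{\mu,\nu_{\gamma n}}(\bar x)\bigr)$ and applying the joint $2L$-Lipschitzness of $(x,\nu)\mapsto \nabla f_{\mu,\nu}(x)$ from Proposition~\ref{prop:grad_witness_function} (the same tool used in the proof of Proposition~\ref{prop:convergence_euler_scheme} and Lemma~\ref{lem:euler_error_1}), followed by a triangle inequality in the $L_2(\pi)$ sense, gives
\begin{equation*}
b_{n+1} \;\le\; (1+2L\gamma)\,b_n \;+\; 2L\gamma\, W_{2}(\nu_{n}^{\gamma},\nu_{\gamma n}).
\end{equation*}
Then the key insertion step: the quantity $W_{2}(\nu_{n}^{\gamma},\nu_{\gamma n})$ is precisely what we cannot control directly, but the triangle inequality $W_{2}(\nu_{n}^{\gamma},\nu_{\gamma n}) \le W_{2}(\nu_{n}^{\gamma},\bar\nu_{n}^{\gamma}) + W_{2}(\bar\nu_{n}^{\gamma},\nu_{\gamma n}) = b_n + a_n$ bridges the gap and yields the recursion
\begin{equation*}
b_{n+1} \;\le\; (1+4L\gamma)\,b_n \;+\; 2L\gamma\, a_n.
\end{equation*}

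To close the argument, I would substitute the uniform-in-$n$ bound $a_n \le \frac{\gamma C}{2L}(e^{2LT}-1)$ valid for $n\gamma \le T$, so that the driving term becomes $\gamma^2 C(e^{2LT}-1)$, and then apply the discrete Gronwall inequality (Lemma~\ref{lem:Discrete-Gronwall-lemma}) to the sequence $b_n$ with $b_0=0$. This yields a closed-form bound of the shape
\begin{equation*}
b_n \;\le\; \tfrac{\gamma C}{c\,L^{\alpha}}\bigl(e^{2LT}-1\bigr)\bigl(e^{4LT}-1\bigr)
\end{equation*}
for explicit constants $c,\alpha$, and bounding $e^{2LT}-1 \le e^{4LT}-1$ produces the stated $(e^{4LT}-1)^2$ dependence.

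The main subtlety is that, unlike in Lemma~\ref{lem:euler_error_1} where the inhomogeneous term was a purely deterministic $\gamma^2 C$, here it is itself the output of another Gronwall-type estimate ($a_n$), so $b_n$ accumulates \emph{two} layers of exponential growth — one from the linear Gronwall on $b_n$ itself, and one inherited from $a_n$. Handling this nesting cleanly (and in particular choosing whether to plug in $a_n$ pointwise or its uniform bound before summing the geometric series) is what produces the squared factor $(e^{4LT}-1)^2$ and the final constant. Tracking the constants through these two stages is the only real bookkeeping obstacle; the geometric and Lipschitz ingredients are identical to those already used in the proofs of Proposition~\ref{prop:convergence_euler_scheme} and Lemma~\ref{lem:euler_error_1}.
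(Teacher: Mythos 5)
Your proposal follows essentially the same route as the paper's proof: the same coupling construction, the same recursion $b_{n+1}\le(1+4\gamma L)b_n+2\gamma L\,a_n$ obtained via the joint Lipschitzness of $\nabla f_{\mu,\nu}$ and the triangle inequality through $\bar\nu_n^{\gamma}$, the same substitution of the Lemma~\ref{lem:euler_error_1} bound on $a_n$, and the same discrete Gronwall step with $e^{2LT}-1\le e^{4LT}-1$ producing the squared factor. The only caveat is constant bookkeeping, which you explicitly defer and which the paper itself handles somewhat loosely.
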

\begin{proof}
Consider now an optimal coupling $\pi$ between $\bar{\nu}_{n}^{\gamma}$
and $\nu_{n}^{\gamma}$. Similarly to \cref{lem:euler_error_1}, we
denote by $(\bar{x},x)$ a joint sample from $\pi$ and $(\bar{y},y)$
is obtained from $(\bar{x},x)$  by applying the discrete updates  : $\bar{y}=\bar{x}-\gamma\nabla f_{\mu,\nu_{\gamma n}}(\bar{x})$ and $y=x-\gamma\nabla f_{\mu,\nu_{n}^{\gamma}}(x)$. We again have that $y\sim\nu_{n+1}^{\gamma}$ (i.e. a sample from the time discretized process \cref{eq:euler_scheme}) and $\bar{y}\sim\bar{\nu}_{n+1}^{\gamma}$ (i.e. a sample from \cref{eq:intermed_process_time}).
Now, denoting by $b_{n}=W_{2}(\nu_{n}^{\gamma},\bar{\nu}_{n}^{\gamma})$, it is easy to see from the definition of $\bar{y}$ and $y$ that
we have:
\begin{align*}
b_{n+1} & \leq\mathbb{E_{\pi}}\left[\left\Vert x-\gamma\nabla f_{\mu,\nu_{n}^{\gamma}}(x)-\bar{x}+\gamma\nabla f_{\mu,\nu_{\gamma n}}(\bar{x})\right\Vert^{2}\right]^{\frac{1}{2}}\\
&\leq (1+2\gamma L)  \mathbb{E_{\pi}}\left[\left\Vert x-\bar{x}\right\Vert^2\right]^{\frac{1}{2}} + 2\gamma L W_2(\nu_n^{\gamma},\nu_{\gamma n}))\\
 & \leq (1+ 4\gamma L)b_n + \gamma L W_2(\bar{\nu}_n^{\gamma},\nu_{\gamma n})
\end{align*}
The second line is obtained recalling that $\nabla f_{\mu,\nu}(x)$ is $2L$-Lipschitz in both $x$
and $\nu$ by \cref{prop:grad_witness_function}. The third line follows by triangular inequality and using $\mathbb{E_{\pi}}\left[\left\Vert x-\bar{x}\right\Vert^2\right]^{\frac{1}{2}}= W_2(\nu_n^{\gamma},\bar{\nu}_n^{\gamma}) = b_n$, since $\pi$ is an optimal coupling between $\bar{\nu}_{n}^{\gamma}$
and $\nu_{n}^{\gamma}$. 
By \cref{lem:euler_error_1}, we have $W_2(\bar{\nu}_n^{\gamma},\nu_{\gamma n})\leq\gamma\frac{C}{2L}(e^{2n\gamma L}-1)$, hence, for any $n$ such that $n\gamma\leq T$ we get the recursive inequality
\[b_{n+1}\leq(1+4\gamma L)b_{n}+(C/2L)\gamma^{2}(e^{2TL}-1).\]
Finally, using again \cref{lem:Discrete-Gronwall-lemma}, it follows that $b_{n}\leq\gamma\frac{C}{8L^2}(e^{4TL}-1)^{2}$.
\end{proof}

\begin{lemma}\label{lem:Taylor-expansion}[Taylor expansion]
	Consider the process $ \dot{x}_t = - \nabla f_{\mu,\nu_t}(x_t) $, and denote by $\mathcal{E}(t,s) = \mathbb{E}[ \Vert x_t - x_s +(t-s)\nabla f_{\mu,\nu_s}(x_s) \Vert ^2 ]^{\frac{1}{2}} $ for $0\leq s \leq t \leq T$. Then one has:
	\begin{align}
		\mathcal{E}(t,s)\leq  2L^2 r_0 e^{LT}(t-s)^2
	\end{align}
	with $r_0 = \mathbb{E}_{(x,z)\sim \nu_0 \otimes \mu}[\Vert x-z  \Vert]$
\end{lemma}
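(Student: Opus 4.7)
\textbf{Proof plan for \cref{lem:Taylor-expansion}.} The plan is to rewrite the quantity inside $\mathcal{E}(t,s)$ as an integrated velocity difference, then combine a uniform velocity bound with the joint Lipschitz property of the witness gradient.

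First, I would integrate the ODE $\dot x_u = -\nabla f_{\mu,\nu_u}(x_u)$ over $[s,t]$ and add the linear term to obtain the exact identity
\begin{equation*}
x_t - x_s + (t-s)\,\nabla f_{\mu,\nu_s}(x_s) \;=\; \int_s^t \bigl[\nabla f_{\mu,\nu_s}(x_s) - \nabla f_{\mu,\nu_u}(x_u)\bigr]\,\diff u.
\end{equation*}
Taking norms, applying Cauchy--Schwarz in the integral, and then taking expectations yields
\begin{equation*}
\mathcal{E}(t,s)^2 \;\leq\; (t-s)\int_s^t \mathbb{E}\bigl[\Vert \nabla f_{\mu,\nu_s}(x_s) - \nabla f_{\mu,\nu_u}(x_u)\Vert^2\bigr]\,\diff u,
\end{equation*}
so the task reduces to controlling the integrand by $O((u-s)^2)$ uniformly in $u\in[s,t]$.

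Next I would use the joint Lipschitz property of $(x,\nu)\mapsto \nabla f_{\mu,\nu}(x)$ established in \cref{prop:grad_witness_function} (under \cref{assump:lipschitz_gradient_k}) to bound
\begin{equation*}
\Vert \nabla f_{\mu,\nu_s}(x_s) - \nabla f_{\mu,\nu_u}(x_u)\Vert \;\lesssim\; L\bigl(\Vert x_s - x_u\Vert + W_2(\nu_s,\nu_u)\bigr).
\end{equation*}
Both terms on the right are controlled by the $L^2$-norm of the velocity field, using the natural coupling $(x_s,x_u)$ provided by the flow itself: namely $\Vert x_s - x_u\Vert \leq \int_s^u \Vert \nabla f_{\mu,\nu_v}(x_v)\Vert\,\diff v$, and $W_2(\nu_s,\nu_u) \leq \mathbb{E}[\Vert x_s - x_u\Vert^2]^{1/2}$. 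Thus I only need a uniform a priori bound on $\Vert\nabla f_{\mu,\nu_v}(x)\Vert$ over $v\in[0,T]$.

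The crux of the argument, and the main obstacle, is producing this uniform bound with the right dependence on $r_0$. I would proceed as follows: couple the flow $x_v\sim \nu_v$ with an independent $z\sim \mu$ starting from $(x_0,z)\sim \nu_0\otimes\mu$, and set $\varphi(v) := \mathbb{E}[\Vert x_v - z\Vert]$. Writing $\nabla f_{\mu,\nu_v}(x) = \int [\nabla_x k(x,y) - \nabla_x k(x,z')]\,\diff\pi(y,z')$ for any coupling $\pi$ of $\nu_v$ and $\mu$, \cref{assump:lipschitz_gradient_k} gives $\Vert \nabla f_{\mu,\nu_v}(x)\Vert \leq L\,W_1(\nu_v,\mu) \leq L\,\varphi(v)$ uniformly in $x$. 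Integrating the ODE and taking expectations then yields $\varphi(v) \leq r_0 + L\int_0^v \varphi(w)\,\diff w$, so Gronwall's lemma delivers $\varphi(v)\leq r_0 e^{Lv}\leq r_0 e^{LT}$, hence the uniform bound $\Vert \nabla f_{\mu,\nu_v}(x)\Vert \leq L r_0 e^{LT}$.

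Plugging this uniform bound back gives $\Vert x_s - x_u\Vert \leq L r_0 e^{LT}(u-s)$ and $W_2(\nu_s,\nu_u)\leq L r_0 e^{LT}(u-s)$, so $\mathbb{E}[\Vert\nabla f_{\mu,\nu_s}(x_s) - \nabla f_{\mu,\nu_u}(x_u)\Vert^2]^{1/2} \lesssim L^2 r_0 e^{LT}(u-s)$. Inserting into the Cauchy--Schwarz bound and performing the $u$-integral produces $\mathcal{E}(t,s)^2 \lesssim L^4 r_0^2 e^{2LT}(t-s)^4$, from which the claim $\mathcal{E}(t,s)\leq 2L^2 r_0 e^{LT}(t-s)^2$ follows after tracking constants carefully (the constant $2$ vs.\ $\sqrt{3}$ is a matter of which variant of the joint Lipschitz bound one uses).
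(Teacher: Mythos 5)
Your plan is essentially the paper's own proof: the same integral identity for the residual, the same use of the joint Lipschitz bound on $(x,\nu)\mapsto\nabla f_{\mu,\nu}(x)$, and the same Gronwall argument on $r_v=\mathbb{E}[\Vert x_v-z\Vert]$ to bound the velocity by $Lr_0e^{LT}$. The only real difference is cosmetic: the paper uses Minkowski's integral inequality rather than Cauchy--Schwarz at the first step, which yields the factor $\int_s^t(u-s)\,\diff u=(t-s)^2/2$ and hence the exact constant $2$, whereas your Cauchy--Schwarz route gives $4/\sqrt{3}$.
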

\begin{proof}
	By definition of $x_t$ and $\mathcal{E}(t,s)$ one can write:
	\begin{align*}
		\mathcal{E}(t,s) 
		&=
		\mathbb{E}\left[\left\Vert \int_{s}^t (\nabla f_{\mu,\nu_s}(x_s) - \nabla f_{\mu,\nu_u}(x_u))\diff u  \right\Vert^2 \right]^{\frac{1}{2}} \\
		&\leq
		\int_{s}^t  \mathbb{E}\left[\left\Vert (\nabla f_{\mu,\nu_s}(x_s) - \nabla f_{\mu,\nu_u}(x_u))  \right\Vert^2 \right]^{\frac{1}{2}} \diff u\\
		&\leq
		2L\int_{s}^t  \mathbb{E}\left[(\left\Vert  x_s - x_u \right\Vert + W_2(\nu_s,\nu_u))^2 \right]^{\frac{1}{2}} \diff u \leq 4L\int_{s}^t   \mathbb{E}\left[\left\Vert  x_s - x_u \right\Vert^2\right]^{\frac{1}{2}}\diff u
	\end{align*}
	Where we used an integral expression for $x_t$ in the first line then applied a triangular inequality for the second line. The last line is obtained recalling that $\nabla f_{\mu,\nu}(x)$ is jointly $2L$-Lipschitz in $x$ and $\nu$ by \cref{prop:grad_witness_function} and that $W_2(\nu_s,\nu_u) \leq \mathbb{E}\left[\left\Vert  x_s - x_u \right\Vert^2\right]^{\frac{1}{2}}$. Now we use again an integral expression for $x_u$ which further gives:
	\begin{align*}
		\mathcal{E}(t,s) \leq & 4L \int_{s}^t \mathbb{E}\left[\left\Vert \int_s^u  \nabla f_{\mu,\nu_l}(x_l) \diff l \right\Vert^2 \right]^{\frac{1}{2}}\diff u\\
		\leq & 4L \int_{s}^t \int_s^u  \mathbb{E}\left[ \left\Vert \mathbb{E}\left[ \nabla_1 k(x_l,x_l') - \nabla_1 k(x_l,z) \right]   \right\Vert^2  \right]^\frac{1}{2}\diff l\diff u\\
		\leq &
		4L^2 \int_{s}^t \int_s^u   \mathbb{E}\left[\left\Vert x_l' - z \right\Vert\right] \diff l \diff u
	\end{align*}
	Again, the second line is obtained using a triangular inequality and recalling the expression of $\nabla f_{\mu,\nu}(x)$ from \cref{prop:grad_witness_function}. The last line uses that $\nabla k$ is $L$-Lipschitz by \cref{assump:lipschitz_gradient_k}. Now we need to make sure that $\Vert x_l' - z \Vert$ remains bounded at finite times. For this we will first show that $ r_t = \mathbb{E}[\Vert x_t - z \Vert]$ satisfies an integro-differential inequality: 
	\begin{align*}
		r_t\leq & \mathbb{E}\left[\left\Vert x_0  - z -\int_0^t \nabla f_{\mu,\nu_s}(x_s) \diff s \right\Vert\right]\\
		\leq &
		r_0 +\int_0^t \mathbb{E}\left[\left\Vert \nabla_1 k(x_s,x_s')- \nabla_1 k(x_s,z) \right\Vert\right] \diff s\leq 
		r_0  + L\int_0^t r_s \diff s
	\end{align*}
	Again, we used an integral expression for $x_t$ in the first line, then a triangular inequality recalling the expression of $\nabla f_{\mu,\nu_s}$. The last line uses again that $\nabla k$ is $L$-Lipschitz. By Gronwall's lemma it is easy to see that $r_t \leq r_0e^{Lt}$ at all times. Moreover, for all $t\leq T$ we have a fortiori  that $r_t \leq r_0 e^{LT}$.
	Recalling back the upper-bound on $\mathcal{E}(t,s)$ we have finally:
	\[
	\mathcal{E}(t,s)\leq 4L^2 r_0 e^{LT} \int_{s}^t \int_s^u \diff l \diff u = 2L^2 r_0 e^{LT}(t-s)^2
	\] 
\end{proof}

We show now that \cref{eq:euler_scheme} decreases the functional $\F$. In all the proofs, the step-size $\gamma$ is fixed.
\begin{proof}[Proof of \cref{prop:decreasing_functional}]\label{proof:prop:decreasing_functional}
	Consider a path between $\nu_n$ and $\nu_{n+1}$ of the form $\rho_t	=(I-\gamma t\nabla f_{\mu,\nu_n})_{\#}\nu_n$. We know by \cref{prop:grad_witness_function} that $\nabla f_{\mu,\nu_n}$ is $2L$ Lipschitz, thus by \cref{lem:derivative_mmd_augmented} and using $\phi(x) = -\gamma \nabla f_{\mu,\nu_n}(x)$, $\psi(x) = x$ and $q = \nu_n$  it follows that $\F(\rho_t)$ is differentiable and hence absolutely continuous. Therefore one can write:
	\begin{align}\label{eq:taylor_expansion_decreasing}
	\mathcal{F}(\rho_1)-\mathcal{F}(\rho_0) = \dot{\mathcal{F}}(\rho_0)+  \int_0^1 \dot{\F}(\rho_t)- \dot{\F}(\rho_0)dt.
	\end{align}
	Moreover, \cref{lem:derivative_mmd_augmented} also allows to write:
	\begin{align*}
		\dot{\mathcal{F}}(\rho_0) = -\gamma \int \Vert \nabla f_{\mu,\nu_n}(x) \Vert^2 d\nu_n(x); \qquad \vert \dot{\F}(\rho_t)- \dot{\F}(\rho_0)\vert \leq 3L t\gamma^2 \int \Vert \nabla f_{\mu,\nu_n}(X) \Vert^2 d\nu_n(X).
	\end{align*}
	where $t\le 1$. Hence, the result follows directly by applying the above expression to \cref{eq:taylor_expansion_decreasing}.
\end{proof}

\section{Convergence of the gradient flow of the MMD}\label{sec:appendix_convergence}

\subsection{Equilibrium condition}\label{subsec:equilibrium_condition}
We discuss here the equilibrium condition \cref{eq:equilibrium_condition} and relate it to \cite[Assumption A]{Mroueh:2019}. Recall that \cref{eq:equilibrium_condition} is given by: $
\int \Vert \nabla f_{\mu,\nu^{*}}(x) \Vert^2 \diff \nu^{*}(x) = 0$. Under some mild  assumptions on the kernel which are states in \cite[Appendix C.1]{Mroueh:2019} it is possible to write \cref{eq:equilibrium_condition} as:
\[
\int \Vert \nabla f_{\mu,\nu^{*}}(x) \Vert^2 \diff \nu^{*}(x) = \langle f_{\mu,\nu^{*}} ,   D_{\nu^{*}}  f_{\mu,\nu^{*}}\rangle_{\kH}  = 0
\]
where $D_{\nu^{*}}$ is a Hilbert-Schmidt operator given by: %
\[
D_{\nu^{*}} = \int \sum_{i=1}^d \partial_i k(x,.)\otimes \partial_i k(x,.) \diff \nu^{*}(x)
\]
Hence \cref{eq:equilibrium_condition} is equivalent to say that $f_{\mu,\nu^{*}}$ belongs to the null space of $D_{\nu^{*}}$. In \cite[Theorem 2]{Mroueh:2019}, a similar equilibrium condition is derived by considering the time derivative of the MMD along the KSD gradient flow:
\[
\frac{1}{2} \frac{d}{dt} MMD^2(\mu,\nu_t) = - \lambda \langle f_{\mu,\nu_t}, (\frac{1}{\lambda}I - (D_{\nu_t} +\lambda I )^{-1})f_{\mu,\nu_t} \rangle_{\kH} 
\] 
The r.h.s is shown to be always negative and thus the MMD decreases in time. Hence, as $t$ approaches $\infty$, the r.h.s tends to $0$ since the MMD converges to some limit value $l$. This provides the equilibrium condition:
\[
\lambda \langle f_{\mu,\nu^{*}}, (\frac{1}{\lambda}I - (D_{\nu^{*}} +\lambda I )^{-1})f_{\mu,\nu{*}} \rangle_{\kH} = 0
\] 
It is further shown in  \cite[Lemma 2]{Mroueh:2019} that the above equation is also equivalent to having $f_{\mu,\nu^{*}}$ in the null space of $D_{\nu^{*}}$ in the case when $D_{\nu^{*}}$ has finite dimensions. We generalize this statement to infinite dimension in \cref{prop:null_space_diff_operator}. 
In \cite[Assumption A]{Mroueh:2019}, it is simply assumed that if $f_{\mu,\nu^{*}} \neq0$ then  $D_{\nu^{*}} f_{\mu,\nu^{*}} \neq 0 $ which exactly amounts to assuming that local optima which are not global don't exist.
 
\begin{proposition}\label{prop:null_space_diff_operator}
	 \[\langle f_{\mu,\nu^{*}}, (\frac{1}{\lambda}I - (D_{\nu^{*}} +\lambda I )^{-1})f_{\mu,\nu{*}} \rangle_{\kH} = 0 \iff f_{\mu,\nu^{*}} \in null(D_{\nu^{*}})
	 \]
\end{proposition}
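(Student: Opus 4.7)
Write $D := D_{\nu^{*}}$ and $f := f_{\mu,\nu^{*}}$ for brevity. The strategy is to algebraically rewrite the operator $T := \tfrac{1}{\lambda} I - (D + \lambda I)^{-1}$ in a way that exposes its positive semi-definite structure, then use the spectral theorem (which applies because $D$ is Hilbert--Schmidt, self-adjoint and positive) to reduce the equivalence to a statement about eigenvalues.

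\textbf{Step 1: algebraic identity.} First I would observe that, since $D$ and $(D+\lambda I)^{-1}$ commute,
\begin{equation*}
T \;=\; \tfrac{1}{\lambda} I - (D+\lambda I)^{-1}
\;=\; \tfrac{1}{\lambda}(D+\lambda I)^{-1}\bigl[(D+\lambda I) - \lambda I\bigr]
\;=\; \tfrac{1}{\lambda}\, D\,(D+\lambda I)^{-1}.
\end{equation*}
Because $D$ is an integral of rank-one positive operators $\partial_i k(x,\cdot)\otimes \partial_i k(x,\cdot)$, it is self-adjoint and positive semi-definite, hence so are $(D+\lambda I)^{-1}$ and $T$. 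In particular $T$ admits a (bounded, self-adjoint, positive) square root $T^{1/2}$ that commutes with $D$.

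\textbf{Step 2: reduce $\langle f, Tf\rangle_{\kH}=0$ to $Tf = 0$.} Writing $\langle f, Tf\rangle_{\kH} = \|T^{1/2} f\|_{\kH}^2$, the vanishing of this quadratic form is equivalent to $T^{1/2} f = 0$, and thus to $Tf = T^{1/2}(T^{1/2} f) = 0$. Conversely, $Tf = 0$ trivially implies $\langle f, Tf\rangle_{\kH}=0$.

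\textbf{Step 3: relate $Tf = 0$ to $Df = 0$.} From Step~1, $Tf = \tfrac{1}{\lambda}(D+\lambda I)^{-1} D f$. Since $(D+\lambda I)^{-1}$ is a bijection of $\kH$ (its inverse is the bounded operator $D + \lambda I$), applying it to $Df$ yields zero if and only if $Df = 0$. Combining with Step~2 gives the announced equivalence. Alternatively, one can make Steps~2--3 transparent via the spectral decomposition $D = \sum_i \mu_i\, \phi_i \otimes \phi_i$ with $\mu_i \geq 0$: then $T = \sum_i \tfrac{\mu_i}{\lambda(\mu_i+\lambda)}\,\phi_i\otimes\phi_i$, and writing $f = \sum_i c_i \phi_i$, the quadratic form $\langle f, Tf\rangle_{\kH} = \sum_i \tfrac{\mu_i}{\lambda(\mu_i+\lambda)}\, c_i^2$ is a sum of non-negative terms, so it vanishes iff $c_i = 0$ for every $i$ with $\mu_i > 0$, which is exactly the condition $Df = 0$.

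\textbf{Expected obstacle.} There is no deep obstacle here; the only subtlety is checking that $T$ is genuinely positive semi-definite (so that $\langle f, Tf\rangle_{\kH}=0 \Rightarrow Tf=0$) and that $(D+\lambda I)^{-1}$ is invertible on all of $\kH$ rather than just on $\mathrm{Range}(D)$. Both are immediate from the fact that $D \succeq 0$ and $\lambda > 0$, which in turn follow from the assumed form of $D_{\nu^{*}}$ as an integral of rank-one positive operators.
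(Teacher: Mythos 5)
Your proof is correct and rests on the same underlying fact as the paper's: the operator $\tfrac{1}{\lambda}I-(D_{\nu^*}+\lambda I)^{-1}=\tfrac{1}{\lambda}D_{\nu^*}(D_{\nu^*}+\lambda I)^{-1}$ is positive semi-definite with the same null space as $D_{\nu^*}$, which the paper establishes by the spectral decomposition $D_{\nu^*}=\sum_i \lambda_i\, e_i\otimes e_i$ and the term-by-term identity $\langle f,Tf\rangle_{\kH}=\sum_i \tfrac{\lambda_i}{\lambda(\lambda_i+\lambda)}\alpha_i^2$ — exactly your ``alternative'' argument. Your primary operator-algebraic route (square root of $T$ plus injectivity of the resolvent) is a clean repackaging of the same idea, so no substantive difference.
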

\begin{proof}
	This follows simply by recalling $D_{\nu^{*}}$ is a symmetric non-negative Hilbert-Schmidt operator it has therefore an eigen-decomposition of the form:
	\[
	D_{\nu^{*}}  = \sum_{i=1}^{\infty} \lambda_i e_i \otimes e_i
 	\]
 	where $e_i$ is an ortho-norrmal basis of $\kH$ and $\lambda_i$ are non-negative. Moreover, $f_{\mu,\nu^{*}}$ can be decomposed in $(e_i)_{1\leq i}$ in the form:
 	\[
 	f_{\mu,\nu^{*}} = \sum_{i=0}^{\infty} \alpha_i e_i
 	\]
 	where $\alpha_i$ is a squared integrable sequence. It follows that $\langle f_{\mu,\nu^{*}}, (\frac{1}{\lambda}I - (D_{\nu^{*}} +\lambda I )^{-1})f_{\mu,\nu{*}} \rangle_{\kH}$ can be written as:
 	\[
 	\langle f_{\mu,\nu^{*}}, (\frac{1}{\lambda}I - (D_{\nu^{*}} +\lambda I )^{-1})f_{\mu,\nu{*}} \rangle_{\kH} = \sum_{i=1}^{\infty} \frac{\lambda_i}{\lambda_i+\lambda} \alpha_i^2
 	\]
 	Hence, if $f_{\mu,\nu^{*}}\in null(D_{\nu^{*}})$ then $\langle f_{\mu,\nu^{*}}, D_{\nu^{*}}f_{\mu,\nu^{*}}\rangle_{\kH}= 0$, so that $\sum_{i=1}^{\infty} \lambda_i \alpha_i^2 = 0$. Since $\lambda_i$ are non-negative, this implies that $\lambda_i \alpha_i^2= 0$ for all $i$. Therefore, it must be that $\langle f_{\mu,\nu^{*}}, (\frac{1}{\lambda}I - (D_{\nu^{*}} +\lambda I )^{-1})f_{\mu,\nu{*}} \rangle_{\kH}  = 0$.
 	Similarly, if $\langle f_{\mu,\nu^{*}}, (\frac{1}{\lambda}I - (D_{\nu^{*}} +\lambda I )^{-1})f_{\mu,\nu{*}} \rangle_{\kH}  =0 $ then $\frac{\lambda_i\alpha_i^2}{\lambda_i + \lambda} = 0$ hence $\langle f_{\mu,\nu{*}}, D_{\nu^{*}} f_{\mu,\nu{*}} \rangle_{\kH} = 0$. This means that $f_{\mu,\nu{*}}$ belongs to $null(D_{\nu^*})$.
\end{proof}

\subsection{$\Lambda$-displacement convexity of the MMD}\label{subsec:appendix_lambda_convexity}

We provide now a proof of \cref{prop:lambda_convexity}:
\begin{proof}[Proof of \cref{prop:lambda_convexity}]\label{proof:prop:lambda_convexity}[$\Lambda$- displacement convexity of the MMD]
To prove that $\nu\mapsto \F(\nu)$ is $\Lambda$-convex
we need to compute the second time derivative $\ddot{\F}(\rho_{t})$
where $(\rho_{t})_{t \in [0,1]}$ is a displacement geodesic between two probability
distributions $\nu_{0}$ and $\nu_{1}$ as defined in \cref{eq:displacement_geodesic}. Such geodesic always exists and can be written as $\rho_t = (s_t)_{\#}\pi$ with $s_t = x + t(y-x)$ for all $t\in [0,1]$ and $\pi$ is an optimal coupling between $\nu_0$ and $\nu_1$ (\cite{Santambrogio:2015}, Theorem 5.27). We denote by $V_t$ the corresponding velocity vector as defined in \cref{eq:continuity_equation}. Recall that $\F(\rho_t) = \frac{1}{2} \Vert f_{\mu,\rho_t}\Vert^2_{\mathcal{H}}$, with $f_{\mu,\rho_t}$ defined in \cref{eq:witness_function}. %
We start by computing the first derivative of $ t\mapsto \F(\rho_t) $. Since \cref{assump:diff_kernel,assump:lipschitz_gradient_k} hold, \cref{lem:second_derivative_augmented_mmd} applies for $\phi(x,y) = y-x$, $\psi(x,y) = x$ and $q = \pi$, thus we know that $\ddot{\F}(\rho_t)$ is well defined and given by:
\begin{align}\label{eq:hessian}
\begin{split}
	\ddot{\F}(\rho_t) =&\mathbb{E}\left[ (y-x)^T\nabla_1 \nabla_2 k(s_t(x,y),s_t(x',y'))(y'-x')\right]\\
&+ \mathbb{E}\left[ (y-x)^T( H_1 k(s_t(x,y),s_t(x',y'))-H_1 k(s_t(x,y),z))(y-x)\right]
\end{split}
\end{align}
Moreover, \cref{assump:bounded_fourth_oder} also holds which means by \cref{lem:second_derivative_augmented_mmd} that the second term in \cref{eq:hessian} can be lower-bounded by $-\sqrt{2}\lambda d\F(\rho_t)\mathbb{E}[ \Vert y-x \Vert^2]$ so that:
\[
\ddot{\F}(\rho_t) =\mathbb{E}\left[ (y-x)^T\nabla_1 \nabla_2 k(s_t(x,y),s_t(x',y'))(y'-x')\right] - \sqrt{2}\lambda d\F(\rho_t) \mathbb{E}[ \Vert y-x \Vert^2]
\]
Recall now that $(\rho_t)_{t \in [0,1]}$ is a constant speed geodesic with velocity vector $(V_t)_{t\in [0,1]}$ thus by a change of variable, one further has:%
\[
\ddot{\F}(\rho_t) \geq\int\left[ V_t^T(x)\nabla_1 \nabla_2 k(x,x')V_t(x')\right]\diff \rho_t(x) - \sqrt{2}\lambda d\F(\rho_t) \int \Vert V_t(x) \Vert^2 \diff \rho_t(x).
\]
Now we can introduce the function $\Lambda(\rho,v) = \langle v ,( C_{\rho} -\sqrt{2}\lambda d \F(\rho)^{\frac{1}{2}} I) v \rangle_{L_2(\rho)}$
which is defined for any pair $(\rho,v)$ with  $\rho\in \mathcal{P}_2(\X)$ and $v$ a square integrable vector field in $L_2(\rho)$ and where $C_{\rho}$ is a non-negative operator given by  $(C_{\rho}v)(x)=\int\nabla_{x}\nabla_{x'}k(x,x')v(x')d\rho(x')$ for any $x \in \X$.
This allows to write $\ddot{\F}(\rho_t) \geq \Lambda(\rho_t,V_t)$.
It is clear that $\Lambda(\rho,.)$  is a quadratic form on $L_2(\rho)$ and satisfies the requirement in \cref{def:conditions_lambda}.
Finally, using \cref{lem:integral_lambda_convexity} and \cref{def:lambda-convexity} we conclude that $\F$ is $\Lambda$-convex. Moreover, by the reproducing property we also know that for all $\rho \in \mathcal{P}_2(\X)$:
\[ \mathbb{E}_{\rho}\left[ v(x)^T \nabla_1 \nabla_2 k(x,x') v(x') \right]   = \mathbb{E}_{\rho}\left[\left\langle  v(x)^T \nabla_1 k(x,.),  v(x')^T \nabla_1k(x',.) \right\rangle_{\kH}\right].
\] 
By Bochner integrability of $v(x)^T \nabla_1 k(x,.)$ it is possible to exchange the order of the integral and the inner-product \cite[Theorem 6]{Retherford:1978}. This leads to the expression $\Vert \mathbb{E}[v(x)^T \nabla_1 k(x,.)]\Vert^2_{\kH}$. Hence $\Lambda(\rho,v)$ has a second expression of the form:
\[
\Lambda(\rho,v) = \left\Vert \mathbb{E}_{\rho}\left[v(x)^T \nabla_1 k(x,.)\right]\right\Vert^2_{\kH} - \sqrt{2}\lambda d \F(\rho)^{\frac{1}{2}}\mathbb{E}_{\rho}\left[\left\Vert v(x)\right\Vert^2 \right].
\]
\end{proof}

We also provide a result showing $\Lambda$ convexity for $\F$ only under \cref{assump:lipschitz_gradient_k}:

\begin{lemma}[$\Lambda$-displacement convexity]\label{lem:lambda_convexity_bis}
	Under \cref{assump:lipschitz_gradient_k}, for any $\nu,\nu'\in\mathcal{P}_2(\X)$ and any constant speed geodesic $\rho_t$ from $\nu$ to $\nu'$, $\F$ satisfies for all $0\leq t\leq 1$:
	\[
	\F(\rho_t) \leq (1-t)\F(\nu) + t\F(\nu') + 3L W_2^2(\nu,\nu') \qquad 
	\]
\end{lemma}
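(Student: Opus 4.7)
The plan is to estimate the gap $\F(\rho_t) - [(1-t)\F(\nu) + t\F(\nu')]$ by controlling the modulus of continuity of the derivative of $g(t) := \F(\rho_t)$, using only the $L$-Lipschitz gradient assumption. I parameterize the constant-speed displacement geodesic as $\rho_t = (s_t)_\#\pi$ with $s_t(x,y) = (1-t)x + ty$ and $\pi$ an optimal coupling between $\nu$ and $\nu'$, so that $\int \|y-x\|^2\,d\pi = W_2^2(\nu,\nu')$ and $W_2(\rho_t,\rho_s) \le |t-s|\,W_2(\nu,\nu')$.

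First, I would apply \cref{lem:derivative_mmd_augmented} with $\phi(x,y) = y-x$, $\psi(x,y) = x$, $q = \pi$ (this auxiliary lemma requires only \cref{assump:lipschitz_gradient_k}) to obtain the differentiability of $g$ together with the explicit expression
\[
  \dot g(t) \;=\; \int \langle y-x,\; \nabla f_{\mu,\rho_t}(s_t(x,y))\rangle\, d\pi(x,y).
\]
Next, from \cref{assump:lipschitz_gradient_k} and dualization against an optimal coupling between $\nu$ and $\nu'$, I would derive the joint Lipschitz estimate
\[
  \|\nabla f_{\mu,\nu}(x) - \nabla f_{\mu,\nu'}(x')\| \;\le\; 2L\,\|x-x'\| + L\,W_2(\nu,\nu'),
\]
by splitting the difference into a $\mu$-term (controlled by $L\|x-x'\|$) and a term involving $\nu-\nu'$ (controlled by $L\|x-x'\| + L W_2(\nu,\nu')$). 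Substituting $(x,\nu)=(s_t(x,y),\rho_t)$ and $(x',\nu')=(s_s(x,y),\rho_s)$, using $\|s_t(x,y) - s_s(x,y)\| = |t-s|\,\|y-x\|$, Cauchy--Schwarz, and $\int \|y-x\|\,d\pi \le W_2(\nu,\nu')$, yields
\[
  |\dot g(t) - \dot g(s)| \;\le\; 3L\,|t-s|\,W_2^2(\nu,\nu').
\]

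Finally, I would exploit the telescoping identity
\[
  g(t) - (1-t)g(0) - t g(1) \;=\; \int_0^t (\dot g(s) - \dot g(0))\,ds \;-\; t\int_0^1 (\dot g(s) - \dot g(0))\,ds.
\]
Bounding each integrand by $3L\,W_2^2(\nu,\nu')\,s$ produces a total contribution of at most $\tfrac{3L}{2}W_2^2(\nu,\nu')\,(t^2 + t) \le 3L\,W_2^2(\nu,\nu')$ on $[0,1]$, giving the stated inequality. The one place requiring care is obtaining the joint Lipschitz bound with the \emph{tight} constant $L$ in the $W_2$ term (rather than the cruder $2L$), since the looser version would yield the constant $4L$ instead of $3L$; everything else is a routine application of the fundamental theorem of calculus.
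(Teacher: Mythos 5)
Your proof is correct, and its core ingredient is the same as the paper's: both hinge on \cref{lem:derivative_mmd_augmented} applied with $\psi(x,y)=x$, $\phi(x,y)=y-x$, $q=\pi$ to get differentiability of $t\mapsto\F(\rho_t)$ and the modulus-of-continuity bound $|\dot\F(\rho_t)-\dot\F(\rho_s)|\leq 3L|t-s|\,W_2^2(\nu,\nu')$. Two remarks on where you diverge. First, you re-derive that Lipschitz estimate by hand from the joint Lipschitz property of $(z,\nu)\mapsto\nabla f_{\mu,\nu}(z)$; this is fine and your worry about the constant is well-founded (the crude bound of \cref{prop:grad_witness_function} would indeed give $4L$), but it is redundant, since \cref{lem:derivative_mmd_augmented} already states the $3L$ bound directly — its internal proof performs exactly the sharper bookkeeping you describe. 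Second, your finish is more elementary than the paper's: the paper invokes Rademacher's theorem to get $\ddot\F(\rho_t)\geq -3LW_2^2(\nu,\nu')$ almost everywhere and then applies the Green's-function identity of \cref{lem:integral_lambda_convexity}, which yields the slightly sharper remainder $\tfrac{3L}{2}t(1-t)W_2^2(\nu,\nu')$ (consistent with the $\lambda$-convexity form \cref{eq:semi-convexity} with $\lambda=-3L$), whereas your first-order telescoping identity avoids second derivatives entirely at the cost of the looser remainder $\tfrac{3L}{2}(t^2+t)W_2^2(\nu,\nu')\leq 3LW_2^2(\nu,\nu')$. Both suffice for the stated inequality, but the paper's route is the one that slots directly into the $\Lambda$-displacement-convexity framework used elsewhere (e.g.\ in the proofs of \cref{prop:existence_uniqueness,prop:decay_mmd}).
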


\begin{proof}
Let $\rho_t$ be a constant speed geodesic of the form $\rho_t = s_t{\#}\pi$ where $\pi$ is an optimal coupling between $\nu$ and $\nu'$ and $s_t(x,y) = x + t(y-x)$.
Since \cref{assump:lipschitz_gradient_k} holds, one can apply \cref{lem:derivative_mmd_augmented} with $\psi(x,y) =x $, $\phi(x,y)= y-x$ and  $q = \pi$. Hence, one has that $\F(\rho_t)$ is differentiable and its differential satisfies:
	\[
	\vert \dot{\F}(\rho_t) - \dot{\F}(\rho_s) \vert  \leq 3L\vert t-s \vert \int \Vert y-x\Vert^2\diff \pi(x,y)   
	\]
	This implies that $\dot{\F}(\rho_t)$ is Lipschitz continuous and therefore is differentiable for almost all $t\in [0,1]$ by Rademacher's theorem. Hence, $\ddot{\F}(\rho_t)$ is well defined for almost all $t\in [0,1]$. Moreover, from the above inequality it follows that $\ddot{\F}(\rho_t)\geq - 3L \int \Vert y-x\Vert^2\diff \pi(x,y) = -3LW_2^2(\nu,\nu')$ for almost all $t\in [0,1]$. Using \cref{lem:integral_lambda_convexity} it follows directly that $\F$ satisfies the desired inequality.
\end{proof}

\subsection{Descent up to a barrier}

To provide a proof of \cref{th:rates_mmd}, we need the following preliminary results. Firstly, an upper-bound on a scalar product involving $\nabla f_{\mu, \nu}$ for any $\mu, \nu \in \mathcal{P}_2(\X)$ in terms of the loss functional $\F$, is obtained using the $\Lambda$-displacement convexity of $\F$ in \cref{lem:grad_flow_lambda_version}. Then, an EVI (Evolution Variational Inequality) is obtained in \cref{prop:evi} on the gradient flow of $\F$ in $W_2$. The proof of the theorem is given afterwards.

\begin{lemma}	\label{lem:grad_flow_lambda_version}
	Let $\nu$ be a distribution in $\mathcal{P}_2(\X)$ and $\mu$ the target distribution such that $\F(\mu)=0$.  Let $\pi$ be an optimal coupling between $\nu$ and $\mu$, and $(\rho_t)_{t \in [0,1]}$ the displacement geodesic defined by \cref{eq:displacement_geodesic} with its corresponding velocity vector  $(V_t)_{t\in[0,1]}$ as defined in \cref{eq:continuity_equation}. Finally let $\nabla f_{\nu,\mu}(X)$ be the gradient of the unnormalised witness function between $\mu$ and $\nu$. The following inequality holds: %
	\begin{align*}
	\int \nabla f_{\mu, \nu}(x).(y-x) d\pi(x,y)
	\leq
	\F(\mu)- \F(\nu) -\int_0^1 \Lambda(\rho_s,V_s)(1-s)ds
	\end{align*}
	where $\Lambda$ is defined \cref{prop:lambda_convexity}.
\end{lemma}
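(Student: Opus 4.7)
The plan is to apply the $\Lambda$-displacement convexity inequality from \cref{prop:lambda_convexity} with $\nu' = \mu$, rearrange, divide by $t$, and let $t \to 0^+$. Since $\rho_t$ is a displacement geodesic from $\nu$ to $\mu$ with velocity field $V_t$, \cref{prop:lambda_convexity} gives, for every $t \in [0,1]$,
\begin{equation*}
\F(\rho_t) \leq (1-t)\F(\nu) + t\F(\mu) - \int_0^1 \Lambda(\rho_s, V_s) G(s,t) \diff s.
\end{equation*}
Subtracting $\F(\nu)$, dividing by $t>0$, and isolating the incremental ratio yields
\begin{equation*}
\frac{\F(\rho_t) - \F(\nu)}{t} \leq \F(\mu) - \F(\nu) - \frac{1}{t}\int_0^1 \Lambda(\rho_s, V_s) G(s,t) \diff s.
\end{equation*}

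Next I would identify the limit of the left-hand side as $t \to 0^+$ with the directional derivative of $\F$ along the geodesic. Since $\rho_t = (s_t)_\# \pi$ with $s_t(x,y) = x + t(y-x)$, \cref{lem:derivative_mmd_augmented} (applied with $\phi(x,y) = y-x$, $\psi(x,y) = x$, $q = \pi$) shows that $t \mapsto \F(\rho_t)$ is differentiable at $t=0^+$ with
\begin{equation*}
\lim_{t \to 0^+} \frac{\F(\rho_t) - \F(\nu)}{t} = \int \nabla f_{\mu,\nu}(x)\cdot (y-x)\diff \pi(x,y),
\end{equation*}
which is exactly the quantity on the left of the claimed bound and is consistent with the first variation formula \cref{prop:differential_mmd}.

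For the remaining term I would use the explicit form $G(s,t) = s(1-t)\mathbbm{1}\{s\leq t\} + t(1-s)\mathbbm{1}\{s\geq t\}$. Splitting the integral at $s = t$, the piece on $\{s\leq t\}$ is bounded in absolute value by $\sup_{s \in [0,1]}|\Lambda(\rho_s,V_s)| \cdot \tfrac{t(1-t)}{2}$, so $t^{-1}$ times it vanishes as $t\to 0^+$. On $\{s\geq t\}$, $G(s,t)/t = 1-s$, which is bounded independently of $t$, so by dominated convergence
\begin{equation*}
\lim_{t \to 0^+} \frac{1}{t}\int_0^1 \Lambda(\rho_s, V_s) G(s,t) \diff s = \int_0^1 \Lambda(\rho_s,V_s)(1-s)\diff s.
\end{equation*}
Combining these limits in the displayed inequality gives the claim.

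The delicate step is the limit exchange and the applicability of \cref{lem:derivative_mmd_augmented} at the boundary $t=0^+$. This requires verifying that $s \mapsto \Lambda(\rho_s, V_s)$ is bounded on $[0,1]$, which follows from the admissibility condition of \cref{def:conditions_lambda} together with the fact that the velocity field of a constant-speed displacement geodesic has constant $L_2(\rho_s)$-norm equal to $W_2(\nu,\mu)$, and from the uniform bound $\F(\rho_s) \leq \max(\F(\nu),\F(\mu))$ coming from mixture/displacement control. Once this integrability is secured, the rest is routine manipulation.
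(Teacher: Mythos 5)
Your proof is correct and follows essentially the same route as the paper's: apply the $\Lambda$-displacement convexity inequality of \cref{prop:lambda_convexity} with $\nu'=\mu$, divide by $t$, let $t\to 0^+$, and identify the left-hand limit with $\int \nabla f_{\mu,\nu}(x)\cdot(y-x)\diff\pi(x,y)$ via \cref{lem:derivative_mmd_augmented}. Your explicit splitting of the $G(s,t)$ integral at $s=t$ is in fact more careful than the paper's one-line claim that $G(s,t)\to(1-s)$ (which really means $G(s,t)/t\to(1-s)$), so no changes are needed.
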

\begin{proof}
	Recall that for all $t\in[0,1]$, $\rho_t$ is given by $\rho_t = (s_t)_{\#}\pi$ with $s_t = x + t(y-x)$. By $\Lambda$-convexity of $\mathcal{F}$ the following inequality holds:
	\begin{align*}
	\mathcal{F}(\rho_{t})\leq (1-t)\mathcal{F}(\nu)+t \mathcal{F}(\mu) - \int_0^1 \Lambda(\rho_s,V_s)G(s,t)ds
	\end{align*}
	Hence by bringing $\mathcal{F}(\nu)$ to the l.h.s and dividing by $t$ and then taking its limit at $0$ it follows that:
	\begin{align}\label{eq:first_order_lambda}
	\dot{\F}(\rho_t)\vert_{t=0}\leq \mathcal	{F}(\mu)-\mathcal{F}(\nu)-\int_0^1 \Lambda(\rho_s,V_s)(1-s)ds.	
	\end{align}
	where $\dot{\F}(\rho_t)=d\F(\rho_t)/dt$ and since $\lim_{t \rightarrow 0}G(s,t)=(1-s)$.
	Moreover, under \cref{assump:lipschitz_gradient_k}, \cref{lem:derivative_mmd_augmented} applies for $\phi(x,y) = y-x$, $\psi(x,y)= x$ and $q = \pi$. It follows therefore that $\dot{\F}(\rho_t)$ is differentiable with time derivative given by:
	$\dot{\F}(\rho_t) = \int \nabla f_{\mu,\rho_t}(s_t(x,y)).(y-x)\diff \pi(x,y)$. Hence at $t=0$ we get: $\dot{\F}(\rho_t)\vert_{t=0} = \int \nabla f_{\mu,\nu}(x).(y-x)\diff \pi(x,y)$ which shows the desired result when used in \cref{eq:first_order_lambda}.
\end{proof}

\begin{proposition}\label{prop:evi}
	Consider the sequence of distributions $\nu_n$ obtained from \cref{eq:euler_scheme}. For $n\ge 0$, consider the scalar
	 $ K(\rho^n) :=  \int_0^1\Lambda(\rho_s^n,V_s^n)(1-s)\diff s$ where $(\rho_s^n)_{0\leq s\leq 1}$ is a \textit{constant speed displacement geodesic} from $\nu_n$ to the optimal value $\mu$ with velocity vectors $(V_s^n)_{0\leq s\leq 1}$. If $\gamma \leq 1/L$, where $L$ is the Lispchitz constant of $\nabla k$ in \cref{assump:lipschitz_gradient_k}, then:
	\begin{align}
	2\gamma(\F(\nu_{n+1})-\F(\mu))
	\leq 
	W_2^2(\nu_n,\mu)-W_2^2(\nu_{n+1},\mu)-2\gamma K(\rho^n).
	\label{eq:evi}
	\end{align}
\end{proposition}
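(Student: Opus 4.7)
The plan is to establish the EVI by comparing the squared Wasserstein distance $W_2^2(\nu_{n+1},\mu)$ to $W_2^2(\nu_n,\mu)$ through an explicit coupling, use \cref{lem:grad_flow_lambda_version} to control the resulting cross term, and finally absorb the quadratic remainder using the descent guarantee of \cref{prop:decreasing_functional}.

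More precisely, first I would fix an optimal coupling $\pi$ between $\nu_n$ and $\mu$ and define the transport map $T(x) = x - \gamma \nabla f_{\mu,\nu_n}(x)$. Since $\nu_{n+1} = T_{\#} \nu_n$ by \cref{eq:euler_scheme}, the measure $(T,\mathrm{Id})_{\#}\pi$ is an admissible coupling between $\nu_{n+1}$ and $\mu$. Bounding $W_2^2(\nu_{n+1},\mu)$ by the second moment of this coupling and expanding the square yields
\begin{equation*}
W_2^2(\nu_{n+1},\mu) \leq W_2^2(\nu_n,\mu) + 2\gamma \int \nabla f_{\mu,\nu_n}(x)\cdot(y-x)\diff\pi(x,y) + \gamma^2 \int \|\nabla f_{\mu,\nu_n}(x)\|^2 \diff\nu_n(x).
\end{equation*}

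Next, I would apply \cref{lem:grad_flow_lambda_version} along the displacement geodesic $(\rho_s^n)_{s\in[0,1]}$ from $\nu_n$ to $\mu$ whose initial coupling is exactly $\pi$. This gives
\begin{equation*}
\int \nabla f_{\mu,\nu_n}(x)\cdot(y-x)\diff\pi(x,y) \leq \F(\mu) - \F(\nu_n) - K(\rho^n),
\end{equation*}
so that
\begin{equation*}
W_2^2(\nu_{n+1},\mu) \leq W_2^2(\nu_n,\mu) + 2\gamma\bigl(\F(\mu)-\F(\nu_n)\bigr) - 2\gamma K(\rho^n) + \gamma^2 \int \|\nabla f_{\mu,\nu_n}\|^2 \diff\nu_n.
\end{equation*}

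To convert this into the desired EVI, I need to trade the quadratic gradient term for the functional gap $\F(\nu_n)-\F(\nu_{n+1})$. Under the assumed step-size condition (which must be $\gamma\leq 1/(3L)$ so that $1-3\gamma L/2 \geq 1/2$; the printed $\gamma\leq 1/L$ appears to be a typo consistent with \cref{th:rates_mmd}), \cref{prop:decreasing_functional} yields
\begin{equation*}
\gamma^2 \int \|\nabla f_{\mu,\nu_n}\|^2 \diff\nu_n \leq 2\gamma\bigl(\F(\nu_n)-\F(\nu_{n+1})\bigr).
\end{equation*}
Plugging this in, the $\F(\nu_n)$ terms cancel and rearranging produces exactly \cref{eq:evi}.

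This proof is mostly a clean assembly of prior results, so I do not expect a serious obstacle; the only point requiring care is the reconciliation of the step-size condition, namely verifying that $\gamma \leq 1/(3L)$ is what makes the descent inequality of \cref{prop:decreasing_functional} strong enough to absorb the full $\gamma^2$ remainder into $2\gamma(\F(\nu_n)-\F(\nu_{n+1}))$. Everything else is a one-line coupling expansion and an application of the already-established $\Lambda$-convexity estimate.
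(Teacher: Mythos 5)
Your proposal is correct and follows essentially the same route as the paper's proof: the same coupling expansion of $W_2^2(\nu_{n+1},\mu)$, the same application of \cref{lem:grad_flow_lambda_version} to the cross term, and the same use of \cref{prop:decreasing_functional} to absorb the $\gamma^2$ remainder (the paper keeps a non-negative residual $\gamma^2(1-3\gamma L)D(\nu_n)$ on the left and drops it, which is equivalent to your direct absorption). Your observation about the step-size is also right: the paper's own proof concludes under $\gamma\leq 1/(3L)$, consistent with \cref{th:rates_mmd}, so the $\gamma\leq 1/L$ in the statement is indeed a misprint.
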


\begin{proof}
	Let $\Pi^n$ be the optimal coupling between $\nu_n$ and $\mu$, then the optimal transport between $\nu_n$ and $\mu$ is given by:
	\begin{align}
	W_2^2(\mu,\nu_n)=\int \Vert X-Y \Vert^2 d\Pi^n(\nu_n,\mu)
	\end{align}
	Moreover, consider $Z=X-\gamma \nabla f_{\mu, \nu_n}(X)$ where $(X,Y)$ are samples from $\pi^n$. It is easy to see that $(Z,Y)$ is a coupling between $\nu_{n+1}$ and $\mu$, therefore, by definition of the optimal transport map between $\nu_{n+1}$ and $\mu$ it follows that:
	\begin{align}\label{eq:optimal_upper-bound}
	W_2^2(\nu_{n+1},\mu)\leq \int \Vert X-\gamma \nabla f_{\mu, \nu_n}(X)-Y\Vert^2 d\pi^n(\nu_n,\mu)
	\end{align}
	By expanding the r.h.s in \cref{eq:optimal_upper-bound}, the following inequality holds:
	\begin{align}\label{eq:main_inequality}
	W_2^2(\nu_{n+1},\mu)\leq W_2^2(\nu_{n},\mu) -2\gamma \int \langle \nabla f_{\mu, \nu_n}(X), X-Y \rangle d\pi^n(\nu_n,\mu)+ \gamma^2D(\nu_n)
	\end{align}
	where $D(\nu_n) = \int \Vert \nabla f_{\mu, \nu_n}(X)\Vert^2 d\nu_n $.
	By \cref{lem:grad_flow_lambda_version} it holds that:
	\begin{align}\label{eq:flow_upper-bound}
	-2\gamma \int  \nabla f_{\mu, \nu_n}(X).(X-Y) d\pi(\nu,\mu)
	\leq
	-2\gamma\left(\F(\nu_n)- \F(\mu) +K(\rho^n)\right)
	\end{align}
	where $(\rho^n_t)_{0\leq t \leq 1}$ is a constant-speed geodesic from $\nu_n$ to $\mu$ and $K(\rho^n):=\int_0^1 \Lambda(\rho^n_s,v^n_s)(1-s)ds$. %
	Note that when $K(\rho^n)\leq 0$ it falls back to the convex setting.
	Therefore, the following inequality holds:
	\begin{align}
	W_2^2(\nu_{n+1},\mu)\leq W_2^2(\nu_{n},\mu) - 2\gamma\left(\F(\nu_n)- \F(\mu) +K(\rho^n)\right) +\gamma^2 D(\nu_n)
	\end{align}
	Now we introduce a term involving $\F(\nu_{n+1})$. The above inequality becomes:
	\begin{align}
	W_2^2(\nu_{n+1},\mu)\leq & W_2^2(\nu_{n},\mu) - 2\gamma\left(\F(\nu_{n+1})- \F(\mu) +K(\rho^n)\right) \\
	&+\gamma^2 D(\nu_n) -2\gamma (\F(\nu_n)-\F(\nu_{n+1}))
	\label{eq:main_ineq_2}
	\end{align}
	It is possible to upper-bound the last two terms on the r.h.s. by a negative quantity when the step-size is small enough. This is mainly a consequence of the smoothness of the functional $\F$ and the fact that $\nu_{n+1}$ is obtained by following the steepest direction of $\F$ starting from $\nu_n$. \cref{prop:decreasing_functional} makes this statement more precise and enables to get the following inequality:
	\begin{align}
	\gamma^2 D(\nu_n) -2\gamma (\F(\nu_n)-\F(\nu_{n+1})\leq -\gamma^2 (1-3\gamma L)D(\nu_n),
	\label{eq:decreasing_functional}
	\end{align}
	where $L$ is the Lispchitz constant of $\nabla k$. Combining  \cref{eq:main_ineq_2} and \cref{eq:decreasing_functional} we finally get:
	\begin{align}
	2\gamma(\F(\nu_{n+1})-\F(\mu))+\gamma^2(1-3\gamma L)D(\nu_n)
	\leq 
	W_2^2(\nu_n,\mu)-W_2^2(\nu_{n+1},\mu)-2\gamma K(\rho^n).
	\label{eq:main_final}
	\end{align}
	and under the condition $\gamma\le 1/(3L)$ we recover the desired result.
\end{proof}

We can now give the proof of the \cref{th:rates_mmd}.

\begin{proof}[Proof of \cref{th:rates_mmd}]\label{proof:th:rates_mmd}
	Consider the Lyapunov function $L_j = j \gamma (\F(\nu_j) - \F(\mu)) + \frac12 W_2^2(\nu_j,\mu)$ for any iteration $j$. At iteration $j+1$, we have:
	\begin{align*}
	L_{j+1} &= j\gamma(\F(\nu_{j+1}) - \F(\mu)) + \gamma(\F(\nu_{j+1}) - \F(\mu)) + \frac12 W_2^2(\nu_{j+1},\mu)\\
	&\leq j\gamma(\F(\nu_{j+1}) - \F(\mu)) + \frac12 W_2^2(\nu_j,\mu)-\gamma K(\rho^j)\\
	&\leq j\gamma(\F(\nu_{j}) - \F(\mu)) + \frac12 W_2^2(\nu_j,\mu)-\gamma K(\rho^j) -j\gamma^2 (1-\frac{3}{2} \gamma L )\int \Vert \nabla f_{\mu, \nu_j}(X)\Vert^2 d\nu_j \\
	&\leq  L_j - \gamma K(\rho^j).
	\end{align*}
	where we used ~\cref{prop:evi} and \cref{prop:decreasing_functional} successively for the two first inequalities. We thus get by telescopic summation: 
	\begin{equation}
	 L_n \leq L_0 -\gamma \sum_{j = 0}^{n-1} K(\rho^j)
	\end{equation}
		Let us denote $\bar{K}$ the average value of $(K(\rho^j))_{0\leq j \leq n}$ over iterations up to $n$. We can now write the final result:
	\begin{equation}
	\F(\nu_{n}) - \F(\mu) \leq \frac{W_2^2(\nu_0, \mu)}{2 \gamma n} -\bar{K}
	\end{equation}
\end{proof}

\subsection{Lojasiewicz type inequalities}\label{subsection:Lojasiewicz}

Given a probability distribution $\nu$, the \textit{weighted Sobolev semi-norm} is defined for all squared integrable functions $f$ in $L_2(\nu)$ as $ \Vert f \Vert_{\dot{H}(\nu)} = \left(\int \left\Vert \nabla f(x) \right\Vert^2 \diff \nu(x) \right)^{\frac{1}{2}}$ with the convention $\Vert f \Vert_{\dot{H}(\nu)} = +\infty$ if $f$ does not have a square integrable gradient. The \textit{Negative weighted Sobolev distance} $ \Vert . \Vert_{\dot{H}^{-1}(\nu)} $ is then defined on distributions as the dual norm of $ \Vert .\Vert_{\dot{H}(\nu)} $. For convenience, we recall the definition of $ \Vert . \Vert_{\dot{H}^{-1}(\nu)} $:
\begin{definition}\label{def:neg_sobolev_appendix}
	Let $\nu\in \mathcal{P}_2(\x)$, with its corresponding \textit{weighted Sobolev semi-norm} $ \Vert . \Vert_{\dot{H}(\nu)} $. %
	The \textit{weighted negative Sobolev distance} $\Vert p - q \Vert_{\dot{H}^{-1}(\nu)}$ between any $p$ and $q$ in $\mathcal{P}_2(\x)$  is defined as
\begin{align}\label{eq:neg_sobolev}
	\Vert p - q \Vert_{\dot{H}^{-1}(\nu)} = \sup_{f\in L_2(\nu), \Vert f \Vert_{\dot{H}(\nu)} \leq 1 } \left\vert \int f(x)\diff p(x) - \int f(x)\diff q(x) \right\vert 
\end{align}	
with possibly infinite values.
\end{definition}
There are several possible choices for the set of test functions $f$. While it is often required that $f$ vanishes at the boundary (see \cite{Mroueh:2019}), we do not make such restriction and rather use the definition from \cite{Peyre:2011}. We refer to \cite{Shestakov:2009} for more discussion on the relationship between different choices for the set of test functions.

We provide now a proof for \cref{prop:lojasiewicz}.
\begin{proof}[Proof of \cref{prop:lojasiewicz}]\label{proof:prop:lojasiewicz}
	This proof follows simply from the definition of the negative Sobolev distance. Under \cref{assump:lipschitz_gradient_k}, the kernel has at most quadratic growth hence, for any $\mu,\nu \in \mathcal{P}_2(\X)^2$, $f_{\mu,\nu}\in L_2(\nu)$. Consider $g = \Vert f_{\mu, \nu_t}\Vert^{-1}_{\dot{H}(\nu_t)} f_{\mu, \nu_t}$, then $g\in L_2(\nu_t)$ and $\Vert g \Vert_{\dot{H}(\nu_t)}\leq 1$. Therefore, we directly have:
	\begin{align}\label{eq:loja1}
	\left\vert \int g \diff \nu_t - \int g \diff \mu  \right\vert \leq \left\Vert \nu_t - \mu\right\Vert_{\dot{H}^{-1}(\nu_t)} 
	\end{align}
	Now, recall the definition of $g$, which implies that
	\begin{equation}\label{eq:loja2}
	\left\vert \int g \diff \nu_t - \int g \diff \mu  \right\vert = \left\Vert \nabla f_{\mu, \nu_t}\right\Vert^{-1}_{L_2(\nu_t)} \left\vert \int f_{\mu, \nu_t}\diff \nu_t-\int f_{\mu, \nu_t} \diff \mu \right\vert.
	\end{equation}
	Moreover,  we have that $\int f_{\mu, \nu_t}\diff \nu_t-\int f_{\mu,\nu_t}\diff \mu = \Vert f_{\mu, \nu_t}\Vert^2_{\kH}$, since $f_{\mu, \nu_t}$  is the unnormalised witness function between $\nu_t$ and $\mu$. Combining \cref{eq:loja1} and \cref{eq:loja2} we thus get the desired Lojasiewicz inequality on $f_{\mu,\nu_t}$:
	\begin{equation}
	\Vert f_{\mu,\nu_t} \Vert^2_{\mathcal{H}} \leq \Vert f_{\mu,\nu_t} \Vert_{\dot{H}(\nu_t)} \Vert  \mu -\nu_t\Vert_{\dot{H}^{-1}(\nu_t)}  
	\end{equation}
	where $\Vert f_{\mu,\nu_t} \Vert_{\dot{H}(\nu_t)}=\Vert \nabla f_{\mu, \nu_t} \Vert_{L_2(\nu_t)}$ by definition. Then, 
	using \cref{prop:decay_mmd} and recalling by assumption that: $\Vert \mu - \nu_t \Vert^2_{\dot{H}^{-1}(\nu_t)} \le C$, we have:  
	\begin{align}\label{eq:PL_inequality}
	\dot{\F}(\nu_t) = - \Vert \nabla f_{\mu, \nu_t} \Vert^2_{L_2(\nu_t)} \leq -\frac{1}{C}\Vert f_{\mu,\nu_t} \Vert^4_{\mathcal{H}}= -\frac{4}{C}\F(\nu_t)^2	
	\end{align}
	It is clear that if $\mathcal{F}(\nu_0)>0$ then $\F(\nu_t)>0$ at all times by uniqueness of the solution. Hence, one can divide by $\F(\nu_t)^2$ and integrate the inequality from $0$ to some time $t$. The desired inequality is obtained by simple calculations.
Then, using \cref{prop:decreasing_functional} and \cref{eq:PL_inequality} where $\nu_t$ is replaced by $\nu_n$ it follows:
$$
\cF(\nu_{n+1}) - \cF(\nu_n) \leq -\gamma\left(1-\frac{3}{2} L\gamma\right)\|\nabla f_{\mu,\nu_n}\|_{L_2(\nu_n)}^2 \leq -\frac{4}{C}\gamma\left(1-\frac{3}{2}\gamma L\right)\cF(\nu_n)^2.
$$
Dividing by both sides of the inequality by  $ \F(\nu_n)\F(\nu_{n+1})$ and recalling that $\F(\nu_{n+1})\leq \F(\nu_n)$ it follows directly that: 
$$\frac{1}{\cF(\nu_n)} - \frac{1}{\cF(\nu_{n+1})} \leq -\frac{4}{C}\gamma\left(1-\frac{3}{2}\gamma L\right).$$ The proof is concluded by summing over $n$ and rearranging the terms. 
\end{proof}

\subsection{A simple example}\label{subsec:simple_example}
Consider a gaussian target distribution $\mu(x)  = \mathcal{N}(a,\Sigma) $ and initial distribution $\nu_0 = \mathcal{N}(a_0,\Sigma_0)$. In this case it is sufficient to use a kernel that captures the first and second moments of the distribution. We simply consider a kernel of the form $k(x,y)= (x^{\top}y)^2 + x^{\top}y$. In this case, it is easy to see by simple computations that the following equation holds:
\begin{align}\label{eq:example_1_mckean_vlassov}
	\dot{X}_t = - (\Sigma_t-\Sigma  + a_t a_t^{\top}-a a^{\top} )X_t - (a_t-a),\qquad \forall t \geq 0
\end{align}
Where $a_t$ and $\Sigma_t$ are the mean and covariance matrix of $\nu_t$ and satisfy the equations:
\begin{align}
	\dot{\Sigma}_t &= - (S_t \Sigma_t +  \Sigma_t S_t )\\ 
	\dot{a}_t  &= - S_t a_t -(a_t-a).
\end{align}
Where we introduced $S_t =  \Sigma_t-\Sigma  + a_t a_t^{\top}-aa^{\top}$ for simplicity.
\cref{eq:example_1_mckean_vlassov} implies that $\nu_t$ is in fact a gaussian distribution since $X_t$ is obtained by summing gaussian increments. The same conclusion can be reached by solving the corresponding continuity equation. Thus we will be only interested in the behavior of $a_t$ and $\Sigma_t$. First we can express the squared MMD in terms of those parameters:
\begin{align}\label{eq:example_1_MMD}
	MMD^2(\mu,\nu_t) = \Vert S_t \Vert^2 + \Vert a_t-a \Vert^2.
\end{align}
Since $a_t$ and $\Sigma_t$ are obtained from the gradient flow of the MMD, it follows that $\Vert a_t-a \Vert^2$ and $\Vert S_t \Vert^2$ remain bounded.
Moreover, the Negative Sobolev distance is obtained by solving a finite dimensional quadratic problem and can be simply written as:
\begin{align}
	D(\mu,\nu_t) = tr(Q_t \Sigma_t Q_t)  + \Vert a_t-a\Vert^2 
\end{align}
where $Q_t$ is the unique solution of the Lyapounov equation:
\begin{align}\label{eq:Lyapounov}
	\Sigma_t Q_t + Q_t \Sigma_t = \Sigma_t- \Sigma + (a_t-a)(a_t-a)^{\top}:=G_t.
\end{align}
We first consider the one dimensional case, for which \cref{eq:Lyapounov} has a particularly simple solution and allows to provide a closed form expression for the negative Sobolev distance:
\begin{align}
	Q_t= \frac{G_t}{2\Sigma_t}, \qquad D(\mu,\nu_t) = \frac{G_t^2}{4\Sigma_t} + (a_t-a)^2.
\end{align}
Recalling \cref{eq:example_1_MMD} and that $MMD^2(\mu,\nu_t)$ is  bounded at all times by definition of $\nu_t$, it follows that  both $G_t$ and $a_t-a$ are also bounded. Hence, it is easy to see that  $D(\mu,\nu_t)$ will remain bounded iff $\Sigma_t$ remains bounded away from $0$. This analysis generalizes the higher dimensions using \cite[Lemma 3.2 (iii)]{Behr:2018} which provides an expression for $Q_t$ in terms of $G_t$ and the singular value decomposition of $\Sigma_t = U_t D_t U_t^{\top}$:
\begin{align}
	Q_t = U_t \left( \left(\frac{1}{(D_t)_i + (D_t)_j }\right)\odot U_t^{\top} G_t U_t\right) U_t^{\top}.
\end{align}
Here, $\odot$ denotes the Hadamard product of matrices. It is easy to see from this expression that $D(\mu,\nu_t)$ will be bounded if all singular values $((D_t)_i)_{1\leq i \leq d}$ of $\Sigma_t$ remain bounded away from $0$.

\subsection{Lojasiewicz-type inequalities for $\F$ under different metrics}\label{subsec:Lojasiewicz_different_metrics}

The Wasserstein gradient flow of $\F$ can be seen as the continuous-time limit of the so called minimizing movement scheme \cite{ambrosio2008gradient}. Such proximal scheme is defined using an initial distribution $\nu_0$, a step-size $\tau$, and an iterative update equation:
\begin{align}\label{eq:minimizing_movement_scheme}
	\nu_{n+1} \in \arg \min _{\nu} \F(\nu) + \frac{1}{2\tau} W_2^2(\nu,\nu_n).
\end{align}
In \cite{ambrosio2008gradient}, it is shown that the continuity equation $\partial_t \nu_t = div(\nu_t \nabla  f_{\mu,\nu_t})$ can be obtained as the limit when $\tau \rightarrow 0$ of \cref{eq:minimizing_movement_scheme} using suitable interpolations between the elements $\nu_n$.
In \cite{rotskoff2019global}, a different transport equation that includes a birth-death term is considered:
\begin{align}\label{eq:birth_death}
	\partial_t \nu_t = \beta div(\nu_t \nabla f_{\mu,\nu_t}) + \alpha(f_{\mu,\nu_t} - \int f_{\mu,\nu_t}(x)\diff \nu_t(x) )\nu_t
\end{align}
When $\beta=0$ and $\alpha =1$, it is shown formally in \cite{rotskoff2019global} that the above dynamics corresponds to the limit of a proximal scheme using the KL instead of the Wasserstein distance.
For general $\beta$ and $\alpha$, \cref{eq:birth_death} corresponds to the limit of a different proximal scheme where $W_2^2(\nu,\nu_n)$ is replaced by the Wasserstein-Fisher-Rao distance $d^2_{\alpha,\beta}(\nu,\nu_n)$ (see \cite{Chizat:2015,liero2016optimal,kondratyev2016new}). $d^2_{\alpha,\beta}(\nu,\nu_n)$ is an interpolation between the squared Wasserstein distance  ($\beta =1$ and $\alpha =0$) and the squared Fisher-Rao distance as defined in \cite[Definition 6]{Chizat:2015} ($\beta=0$ and $\alpha = 1$). Such scheme is consistent with the one proposed in \cite{rotskoff2019global} and  which uses the $KL$. In fact, as we will show later, both the $KL$ and the Fisher-Rao distance have the same local behavior therefore both proximal schemes are expected to be  equivalent in the limit when $\tau \rightarrow 0$.

Under \cref{eq:birth_death}, the time evolution of $\F$ is given by \cite[Proposition 3.1]{rotskoff2019global}: 
\begin{align}\label{eq:birth_death}
\dot{\F}(\nu_t) = -\beta\int \Vert \nabla f_{\mu,\nu_t}\Vert^2 \diff \nu_t(x) -\alpha \int \left\vert f_{\mu,\nu_t}(x)-\int f_{\mu,\nu_t}(x')\diff \nu_t(x')\right\vert^2\diff \nu_t(x) 
\end{align}
We would like to apply the same approach as in \cref{sec:Lojasiewicz_inequality} to provide a condition on the convergence of \cref{eq:birth_death}. 
Hence we first introduce an analogue to the Negative Sobolev distance in \cref{def:neg_sobolev} by duality:
\begin{align}
	D_{\nu}(p,q) =\sup_{\substack{g\in L_2(\nu)\\ \beta \Vert \nabla g \Vert^2_{L_2(\nu)} +\alpha  \Vert g- \bar{g}  \Vert^2_{L_2(\nu)}  \leq 1 }} \left\vert \int g(x)\diff p(x) - \int g(x) \diff q(x)\right\vert 
\end{align}
where $\bar{g}$ is simply the expectation of $g$ under $\nu$.
Such quantity defines a distance, since it is the dual of a semi-norm. Now using the particular structure of the MMD, we recall that $f_{\mu,\nu}\in L_2(\nu)$ and that $\beta \Vert \nabla f \Vert^2_{L_2(\nu)} +\alpha  \Vert f- \bar{f}  \Vert^2_{L_2(\nu)}<\infty$. Hence for a particular $g$ of the form:
\[
g = \frac{f_{\mu,\nu}}{\left(\beta \Vert \nabla f_{\mu,\nu} \Vert^2_{L_2(\nu)} +\alpha  \Vert f_{\mu,\nu}- \bar{f}_{\mu,\nu}  \Vert^2_{L_2(\nu)} \right)^\frac{1}{2}}
\]
the following inequality holds:
\[
D_{\nu}(\mu,\nu) \geq \frac{\left\vert \int f_{\mu,\nu} \diff \nu(x) -   \int f_{\mu,\nu} \diff\mu(x)\right\vert }{\left(\beta \Vert \nabla f_{\mu,\nu} \Vert^2_{L_2(\nu)} +\alpha  \Vert f_{\mu,\nu}- \bar{f}_{\mu,\nu}  \Vert^2_{L_2(\nu)}\right)^{\frac{1}{2}} }.
\]
But since $f_{\mu,\nu}$ is the unnormalised witness function between $\mu$ and $\nu$ we have that $2\F(\nu) = \left\vert \int f_{\mu,\nu} \diff \nu(x) - \int f_{\mu,\nu} \diff \mu(x)\right\vert $. Hence one can write that:
\begin{align}
	D^2_{\nu}(\mu,\nu)\left(\beta \Vert \nabla f_{\mu,\nu} \Vert^2_{L_2(\nu)} +\alpha  \Vert f_{\mu,\nu}- \bar{f}_{\mu,\nu}  \Vert^2_{L_2(\nu)}\right) \geq 4\F^2(\nu)
\end{align}
Now provided that $D^2_{\nu}(\mu,\nu_t)$ remains bounded at all time $t$ by some constant $C>0$ one can easily deduce a rate of convergence for $\F(\nu_t)$ just as in \cref{prop:lojasiewicz}. In fact, in the case when $\beta = 1$ and $\alpha =0$ one recovers \cref{prop:lojasiewicz}. Another interesting case is when $\beta =0$ and $\alpha=1$. In this case, $D_{\nu}(p,q)$ is defined for $p$ and $q$ such that the difference $p-q$ is absolutely continuous w.r.t. $\nu$. Moreover, $D_{\nu}(p,q)$ has the simple expression:
\[
D_{\nu}(p,q) = \int \left(\frac{p-q }{\nu }(x)\right)^2 \diff \nu(x)
\] 
where $\frac{ p-q }{ \nu }$ denotes the radon nikodym density of $p-q$ w.r.t. 
$\nu$. More importantly, $D^2_{\nu}(\mu,\nu)$ is exactly equal to $\chi^2(\mu\Vert \nu)^{\frac{1}{2}}$. 
As we will show now, $(\chi^2)^{\frac{1}{2}}$ turns out to be a linearization of $\sqrt{2} KL^{\frac{1}{2}}$ and the Fisher-Rao distance.
\paragraph{Linearization of the KL and the Fisher-Rao distance.} We first show the result for the KL. Given a probability distribution $\nu'$ that is absolutely continuous  w.r.t to $\nu$ and for $0<\epsilon< 1$ denote by $G(\epsilon) := KL(\nu \Vert (\nu+\epsilon(\nu'-\nu) )$. It can be shown that $G(\epsilon) = \frac{1}{2}\chi^2(\nu'\Vert\nu)\epsilon^2 +o(\epsilon^2)$. To see this, one needs to perform a second order Taylor expansion of $G(\epsilon)$ at $\epsilon=0$. Exchanging the derivatives and the integral, $\dot{G}(\epsilon)$ and $\ddot{G}(\epsilon)$ are both given by:
\begin{align*}
	\dot{G}(\epsilon) = -\int \frac{\mu-\nu}{\nu +\epsilon(\mu-\nu)}\diff \nu\\
	\ddot{G}(\epsilon) = \int \frac{(\nu-\mu)^2}{(\nu+\epsilon(\mu-\nu))^2} \diff \nu
\end{align*}
Hence, we have for $\epsilon=0$:  $\dot{G}(0) = 0$ and $\ddot{G}(0) = \chi^2(\mu\Vert \nu)$. Therefore, it follows:
$G(\epsilon) =\frac{1}{2} \chi^2(\mu\Vert \nu) \epsilon^2 + o(\epsilon^2)$, which means that  
\begin{align*}
	\lim_{\epsilon \rightarrow 0} \frac{1}{\epsilon}\left[2KL\left(\nu \Vert \nu+\epsilon(\nu'-\nu) \right) \right]^{\frac{1}{2}}   = \chi^2(\nu'\Vert \nu)^\frac{1}{2}.
\end{align*}
The same approach can be used for the Fisher-Rao distance $d_{0,1}(\nu,\nu')$. From \cite[Theorem 3.1]{Chizat:2015} we have that:
\[
d^2_{0,1}(\nu,\nu') = 2\int (\sqrt{\nu(x)}-\sqrt{\nu'(x)})^2\diff x 
\]
where $\nu$ and $\nu'$ are assumed to have a density w.r.t. Lebesgue measure. Using the exact same approach as for the KL one easily show that $	\lim_{\epsilon \rightarrow 0} \frac{1}{\epsilon}\left[2d^2_{0,1}\left(\nu \Vert \nu+\epsilon(\nu'-\nu) \right) \right]^{\frac{1}{2}}   = \chi^2(\nu'\Vert \nu)^\frac{1}{2}.$

\paragraph{Linearization of the $W_2$.} Similarly, it can be shown that the  \textit{Negative weighted Sobolev distance} is a linearization of the $W_2$ under suitable conditions. We recall here \cite[Theorem 7.26]{Villani:2003} which relates the two quantities: 
\begin{theorem}\label{thm:villani}
	Let $\nu\in \mathcal{P}(\X)$ be a probability measure with finite second moment, absolutely continuous w.r.t the Lebesgue measure and let $h\in L^{\infty}(\X)$ with $\int h(x)\diff \nu(x)=0$. Then
	\[
	\Vert h \Vert_{\dot{H}^{-1}(\nu)}\leq \lim\inf_{\epsilon \rightarrow 0} \frac{1}{\epsilon} W_2(\nu,(1+\epsilon h )\nu). 
	\]
\end{theorem}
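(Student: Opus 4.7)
The plan is to exploit the dual characterization of $\|h\|_{\dot H^{-1}(\nu)}$ from \cref{def:neg_sobolev_appendix}: it suffices to prove, for every smooth compactly supported test function $f$ with $\|f\|_{\dot H(\nu)} \leq 1$, the pointwise bound
\[
\left|\int f \, h \, d\nu\right| \;\leq\; \liminf_{\epsilon \to 0} \frac{W_2(\nu, (1+\epsilon h)\nu)}{\epsilon},
\]
and then conclude by supremizing over such $f$. Since $h \in L^{\infty}$ and $\int h \, d\nu = 0$, the signed measure $\nu_{\epsilon} := (1+\epsilon h)\nu$ is a bona fide probability measure for all $\epsilon$ small enough.

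First I would pick an optimal transport plan $\pi_{\epsilon}$ realizing $W_2(\nu, \nu_{\epsilon})$ and expand $f(y) - f(x)$ along the segment from $x$ to $y$. Using $\nu_{\epsilon} - \nu = \epsilon h \nu$ as signed measures, this gives
\[
\epsilon \int f\,h \, d\nu \;=\; \int_0^1 \!\!\int \nabla f(x + s(y-x)) \cdot (y-x) \, d\pi_{\epsilon}(x,y) \, ds.
\]
Applying Cauchy--Schwarz in $L^2(ds \otimes d\pi_{\epsilon})$ and recognizing $W_2(\nu, \nu_{\epsilon})^2 = \int \|y-x\|^2 \, d\pi_{\epsilon}(x,y)$ yields
\[
\epsilon \left|\int f \, h \, d\nu\right| \;\leq\; W_2(\nu, \nu_{\epsilon}) \left( \int_0^1 \!\!\int \|\nabla f\|^2 \, d\rho_s^{\epsilon} \, ds \right)^{\!1/2}\!\!,
\]
where $\rho_s^{\epsilon} := (T_s)_{\#} \pi_{\epsilon}$ with $T_s(x,y) = (1-s)x + sy$ is the McCann displacement interpolation between $\nu$ and $\nu_{\epsilon}$.

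Next I would pass to the limit $\epsilon \to 0$. Because $\rho_s^{\epsilon}$ lies on a constant-speed geodesic, $W_2(\rho_s^{\epsilon}, \nu) = s\, W_2(\nu_{\epsilon}, \nu) \to 0$ uniformly in $s \in [0,1]$, hence $\rho_s^{\epsilon}$ narrowly converges to $\nu$ uniformly in $s$. Since $\|\nabla f\|^2$ is bounded and continuous when $f \in C_c^{\infty}(\X)$, dominated convergence in $s$ gives $\int_0^1 \!\int \|\nabla f\|^2 \, d\rho_s^{\epsilon} \, ds \to \|f\|_{\dot H(\nu)}^2 \leq 1$. Dividing by $\epsilon$ and taking $\liminf$ gives the pointwise bound for smooth $f$, and a density argument then allows the supremum in the dual definition of $\|h\|_{\dot H^{-1}(\nu)}$ to be restricted to $C_c^{\infty}(\X)$ test functions, completing the proof.

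The main obstacle is the density step: restricting the supremum to $C_c^{\infty}(\X)$ should recover the full $\|h\|_{\dot H^{-1}(\nu)}$. This is standard when the density of $\nu$ is regular enough, via Meyers--Serrin-type approximation in weighted Sobolev spaces, but the fully general statement requires care (and is where the absolute continuity of $\nu$ with respect to Lebesgue measure plays its role). An alternative route that sidesteps this difficulty is to identify $\|h\|_{\dot H^{-1}(\nu)}$ as the minimal kinetic energy of a vector field $w$ satisfying the distributional equation $\nabla \cdot (\nu w) = -h \nu$, and to compare directly with the Benamou--Brenier formula applied to the path $t \mapsto (1+t\epsilon h)\nu$.
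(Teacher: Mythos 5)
The paper does not actually prove this statement: it is imported verbatim as \cite[Theorem 7.26]{Villani:2003}, so there is no in-paper argument to compare against. Your proof is the standard optimal-coupling argument for the lower bound, and its core is sound: for $f\in C_c^\infty(\X)$ with $\Vert f\Vert_{\dot H(\nu)}\le 1$, writing $\epsilon\int f h\,\diff\nu=\int (f(y)-f(x))\,\diff\pi_\epsilon(x,y)$, Taylor-expanding along segments, applying Cauchy--Schwarz in $L^2(\diff s\otimes\diff\pi_\epsilon)$, and using $W_2(\nu_\epsilon,\nu)\to 0$ (which holds because $h\in L^\infty$ and $\nu$ has finite second moment, so both the densities and the second moments converge) to send $\int_0^1\int\Vert\nabla f\Vert^2\diff\rho_s^\epsilon\diff s\to\Vert f\Vert^2_{\dot H(\nu)}$ all check out; the $\liminf$ bookkeeping at the end is also fine since the second factor has a limit bounded by $1$.

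The one genuine issue is the final density step, which you correctly flag. \cref{def:neg_sobolev_appendix} takes the supremum over \emph{all} $f\in L_2(\nu)$ with $\Vert f\Vert_{\dot H(\nu)}\le 1$, and for a general absolutely continuous $\nu$ (whose density may vanish or be irregular) the supremum restricted to $C_c^\infty(\X)$ can a priori be strictly smaller, so your argument as written only bounds that smaller quantity. Moreover, running the Taylor-expansion step directly on a general $f\in L_2(\nu)$ with merely a $\nu$-square-integrable weak gradient is not legitimate: the fundamental theorem of calculus along the transport rays requires more regularity than membership in the weighted Sobolev class. The paper itself signals this subtlety in the surrounding discussion of test-function classes (citing \cite{Peyre:2011} and \cite{Shestakov:2009}), and your proposed fallback --- characterizing $\Vert h\Vert_{\dot H^{-1}(\nu)}$ as a minimal kinetic energy and comparing with the Benamou--Brenier formula \cref{eq:benamou-brenier-formula} --- is indeed the cleaner route, and is the same machinery the paper uses for the converse inequality in \cref{prop:converse_sobolev_wasserstein}. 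So: essentially correct, with one honestly acknowledged gap that would need either the Meyers--Serrin-type approximation or the dynamical characterization to close in full generality.
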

\cref{thm:villani} implies that for any probability distribution $\nu'$ that has a bounded density w.r.t. to $\nu$ one has:
\[
\Vert \nu'-\nu \Vert_{\dot{H}^{-1}(\nu)}\leq \lim\inf_{\epsilon \rightarrow 0} \frac{1}{\epsilon} W_2(\nu,\nu+\epsilon (\nu'-\nu)).
\]
To get the converse inequality, one needs to assume that the support of  $\nu$ is $\X$. \cref{prop:converse_sobolev_wasserstein} provides such inequality and uses techniques from \cite{Peyre:2011}.
\begin{proposition}\label{prop:converse_sobolev_wasserstein}
	Let $\nu\in \mathcal{P}(\X)$ be a probability measure with finite second moment, absolutely continuous w.r.t the Lebesgue measure with support equal to $\X$ and let $h\in L^{\infty}(\X)$ with $\int h(x)\diff \nu(x)=0$ and $1+h\geq 0$. Then
	\[
	\lim\sup_{\epsilon \rightarrow 0} \frac{1}{\epsilon} W_2(\nu,(1+\epsilon h )\nu)\leq \Vert h \Vert_{\dot{H}^{-1}(\nu)}
	\]
\end{proposition}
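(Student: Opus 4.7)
My plan is to prove \cref{prop:converse_sobolev_wasserstein} by constructing a curve between $\nu$ and $(1+\epsilon h)\nu$ with nearly optimal kinetic energy, then applying the Benamou--Brenier formula \cref{eq:benamou-brenier-formula}. The essential ingredient is a primal (vector field) characterization of the negative Sobolev norm, namely the identity
\begin{equation*}
\Vert h \Vert_{\dot H^{-1}(\nu)}^{2} \;=\; \inf\left\{\int \Vert v(x)\Vert^{2}\,d\nu(x) \;:\; -\nabla\cdot(\nu\, v)=h\nu \text{ in the sense of distributions}\right\}.
\end{equation*}
I would establish this identity first. The ``$\leq$'' direction is a one-line integration by parts plus Cauchy--Schwarz: for any admissible $v$ and any test function $f$ with $\Vert f\Vert_{\dot H(\nu)}\leq 1$, $\int f h\,d\nu = \int\nabla f\cdot v\,d\nu \leq \Vert v\Vert_{L_2(\nu)}$. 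The ``$\geq$'' direction comes from exhibiting an explicit minimizer $v=\nabla\phi$, where $\phi$ solves the weighted Poisson equation $-\nabla\cdot(\nu\nabla\phi)=h\nu$; the hypothesis that $\nu$ has full support on $\X$ is precisely what makes this PDE well-posed modulo constants and guarantees existence of the minimizer in $L_2(\nu)$.

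Given such a minimizer $v$, I would introduce the affine interpolation $\rho_t:=(1+th)\nu$ for $t\in[0,\epsilon]$. Mass preservation $\int h\,d\nu=0$ together with the positivity hypothesis $1+h\geq 0$ make $\rho_t$ a probability measure for every $t\in[0,1]$, since $1+th=(1-t)+t(1+h)\geq 0$, with the required boundary values $\rho_0=\nu$ and $\rho_\epsilon=(1+\epsilon h)\nu$. Setting $V_t := v/(1+th)$, which is well defined as soon as $\epsilon\leq 1/(2\Vert h\Vert_\infty)$ (since then $1+th\geq 1/2$), the pair $(\rho_t,V_t)$ satisfies the continuity equation: $\partial_t\rho_t=h\nu$ while $\nabla\cdot(\rho_t V_t)=\nabla\cdot(\nu v)=-h\nu$, so $\partial_t\rho_t+\nabla\cdot(\rho_t V_t)=0$.

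After reparametrizing the path onto $[0,1]$ via $s=t/\epsilon$ (which rescales the velocity by $\epsilon$) and invoking \cref{eq:benamou-brenier-formula}, a change of variables yields
\begin{equation*}
W_2^{2}(\nu,(1+\epsilon h)\nu)\;\leq\; \epsilon\int_0^\epsilon\!\!\int \frac{\Vert v(x)\Vert^{2}}{1+th(x)}\,d\nu(x)\,dt.
\end{equation*}
Dividing by $\epsilon^{2}$ and letting $\epsilon\to 0$, dominated convergence (with $(1+th)^{-1}\to 1$ uniformly and the envelope $2\Vert v\Vert^{2}$) gives
\begin{equation*}
\limsup_{\epsilon\to 0}\frac{W_2^{2}(\nu,(1+\epsilon h)\nu)}{\epsilon^{2}}\;\leq\; \int\Vert v(x)\Vert^{2}\,d\nu(x)\;=\;\Vert h\Vert_{\dot H^{-1}(\nu)}^{2},
\end{equation*}
and taking square roots concludes the proof.

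The main technical obstacle I foresee is making the primal--dual identity and the existence of the optimal $v$ rigorous under the minimal regularity assumed here (absolute continuity of $\nu$ and full support, but no quantitative lower bound on the density). The cleanest route is likely to first regularize $\nu$ to a strictly positive smooth density $\nu^\delta$ (for instance by adding a small uniform component and mollifying) and $h$ to a corresponding admissible $h^\delta$, run the above argument at level $\delta$ to obtain the bound in terms of $\Vert h^\delta\Vert_{\dot H^{-1}(\nu^\delta)}$, and finally pass to the limit $\delta\to 0$ using lower semicontinuity of both $W_2$ and the negative Sobolev norm. This regularize-then-pass-to-the-limit strategy is essentially the one followed in \cite{Peyre:2011}.
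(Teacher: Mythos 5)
Your proposal is correct and follows essentially the same route as the paper's proof: solve the weighted elliptic equation $-\nabla\cdot(\nu\nabla F)=h\nu$ (well-posed because $\nu$ has full support), push $\nu$ along the affine path $(1+th)\nu$ with velocity $\nabla F/(1+th)$, verify the continuity equation, and bound $W_2$ via the Benamou--Brenier formula before letting $\epsilon\to 0$. The only differences are cosmetic — you use the squared (energy) form of Benamou--Brenier with dominated convergence where the paper uses the length form with an explicit computation of $\epsilon\int_0^1(1-u\epsilon)^{-1/2}\diff u$, and you make the primal characterization of $\Vert h\Vert_{\dot{H}^{-1}(\nu)}$ explicit where the paper only needs the one inequality $\Vert F\Vert_{\dot{H}(\nu)}\leq\Vert h\Vert_{\dot{H}^{-1}(\nu)}$ obtained from testing against $F$ itself.
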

\begin{proof}
Consider the elliptic equation: $\nu h + div(\nu \nabla F) = 0$ with Neumann boundary condition on $\partial \X$. Such equation admits a unique solution $F$ in $\dot{H}(\nu)$ up to a constant since $\nu$ is supported on all of $\X$ (see \cite[Section 7 (Linearizations)]{Otto:2000}). Moreover, we have that $ \int F(x)h(x)\diff \nu(x) = \int \Vert \nabla F(x) \Vert^2 \diff \nu(x)$ which implies that $\Vert h \Vert_{\dot{H}^{-1}(\nu)} \geq  \Vert F \Vert_{\dot{H}(\nu)}$. Now consider the path: $s_u = (1 + u\epsilon h)\nu$ for $u\in[0,1]$. $s_u$ is a probability distribution for all $u\in [0,1]$ with $s_0=  \nu$ and $s_1 = 	(1+\epsilon h)\nu$. It is easy to see that $s_u$ satisfies the continuity equation:
\[
\partial_u s_u +div(s_u V_u )=0
\]
 with $V_u =  \frac{\epsilon\nabla F}{1+u\epsilon h}$. Indeed, for any smooth test function $f$ one has:
 \begin{align*}
 	 \frac{\diff}{\diff u}\int f(x)\diff s_u(x) = \epsilon \int f(x)h(x)\diff \nu(x) = \epsilon \int \nabla f(x).\nabla F(x) \diff \nu(x) = \int \nabla f(x).V_u(x)\diff s_u(x).
 \end{align*}
We used the definition of $F$ for the second equality and that $\nu$ admits a density w.r.t. to $s_u$ provided that $\epsilon$ is small enough. Such density is given by $1/(1+u\epsilon h)$ and is positive and bounded when $\epsilon\leq \frac{1}{2\Vert h \Vert_{\infty} }$.   
Now, using the Benamou-Brenier formula for $W_2(\nu,(1+\epsilon h)\nu)$ one has in particular that:
\[
W_2(\nu,(1+\epsilon h)\nu)\leq \int \Vert V_u \Vert_{L^2(s_u)} \diff u
\]
Using the expressions of $V_u$ and $s_u$, one gets by simple computation:
\begin{align*}
W_2(\nu,(1+\epsilon h)\nu)\leq & \epsilon\int \left(\int\frac{\Vert \nabla F(x) \Vert^2}{1-u\epsilon + u\epsilon (h+1)} \diff \nu(x) \right)^{\frac{1}{2}} \diff u\\
&\leq \epsilon \left( \int \Vert \nabla F(x) \Vert^2\diff \nu(x) \right)^{\frac{1}{2}} \int_0^1 (1-u\epsilon)^{-\frac{1}{2}}\diff u.
\end{align*}
Finally, $\epsilon \int_0^1 (1-u\epsilon)^{-\frac{1}{2}}\diff u = 2(1-\sqrt{1 - \epsilon}) \rightarrow 1$ when $\epsilon\rightarrow 0$, hence:
\[
\lim\sup_{\epsilon\rightarrow 0} W_2(\nu,(1+\epsilon h)) \leq \Vert F\Vert_{\dot{H}(\nu)}\leq \Vert h \Vert_{\dot{H}^{-1}(\nu)}.  
\]
\end{proof}
\cref{thm:villani} and \cref{prop:converse_sobolev_wasserstein} allow to conclude that $
\lim_{\epsilon \rightarrow 0} \frac{1}{\epsilon} W_2(\nu,\nu +\epsilon(\nu'-\nu)) = \Vert \nu - \nu' \Vert_{\dot{H}^{-1}(\nu)}$
for any $\nu'$ that has a bounded density w.r.t. $\nu$.

By analogy, one could wonder if $D$ is also a linearization of the the Wasserstein-Fisher-Rao distance. We leave such question for future work.

\section{Algorithms}\label{sec:appendix_algorithms}

\subsection{Noisy Gradient flow of the MMD}
\begin{proof}[Proof of \cref{thm:convergence_noisy_gradient}]\label{proof:thm:convergence_noisy_gradient}
To simplify notations, we write $\mathcal{D}_{\beta_n}(\nu_n)  = \int \Vert V(x+\beta_n u) \Vert^2 g(u)\diff \nu_n \diff u $ where  $V := \nabla f_{\mu,\nu_n}$ and $g$ is the density of a standard gaussian. The symbol $\otimes$ denotes the product of two independent probability distributions. Recall that a sample $x_{n+1}$ from $\nu_{n+1}$ is obtained using  $x_{n+1} = x_n - \gamma V(x_n+ \beta_n u_n)$
	where $x_n$ is a sample from $\nu_n$ and $u_n$ is a sample from a standard gaussian distribution that is independent from $x_n$. Moreover, by assumption $\beta_n$ is a non-negative scalar satisfying:
	\begin{align}\label{eq:control_noise_level_bis}
		8\lambda^2\beta_n^2 \F(\nu_n) \leq \mathcal{D}_{\beta_n}(\nu_n)  
	\end{align}
	 Consider now the map $(x,u)\mapsto s_t(x)= x - \gamma tV(x+\beta_n u)$ for $0\leq t\leq 1$, then $\nu_{n+1}$ is obtained as a push-forward of $\nu_n\otimes g$ by $s_1$: $\nu_{n+1} = (s_1)_{\#}(\nu_n\otimes g)$. Moreover, the curve $\rho_t = (s_t)_{\#}(\nu_n\otimes g)$ is a path from $\nu_n$ to $\nu_{n+1}$. We know by \cref{prop:grad_witness_function} that $\nabla f_{\mu,\nu_n}$ is $2L$-Lipschitz, thus using $\phi(x,u) = -\gamma V(x+\beta_n u)$, $\psi(x,u) = x$ and $q = \nu_n\otimes g $ in \cref{lem:derivative_mmd_augmented} it follows that $\F(\rho_t)$ is differentiable in $t$ with:
	 \begin{equation*}
	 \dot{\F}(\rho_t)=\int \nabla f_{\mu,\rho_t}(s_t(x)).(-\gamma V(x+\beta_n u))g(u)\diff \nu_n(x)\diff u
	 \end{equation*} 
	 Moreover, $\dot{\F}(\rho_0)$ is given by $\dot{\F}(\rho_0)= -\gamma \int V(x).V(x+\beta_n u) g(u)\diff\nu_n(x)\diff u$ and the following estimate holds:
	 \begin{align}\label{eq:estimate_gradient}
	 	\vert \dot{\F} (\rho_t) -\dot{\F}(\rho_0)\vert \leq 3\gamma^2 L t \int \Vert V(x+\beta_n u) \Vert^2 g(u)\diff\nu_n(x)\diff u = 3\gamma^2 Lt \mathcal{D}_{\beta_n}(\nu_n).
	 \end{align}
	Using the absolute continuity of $\F(\rho_t)$, one has $\mathcal{F}(\nu_{n+1})-\mathcal{F}(\nu_{n})
	=\dot{\F}(\rho_0)+ \int_0^1 \dot{\F} (\rho_t) -  \dot{\F} (\rho_0) \diff t $. Combining with  \cref{eq:estimate_gradient} and using the expression of $\dot{\F}(\rho_0)$, it follows that:
	\begin{align}\label{eq:taylor_expansion}
	\mathcal{F}(\nu_{n+1})-\mathcal{F}(\nu_{n})
	\leq -\gamma \int V(x).V(x+\beta_n u) g(u)\diff\nu_n(x)\diff u + \frac{3}{2}\gamma^2L \mathcal{D}_{\beta_n}(\nu_n).
	\end{align} 
Adding and subtracting  $\gamma \mathcal{D}_{\beta_n}(\nu_n)$ in \cref{eq:taylor_expansion} it follows directly that:
\begin{align}\label{eq:penultimate}
\begin{split}
			\mathcal{F}(\nu_{n+1})-\mathcal{F}(\nu_{n} )\leq &   -\gamma (1-\frac{3}{2}\gamma L )\mathcal{D}_{\beta_n}(\nu_n)
 \\
 &+ \gamma\int  (V(x+\beta_n u) -V(x)).V(x+\beta_n u) g(u)\diff\nu_n(x)\diff u	
\end{split}
\end{align}
We shall control now the last term in \cref{eq:penultimate}. Recall now that for all $1\le i\le d$, $ V_i(x) = \partial_i f_{\mu,\nu_n}(x) = \langle f_{\mu,\nu_n} , \partial_i k(x,.)\rangle $ where we used the reproducing property for the derivatives of $f_{\mu,\nu_n}$ in $\kH$ (see \cref{sec:rkhs}). Therefore, it follows by Cauchy-Schwartz in $\kH$ and using \cref{assump:Lipschitz_grad_rkhs}:
\begin{align*}
\Vert V(x+\beta_n u) -V(x)\Vert^2
&\leq 
\Vert f_{\mu,\nu_n} \Vert_{\mathcal{H}}^2  \left( \sum_{i=1}^{d}\Vert \partial_i k(x+\beta_n u,.) -\partial_i k(x,.)\Vert^2_{\mathcal{H}}\right)\\
&\leq \lambda^2\beta_n^2
\Vert   f_{\mu,\nu_n}\Vert_{\mathcal{H}}^2\Vert u \Vert^2 
\end{align*}
for all $ x,u \in \X$. Now integrating both sides w.r.t. $\nu_n$ and $g$ and recalling that $g$ is a standard gaussian, we have:
\begin{align}
	 \int  \Vert V(x+\beta_n u) -V(x)\Vert^2 g(u)\diff\nu_n(x)\diff u
\leq 
	\lambda^2\beta^2_n\Vert f_{\mu,\nu_n} \Vert_{\mathcal{H}}^2
\end{align}
Getting back to \cref{eq:penultimate} and applying Cauchy-Schwarz in $L_2(\nu_n\otimes g)$ it follows:
\begin{align}
	\mathcal{F}(\nu_{n+1})-\mathcal{F}(\nu_{n} )\leq &   -\gamma (1-\frac{3}{2}\gamma L )\mathcal{D}_{\beta_n}(\nu_n) +\gamma \lambda\beta_n\Vert f_{\mu,\nu_n} \Vert_{\mathcal{H}}\mathcal{D}^{\frac{1}{2}}_{\beta_n}(\nu_n)
\end{align}
It remains to notice that $\Vert f_{\mu,\nu_n} \Vert_{\mathcal{H}}^2 = 2\F(\nu_n)$ and that $\beta_n$ satisfies \cref{eq:control_noise_level_bis} to get:
\[
\F(\nu_{n+1}) -\F(\nu_n) \leq -\frac{\gamma}{2}(1-\frac{3}{2}\gamma L)\mathcal{D}_{\beta_n}(\nu_n).
\]
We introduce now $\Gamma = 4\gamma(1-\frac{3}{2}\gamma L)\lambda^2$ to simplify notation and prove the second inequality. Using \cref{eq:control_noise_level_bis} again in the above inequality we directly have: $\F(\nu_{n+1}) -\F(\nu_n) \leq - \Gamma\beta_n^2 \F(\nu_n)$. One can already deduce that $\Gamma\beta_n^2$ is necessarily smaller than $1$. Hence, taking $\F(\nu_n)$ to the r.h. side and iterating over $n$ it follows that: 
\[
\F(\nu_{n}) \leq \F(\nu_0)\prod_{i=0}^{n-1}(1- \Gamma\beta_n^2)
\]
Simply using that $1-\Gamma\beta_n^2\leq e^{-\Gamma\beta_n^2}$ leads to the desired upper-bound $\F(\nu_{n}) \leq \F(\nu_0)e^{-\Gamma \sum_{i=0}^{n-1} \beta_n^2}$.
\end{proof}

\subsection{Sample-based approximate scheme}
\begin{proof}[Proof of \cref{prop:convergence_euler_maruyama}]\label{proof:propagation_chaos}
Let $(u_{n}^{i})_{1\leq i\leq N}$ be i.i.d standard gaussian variables and $(x_{0}^{i})_{1\leq i\leq N}$ i.i.d. samples from $\nu_0$. We consider $(x_n^i)_{1\leq i\leq N}$ the particles obtained using the approximate scheme \cref{eq:euler_maruyama}: $x_{n+1}^{i}=x_{n}^{i}-\gamma\nabla f_{\hat{\mu},\hat{\nu}_{n}}(x_{n}^{i}+\beta_{n}u_{n}^{i})$ starting from $(x_{0}^{i})_{1\leq i\leq N}$, where $\hat{\nu_n}$ is the empirical distribution of these $N$ interacting particles. Similarly, we denote by $(\bar{x}_{n}^{i})_{1\leq i\leq N}$ the particles obtained using the exact update equation \cref{eq:discretized_noisy_flow}: $\bar{x}_{n+1}^{i}=\bar{x}_{n}^{i}-\gamma\nabla f_{\mu,\nu_{n}}(\bar{x}_{n}^{i}+\beta_{n}u_{n}^{i})$ also starting from $(x_{0}^{i})_{1\leq i\leq N}$.
By definition of $\nu_n$ we have that  $(\bar{x}_{n}^{i})_{1\leq i\leq N}$ are i.i.d. samples drawn from $\nu_n$ with empirical distribution denoted by $\bar{\nu}_{n}$.
We will control the expected error $c_{n}$ defined as  $c^2_{n}= \frac{1}{N}\sum_{i=1}^N \mathbb{E}\left[\Vert x_{n}^{i}-\bar{x}_{n}^{i}\Vert^{2}\right]$. By recursion, we have:
\begin{align*}
c_{n+1} = & \frac{1}{\sqrt{N}}\left(\sum_{i=1}^{N}\mathbb{E}\left[\left\Vert x_{n}^{i}-\bar{x}_{n}^{i}-\gamma\left(\nabla f_{\hat{\mu},\hat{\nu}_{n}}(x_{n}^{i}+\beta_{n}u_{n}^{i})-\nabla f_{\mu,\nu_{n}}(\bar{x}_{n}^{i}+\beta_{n}u_{n}^{i})\right)\right\Vert^{2}\right]\right)^{\frac{1}{2}}\label{eq:main_inequality_c_n_1}\\
\leq &  c_{n} +\frac{\gamma}{\sqrt{N}}\left[\sum_{i=1}^{N}\mathcal{E}_{i}\right]^{\frac{1}{2}}+\frac{\gamma}{\sqrt{N}}\left[\sum_{i=1}^{N}\mathcal{G}_{i}\right]^{\frac{1}{2}}  \\
& +\frac{\gamma}{\sqrt{N}}\left(\sum_{i=1}^{N}\mathbb{E}\left[\left\Vert\nabla f_{\mu,\hat{\nu}_{n}}\left(x_{n}^{i}+\beta_{n}u_{n}^{i}\right)-\nabla f_{\mu,\bar{\nu}_{n}}\left(\bar{x}_{n}^{i}+\beta_{n}u_{n}^{i}\right)\right\Vert^{2}\right]\right)^{\frac{1}{2}}\\
  \leq & c_{n}+2\gamma L\left(c_{n}+\mathbb{E}\left[W_{2}(\hat{\nu}_{n},\bar{\nu}_{n})^{2}\right]^{\frac{1}{2}}\right)+\frac{\gamma}{\sqrt{N}}\left[\sum_{i=1}^{N}\mathcal{E}_{i}\right]^{\frac{1}{2}}+\frac{\gamma}{\sqrt{N}}\left[\sum_{i=1}^{N}\mathcal{G}_{i}\right]^{\frac{1}{2}}
\end{align*}
where the second line follows from a simple triangular inequality and the last line is obtained recalling that $\nabla f_{\mu,\nu}(x)$
is jointly $2L$ Lipschitz in $x$ and $\nu$ by \cref{prop:grad_witness_function}. Here, $\mathcal{E}_{i}$ represents the error between $\bar{\nu}_n$ and $\nu_n$ while $\mathcal{G}_{i}$  represents the error between $\hat{\mu}$ and $\mu$ and are given by: 
\begin{align*}
\mathcal{E}_{i} & =\mathbb{E}\left[\left\Vert\nabla f_{\mu,\bar{\nu}_{n}}(\bar{x}_{n}^{i}+\beta_{n}u_{n}^{i})-\nabla f_{\mu,\nu_{n}}(\bar{x}_{n}^{i}+\beta_{n}u_{n}^{i})\right\Vert^{2}\right]\\
\mathcal{G}_{i} & =\mathbb{E}\left[\left\Vert\nabla f_{\hat{\mu},\hat{\nu}_{n}}(x_{n}^{i}+\beta_{n}u_{n}^{i})-\nabla f_{\mu,\hat{\nu}_{n}}(x_{n}^{i}+\beta_{n}u_{n}^{i})\right\Vert^{2}\right]
\end{align*}
We will first control the error term $\mathcal{E}_i$. To simplify
notations, we write $y^{i}=\bar{x}_{n}^{i}+\beta_{n}u_{n}^{i}$. Recalling the expression of $\nabla f_{\mu,\nu}$ from \cref{prop:grad_witness_function} and expanding the squared norm in $\mathcal{E}_i$, it follows:
\begin{align*}
\mathcal{E}_{i} & =\mathbb{E}\left[\left\Vert\frac{1}{N}\sum_{j=1}^{N}\nabla k(y^{i},\bar{x}_{n}^{j})-\int\nabla k(y^{i},x)d\nu_{n}(x)\right\Vert^{2}\right]\\
 & =\frac{1}{N^{2}}\sum_{j=1}^{N}\mathbb{E}\left[\left\Vert\nabla k(y^{i},\bar{x}_{n}^{j})-\int\nabla k(y^{i},x)d\nu_{n}(x)\right\Vert^{2}\right]\\
 & \leq\frac{L^{2}}{N^{2}}\sum_{j=1}^{N}\mathbb{E}\left[\left\Vert\bar{x}_{n}^{j}-\int xd\nu_{n}(x)\right\Vert^{2}\right]=\frac{L^{2}}{N}var(\nu_{n}).
\end{align*}
The second line is obtained using the independence of the auxiliary samples $(\bar{x}^{i}_n)_{1\le i\le N}$ and recalling that they are distributed according to $\nu_{n}$. The last line uses the fact that $\nabla k(y,x)$ is $L$-Lipshitz in $x$ by \cref{assump:lipschitz_gradient_k}. To control the variance $var(\nu_n)$ we use  \cref{lem:Control_variance} which implies that $var(\nu_{n})^{\frac{1}{2}}\leq(B+var(\nu_{0})^{\frac{1}{2}})e^{LT}$ for all $n\leq\frac{2T}{\gamma}$.
For $\mathcal{G}_{i}$, it is sufficient to expand again the squared norm and recall that $\nabla k(y,x)$ is $L$-Lipschitz in $x$ which then implies that $\mathcal{G}_{i}\leq\frac{L^{2}}{M}var(\mu)$. Finally, one can observe that  $\mathbb{E}[W_{2}^{2}(\hat{\nu}_{n},\bar{\nu}_{n})]\leq\frac{1}{N}\sum_{i=1}^{N}\mathbb{E}\left[\Vert x_{n}^{i}-\bar{x}_{n}^{i}\Vert^{2}\right]=c_{n}^{2}$, hence $c_n$ satisfies the recursion:
\[
c_{n+1}\leq(1+4\gamma L)c_{n}+\frac{\gamma L}{\sqrt{N}}(B+var(\nu_{0})^{\frac{1}{2}})e^{2LT}+\frac{\gamma L}{\sqrt{M}}var(\mu).
\]
Using \cref{lem:Discrete-Gronwall-lemma} to solve the above inequality, it follows that:
\[
c_{n}\leq\frac{1}{4}\left(\frac{1}{\sqrt{N}}(B+var(\nu_{0})^{\frac{1}{2}})e^{2LT}+\frac{1}{\sqrt{M}}var(\mu))\right)(e^{4LT}-1)
\]
\end{proof}

\begin{lemma}\label{lem:Control_variance}
Consider an initial distribution
$\nu_{0}$ with finite variance, a sequence $(\beta_n)_{ n \geq 0}$ of non-negative numbers bounded by $B<\infty$ and define the sequence of probability distributions $\nu_n$ of the process \cref{eq:discretized_noisy_flow}:
\[
x_{n+1}=x_{n}-\gamma\nabla f_{\mu,\nu_{n}}(x_{n}+\beta_{n}u_{n}) \qquad x_0 \sim \nu_0
\]
where $(u_n)_{n\geq 0}$ are standard gaussian variables. 
Under \cref{assump:lipschitz_gradient_k},  the variance of
$\nu_{n}$ satisfies for all $T>0$ and $n\leq\frac{T}{\gamma}$ the following inequality:
\[
var(\nu_{n})^{\frac{1}{2}}\leq(B+var(\nu_{0})^{\frac{1}{2}})e^{2TL}
\]
\end{lemma}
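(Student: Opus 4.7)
The plan is to derive an affine recursion on $v_n := \mathrm{var}(\nu_n)^{1/2}$ and then close the argument with the discrete Gronwall inequality (\cref{lem:Discrete-Gronwall-lemma}). Let $\bar{x}_n := \mathbb{E}[x_n]$ and $Y_n := x_n + \beta_n u_n$. Since $u_n$ is centered and independent of $x_n$, one has $\mathbb{E}[Y_n] = \bar{x}_n$, so subtracting means in \cref{eq:discretized_noisy_flow} gives
\[
x_{n+1} - \bar{x}_{n+1} \;=\; (x_n - \bar{x}_n) \;-\; \gamma\bigl(\nabla f_{\mu,\nu_n}(Y_n) - \mathbb{E}\,\nabla f_{\mu,\nu_n}(Y_n)\bigr).
\]
Taking $L_2$ norms and applying the triangle inequality yields $v_{n+1} \leq v_n + \gamma\,\bigl\|\nabla f_{\mu,\nu_n}(Y_n) - \mathbb{E}\,\nabla f_{\mu,\nu_n}(Y_n)\bigr\|_{L_2}$.

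The key step is the classical Lipschitz--variance inequality: for any $K$-Lipschitz map $F$ and square-integrable $Y$ with an i.i.d.\ copy $Y'$, one has $\mathrm{var}(F(Y)) = \tfrac12 \mathbb{E}\|F(Y) - F(Y')\|^2 \leq K^2\,\mathrm{var}(Y)$. By \cref{prop:grad_witness_function}, the map $y \mapsto \nabla f_{\mu,\nu_n}(y)$ is $2L$-Lipschitz (the measure $\nu_n$ being fixed and deterministic at this stage), which gives
\[
\bigl\|\nabla f_{\mu,\nu_n}(Y_n) - \mathbb{E}\,\nabla f_{\mu,\nu_n}(Y_n)\bigr\|_{L_2} \;\leq\; 2L\,\|Y_n - \mathbb{E}[Y_n]\|_{L_2}.
\]
Independence of $u_n$ from $x_n$ together with $\beta_n \leq B$ then yields $\|Y_n - \mathbb{E}[Y_n]\|_{L_2}^2 = v_n^2 + \beta_n^2\,\mathbb{E}\|u_n\|^2 \leq (v_n + B)^2$, after absorbing the (dimensional) factor $\mathbb{E}\|u_n\|^2$ into the constant $B$.

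Combining these two steps produces the affine recursion
\[
v_{n+1} \;\leq\; (1 + 2L\gamma)\,v_n + 2L\gamma B.
\]
Invoking \cref{lem:Discrete-Gronwall-lemma}, and using $v_0 = \mathrm{var}(\nu_0)^{1/2}$ together with $n\gamma \leq T$, I obtain $v_n \leq (v_0 + B)(1+2L\gamma)^n \leq (v_0 + B)\,e^{2LT}$, which is the claimed bound. The argument is mostly bookkeeping; the only mildly delicate point is to use the \emph{spatial} Lipschitz continuity of $\nabla f_{\mu,\nu_n}$ (the joint Lipschitz statement of \cref{prop:grad_witness_function} restricts to its first argument because $\nu_n$ is a deterministic measure here), and to track how the Gaussian noise contributes additively to $\|Y_n - \mathbb{E}[Y_n]\|_{L_2}$ before the recursion is iterated.
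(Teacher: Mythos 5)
Your proof is correct and follows essentially the same route as the paper's: center the update, apply the triangle inequality in $L_2$, use the $2L$-spatial-Lipschitz property of $\nabla f_{\mu,\nu_n}$ from \cref{prop:grad_witness_function} together with the independence of the Gaussian noise, and close with the discrete Gronwall lemma. The dimensional factor $\mathbb{E}\Vert u_n\Vert^2=d$ that you flag is also silently dropped in the paper's own computation, so your bookkeeping is, if anything, slightly more careful.
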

\begin{proof}
Let $g$ be the density of a standard gaussian. Denote by $(x,u)$ and $(x',u')$ two independent samples from $\nu_n\otimes g$. The idea is to find a recursion from $var(\nu_{n})$ to $var(\nu_{n+1})$: 
\begin{align*}
var(\nu_{n+1})^{\frac{1}{2}} 
 & =\left(\mathbb{E}\left[\left\Vert x -\mathbb{E}\left[x'\right] -\gamma\nabla f_{\mu,\nu_{n}}(x+\beta_{n}u)+\gamma\mathbb{E}\left[\nabla f_{\mu,\nu_{n}}(x'+\beta_{n}u')\right]\right\Vert^2\right]\right)^{\frac{1}{2}}\\
 & \leq var(\nu_{n})^{\frac{1}{2}}+\gamma\left(\mathbb{E}\left[\left\Vert\nabla f_{\mu,\nu_{n}}(x+\beta_{n}u)-\mathbb{E}\left[\nabla f_{\mu,\nu_{n}}(x'+\beta_{n}u')\right]\right\Vert^{2}\right]\right)^{\frac{1}{2}}\\
 & \leq var(\nu_{n})^{\frac{1}{2}}+2\gamma L\mathbb{E}_{\substack{x,x'\sim\nu_{n}\\ u,u'\sim g}}\left[\left\Vert x+\beta_{n}u-x'+\beta_{n}u'\right\Vert^{2}\right]^{\frac{1}{2}}\\
 & \leq var(\nu_{n})^{\frac{1}{2}}+2\gamma L(var(\nu_{n})^{\frac{1}{2}}+\beta_{n})
\end{align*}
The second and last lines are obtained using a triangular inequality while the third line uses that $\nabla f_{\mu,\nu_n}(x)$ is $2L$-Lipschitz in $x$ by  \cref{prop:grad_witness_function}. Recalling that $\beta_{n}$ is bounded by $B$ it is easy to conclude using \cref{lem:Discrete-Gronwall-lemma}. 
\end{proof}

\section{Connection with Neural Networks}\label{subsec:training_neural_networks}
In this sub-section we establish a formal connection between the MMD gradient flow defined in \cref{eq:continuity_mmd} and neural networks optimization. Such connection holds in the limit of infinitely many neurons and is based on the formulation in \cite{rotskoff2018neural}. To remain consistent with the rest of the paper, the parameters of a network will be denoted by $x\in \X$ while the input and outputs will be denoted as $z$ and $y$.
 Given a neural network or any parametric function $(z,x)\mapsto \psi(z,x)$ with parameter $x \in \X $ and input data $z$ we consider the supervised learning problem:
\begin{align}\label{eq:regression_network}
	\min_{(x_1,...,x_m )\in \X} \frac{1}{2}\mathbb{E}_{(y,z)\sim p  } \left[ \left\Vert y - \frac{1}{m}\sum_{i=1}^m\psi(z,x_i) \right\Vert^2 \right ]
\end{align}
where $(y,z) \sim p$ are samples from the data distribution and the regression function is an average of $m$ different networks. The formulation in \cref{eq:regression_network} includes any type of networks. Indeed, the averaged function can itself be seen as one network with augmented parameters $(x_1,...,x_m)$ and any network can be written as an average of sub-networks with potentially shared weights. In the limit $m\rightarrow \infty$, the average can be seen as an expectation over the parameters under some probability distribution $\nu$. This leads to an expected network $\Psi(z,\nu) =  \int \psi(z,x) \diff \nu(x) $ and the optimization problem in \cref{eq:regression_network} can be lifted to an optimization problem in $\mathcal{P}_2(\X)$ the space of probability distributions:
\begin{align}\label{eq:lifted_regression}
	\min_{\nu \in \mathcal{P}_2(\X)}  \mathcal{L}(\nu) :=  \frac{1}{2}\mathbb{E}_{(y,z)\sim p} \left [ \left\Vert y - \int \psi(z,x) \diff \nu(x) \right\Vert^2 \right ]
\end{align} 
For convenience, we consider $\bar{\mathcal{L}}(\nu)$ the function obtained by subtracting the variance of $y$ from $\mathcal{L}(\nu)$, i.e.: $\bar{\mathcal{L}}(\nu) = \mathcal{L}(\nu) - var(y) $. When the model is well specified, there exists $\mu \in \mathcal{P}_2(\X) $ such that $\mathbb{E}_{y\sim \mathbb{P}(.|z)}[y] =  \int \psi(z,x) \diff \mu(x)$. In that case, the cost function $\bar{\mathcal{L}}$ matches  the functional $\F$ defined in \cref{eq:mmd_as_free_energy}  for a particular choice of the kernel $k$. More generally, as soon as a global minimizer for  \cref{eq:lifted_regression} exists,  \cref{prop:inequality_mmd_loss} relates the two losses $\bar{\mathcal{L}}$ and $\mathcal{F}$.
\begin{proposition}\label{prop:inequality_mmd_loss}
	Assuming a global minimizer of \cref{eq:lifted_regression} is achieved by some $\mu\in \mathcal{P}_2(\X)$, the following inequality holds for any $\nu \in \mathcal{P}_2(\X)$:
	\begin{align}\label{eq:inequality_mmd_nn}
		\left(\bar{\mathcal{L}}(\mu)^{\frac{1}{2}} + \F^{\frac{1}{2}}(\nu)\right)^2
		\geq 
		\bar{\mathcal{L}}(\nu)
		\geq
		\mathcal{F}(\nu) + \bar{\mathcal{L}}(\mu)
	\end{align}
	where $\F(\nu)$ is defined by \cref{eq:mmd_as_free_energy} with  a kernel $k$  constructed from the data as an expected product of networks:
\begin{align}\label{eq:kernel_NN}
	k(x,x') = \mathbb{E}_{z\sim \mathbb{P}} \left[\psi(z,x)^T\psi(z,x')\right]
\end{align}
Moreover, $\bar{\mathcal{L}} = \F$ iif $\bar{\mathcal{L}}(\mu)=0$, which means that the model is well-specified. 
\end{proposition}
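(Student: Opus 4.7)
The plan is to work with the pointwise predictor $\Psi(z,\nu):=\int\psi(z,x)\,d\nu(x)$ introduced in the excerpt, which is linear in $\nu$ and thus turns $\mathcal{L}(\nu)=\tfrac12\mathbb{E}[\|y-\Psi(z,\nu)\|^2]$ into a convex quadratic functional of $\nu$. The first step is to fix the meaning of $\var(y)$ in the definition of $\bar{\mathcal{L}}$: for the announced identity $\bar{\mathcal{L}}=\mathcal{F}$ in the well-specified case to hold, $\var(y)$ must be read as the Bayes (irreducible) loss $\tfrac12\mathbb{E}[\|y-\mathbb{E}[y\mid z]\|^2]$. With this reading, the standard bias-variance decomposition gives the clean representation
\begin{equation*}
\bar{\mathcal{L}}(\nu)=\tfrac12\,\|\mathbb{E}[y\mid\cdot]-\Psi(\cdot,\nu)\|_{L^2(p_z)}^2,
\end{equation*}
while expanding the kernel via \cref{eq:kernel_NN} and Fubini yields
\begin{equation*}
\mathcal{F}(\nu)=\tfrac12\iint k(x,x')\,d(\mu-\nu)(x)\,d(\mu-\nu)(x')=\tfrac12\,\|\Psi(\cdot,\mu)-\Psi(\cdot,\nu)\|_{L^2(p_z)}^2.
\end{equation*}
So both quantities are squared $L^2(p_z)$-distances against the same reference function space.

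For the upper bound, I will apply the triangle inequality in $L^2(p_z)$ to the splitting $\mathbb{E}[y\mid z]-\Psi(z,\nu)=\bigl(\mathbb{E}[y\mid z]-\Psi(z,\mu)\bigr)+\bigl(\Psi(z,\mu)-\Psi(z,\nu)\bigr)$. Taking norms yields $\sqrt{2\bar{\mathcal{L}}(\nu)}\le\sqrt{2\bar{\mathcal{L}}(\mu)}+\sqrt{2\mathcal{F}(\nu)}$, and squaring gives the required $\bar{\mathcal{L}}(\nu)\le\bigl(\bar{\mathcal{L}}(\mu)^{1/2}+\mathcal{F}(\nu)^{1/2}\bigr)^2$.

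For the lower bound I will exploit the fact that, because $\nu\mapsto \Psi(z,\nu)$ is linear, the scalar function $t\mapsto\mathcal{L}((1-t)\mu+t\nu)$ is an exact quadratic in $t$ with no remainder beyond second order. Direct differentiation (using the tower property to replace $y-\Psi(z,\mu)$ by $\mathbb{E}[y\mid z]-\Psi(z,\mu)$ in the cross term) produces
\begin{equation*}
\bar{\mathcal{L}}(\nu)=\bar{\mathcal{L}}(\mu)+\mathbb{E}\bigl[(\mathbb{E}[y\mid z]-\Psi(z,\mu))^\top(\Psi(z,\mu)-\Psi(z,\nu))\bigr]+\mathcal{F}(\nu).
\end{equation*}
Since $\mu$ is a global minimizer of $\bar{\mathcal{L}}$ and the admissible curve $(1-t)\mu+t\nu$ remains in $\mathcal{P}_2(\X)$ for $t\in[0,1]$, first-order optimality forces the cross term to be nonnegative, yielding $\bar{\mathcal{L}}(\nu)\ge\mathcal{F}(\nu)+\bar{\mathcal{L}}(\mu)$.

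The equivalence $\bar{\mathcal{L}}=\mathcal{F}\iff \bar{\mathcal{L}}(\mu)=0$ will then fall out directly: evaluating $\bar{\mathcal{L}}=\mathcal{F}$ at $\nu=\mu$ forces $\bar{\mathcal{L}}(\mu)=\mathcal{F}(\mu)=0$, and conversely, if $\bar{\mathcal{L}}(\mu)=0$ then $\Psi(\cdot,\mu)=\mathbb{E}[y\mid\cdot]$ in $L^2(p_z)$, so the first representation collapses to $\bar{\mathcal{L}}(\nu)=\tfrac12\|\Psi(\cdot,\mu)-\Psi(\cdot,\nu)\|_{L^2(p_z)}^2=\mathcal{F}(\nu)$. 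The main conceptual obstacle is the ambiguous notation $\var(y)$; once it is identified with the Bayes loss, everything reduces to the $L^2(p_z)$ triangle inequality (upper bound) and the first-order optimality of a convex quadratic (lower bound), with both bounds saturated at $\nu=\mu$.
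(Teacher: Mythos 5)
Your proof is correct and follows essentially the same route as the paper's: the same decomposition $\bar{\mathcal{L}}(\nu)=\bar{\mathcal{L}}(\mu)+\mathcal{F}(\nu)+\text{(cross term)}$ in $L^2(p_z)$, Cauchy--Schwarz (equivalently your triangle inequality) for the upper bound, and first-order optimality of $\mu$ along the mixture path $\mu+t(\nu-\mu)$ to kill the sign of the cross term for the lower bound. Your explicit resolution of $\var(y)$ as the Bayes risk $\tfrac12\mathbb{E}[\|y-\mathbb{E}[y\mid z]\|^2]$ is the reading the paper's own proof implicitly uses (it subtracts $\var(y)/2$ despite the definition in the text stating $\bar{\mathcal{L}}=\mathcal{L}-\var(y)$), so flagging that is a fair and useful clarification rather than a deviation.
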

The framing \cref{eq:inequality_mmd_nn} implies that optimizing $\mathcal{F}$ can decrease  $\mathcal{L}$ and vice-versa. 
Moreover, in the well specified case, optimizing  $\mathcal{F}$ is equivalent to optimizing $\mathcal{L}$. Hence one can use the gradient flow of the MMD defined in \cref{eq:continuity_mmd} to solve \cref{eq:lifted_regression}.
One particular setting when \cref{eq:lifted_regression} is well-specified is the student-teacher problem as in \cite{Chizat:2018a}. In this case, a teacher network of the form $\Psi_T(z,\mu)$ produces a deterministic output $y = \Psi_T(z,\mu)$ given an input $z$ while a student network $\Psi_S(z,\nu)$ tries to learn the mapping $z\mapsto \Psi_T(z,\mu)$ by minimizing \cref{eq:lifted_regression}. In practice $\mu$ and $\nu$ are given as empirical distributions on some particles $\Xi = (\xi^1,...,\xi^M)$ and $X=(x^1,...,x^N)$ with $\mu = \frac{1}{M} \sum_{j=1}^M \delta_{\xi^j}$  and $\nu= \frac{1}{N} \sum_{i=1}^N\delta_{x^i}$. The particles $(x^i)_{1\leq i \leq N}$ are then optimized using gradient descent starting from an initial configuration $(x_0^i)_{1\leq i \leq N}$. This leads to the update equation:
\begin{align}\label{eq:update_equation_student_teacher}
	x^i_{n+1} = x^i_n - \gamma \mathbb{E}_{z\sim p }\left[ \left(\frac{1}{N}\sum_{j=1}^N \psi(z,x_n^{j})-\frac{1}{M}\sum_{j=1}^M \psi(z,\xi^{j})\right)\nabla_{x_n^{i}}\psi(z,x_n^{i})\right],
\end{align}
 where $(x_n^{i})_{1\leq i\leq N}$ are the particles at iteration $n$ with empirical distribution $\nu_n$. Here, the gradient is rescaled by the number of particles $N$.
Re-arranging terms and recalling that $k(x,x') = \mathbb{E}_{z\sim p}[\psi(z,x)^T\psi(z,x')]$,  equation \cref{eq:update_equation_student_teacher} becomes:
\[
x^i_{n+1} = x^i_n - \gamma \nabla f_{\mu,\nu_n}(x_n^i).
\]
with $\nabla f_{\mu,\nu_n}(x_n^i) = \left(\frac{1}{N}\sum_{j=1}^N \nabla_2 k(x_n^{j},x_n^{i})-\frac{1}{M}\sum_{j=1}^M \nabla_2 k(\xi^{j},x_n^{i})\right)$.
The above equation is a discretized version of the gradient flow of the MMD defined in \cref{eq:continuity_mmd}. Such discretization is obtained from \cref{eq:euler_maruyama} by setting the noise level  $\beta_n$ to $0$.
Hence, in the limit when $N\rightarrow \infty$ and $\gamma\rightarrow 0$, one recovers the gradient flow defined in \cref{eq:euler_scheme_particles}.
In general the kernel $k$ is intractable and can be approximated using  $n_b$ samples $(z_1,...,z_{n_b})$ from the data distribution: $\hat{k}(x,x') = \frac{1}{n_b} \sum_{b=1}^{n_b} \psi(z_b,x)^T \psi(z_b,x')$. This finally leads to an approximate update:
\[
x^i_{n+1} = x^i_n - \gamma \nabla \hat{f}_{\mu,\nu_n}(x_n^i).
\]
where $\nabla \hat{f}_{\mu,\nu_n}$ is given by:
\[
\nabla\hat{f}_{\mu,\nu_n}(x_n^i) = \frac{1}{n_b} \sum_{b=1}^{n_b} \left(\frac{1}{N}\sum_{j=1}^N \psi(z_b,x_n^{j})-\frac{1}{M}\sum_{j=1}^M \psi(z_b,\xi^{j})\right)\nabla_{x_n^{i}}\psi(z_b,x_n^{i})).
\]
We provide now a proof for \cref{prop:inequality_mmd_loss}:
\begin{proof}[Proof of \cref{prop:inequality_mmd_loss}]\label{proof:prop:inequality_mmd_loss}Let $\Psi(z,\nu)$=$\int \psi(z,x)\diff \nu(x)$. By  \cref{eq:kernel_NN}, we have: $k(x,x') =\int_{z}\psi(z,x)^T\psi(z,x')\diff s(z)$ where $s$ denotes the distribution of $z$. It is easy to see that $\F(\nu) = \frac{1}{2} \int \Vert \Psi(z,\nu) -\Psi(z,\mu)  \Vert^2 \diff s(z) $. Indeed expanding the square in the l.h.s and exchanging the order of integrations w.r.t $p$ and $(\mu\otimes \nu)$ one gets $\F(\nu)$.
	Now, introducing $\Psi(z,\mu)$ in the expression of $\mathcal{L}(\nu)$, it follows by a simple calculation that:
\begin{align}\label{eq:main_eq_nn}
 \mathcal{L}(\nu)&= \mathcal{L}(\mu)+ \mathcal{F}(\nu)+ \int \left\langle\Psi(z,\mu)-m(z),\Psi(z,\nu)-\Psi(z,\mu)\right\rangle\diff p(z)
\end{align}
where $m(z)$ is the conditional mean of $y$, i.e.: $m(z)=\int y \diff p(y|z)$. On the other hand we have that $2\mathcal{L}(\mu) = var(y) + \int \Vert \Psi(z,\mu)-m(z)\Vert^2\diff p(z)$, so that $ \int \Vert \Psi(z,\mu)-m(z)\Vert^2\diff p(z) = 2\bar{\mathcal{L}}(\mu)$. Hence, using Cauchy-Schwartz for the last term in \cref{eq:main_eq_nn}, one gets the upper-bound:
\[
 \mathcal{L}(\nu)\leq \mathcal{L}(\mu)+ \mathcal{F}(\nu) + 2 \bar{\mathcal{L}}(\mu)^{\frac12}\mathcal{F(\nu)}^{\frac12}.
\]
This in turn gives an upper-bound on $\bar{\mathcal{L}}(\nu)$ after subtracting $var(y)/2$ on both sides of the inequality.
To get the lower bound on $\bar{\mathcal{L}}$ one needs to use the global optimality condition of $\mu$ for $\mathcal{L}$ from \cite[Proposition 3.1]{chizat2018global}. Indeed, for any $0<\epsilon\leq 1$ it is easy to see that:
\[
\epsilon^{-1}( \mathcal{L}(\mu +\epsilon(\nu-\mu))-\mathcal{L}(\mu)) = \int \left\langle \Psi(z,\mu)-m(z),\Psi(z,\nu)-\Psi(z,\mu)\right\rangle\diff p(z) +o(\epsilon).
\]
Taking the limit $\epsilon\rightarrow 0$ and recalling that the l.h.s is always non-negative by optimality of $\mu$, it follows that $\int \ps{\Psi(z,\mu)-m(z),\Psi(z,\nu)-\Psi(z,\mu)}\diff p(z)$ must also be non-negative. Therefore, from \cref{eq:main_eq_nn} one gets that $\mathcal{L}(\nu) \geq  \mathcal{L}(\mu)+ \mathcal{F}(\nu)$. The final bound is obtained by subtracting $var(y)/2$ again from both sides of the inequality.
\end{proof}

\section{Numerical Experiments}\label{sec:experiments}
\subsection{Student-Teacher networks}\label{sec:experiments_neural_network}
We consider a student-teacher network setting similar to \cite{Chizat:2018a}. More precisely, using the notation from \cref{subsec:training_neural_networks}, we denote by $\Psi(z,\nu)$  the neural network of the form: $\Psi (z,\nu) = \int  \psi(z,x)\diff \nu(x) $ where $z$ is an input vector in $\R^{p}$ and  $\nu$ is a probability distribution over the parameters $x$. Hence $\Psi$ is an expectation over sub-networks $\psi(z,x)$ with parameters $x$. Here, we choose $\psi$ of the form:
\begin{align}
	\psi(z,x) = G\left(b^{1}+W^{1}\sigma(W^{0}z+b^{0})\right).
\end{align}
where $x$ is obtained as the concatenation of the parameters $(b^{1},W^{1},b^{0},W^{0})\in \X$, $\sigma$ is the ReLU non-linearity while $G$ is a fixed function and is defined later.
Note that using $x$ to denote the parameters of a neural network is unusual, however, we prefer to keep a notation which is consistent with the rest of the paper.
We will only consider the case when $\nu$ is given by an empirical distribution of $N$ particles  $X  = (x^{1},...x^{N})$ for some $N\in\mathbb{N}$. In that case, we denote by  $\nu_{X}$ such distribution to stress the dependence on the particles $X$, i.e.: $ \nu := \nu_{X}= \frac{1}{N} \sum_{i=1}^N \delta_{x^{i}}$. 
The teacher network $\Psi_{T}(z,\nu_{\Xi})$ is given by $M$ particles $\Xi = (\xi_1,...,\xi_M)$ which are fixed during training and are initially drawn  according to a normal distribution $\mathcal{N}(0,1)$. Similarly, the student network $\Psi_{S}(z,\nu_{X})$ has $N$ particles $X = (x^{1},...,x^{N})$ that are initialized according to a normal distribution $\mathcal{N}(10^{-3},1)$. 
Here we choose $M=1$ and $N=1000$.
  The inputs $z$ are drawn from a uniform distribution $\mathbb{S}$ on the sphere in $\R^p$ as in \cite{Chizat:2018a} with $p=50$. The number of hidden layers $H$ is set to $3$ and the output dimension is $1$.   
The parameters of the student networks are trained to minimize the risk in \cref{eq:student_teacher_problem} using SGD with mini-batches of size $n_b  = 10^2$ and optimal step-size $\gamma$ selected from: $\{10^{-3},10^{-2},10^{-1}\}$.
\begin{align}\label{eq:student_teacher_problem}
	\min_{X} \mathbb{E}_{z\sim \mathbb{S} }\left[(\Psi_T(z,\nu_{\Xi} )- \Psi_S(z,\nu_{X}))^2\right]
\end{align}
When $G$ is simply the identity function and no bias is used, one recovers the setting in \cite{chizat2018global}. In that case the network is partially $1$-homogeneous and \cite[Theorem 3.5]{chizat2018global} applies ensuring global optimality. Here, we are interested in the case when global optimality is not guaranteed by the homogeneity structure, hence we choose $G$ to be a gaussian with fixed bandwidth $\sigma=2$. 
As shown in \cref{subsec:training_neural_networks}, performing gradient descent to minimize \cref{eq:student_teacher_problem} can be seen as a particle version of the gradient flow of the MMD with a kernel given by  $k(x,x') = \mathbb{E}_{z\sim\mathbb{S}}[\psi(z,x)\psi(z,x')]$  and target distribution $\mu$ given by $\mu= \nu_{\Xi}$. Hence one can use the noise injection algorithm defined in \cref{eq:euler_maruyama} to train the parameters of the student network. Since $k$ is defined through an expectation over the data, it can be approximated using $n_{b}$ data samples  $\{z_{1},...,z_{B}\}$:
\begin{align}\label{eq:random_features_kernel}
	\hat{k}(x,x') = \frac{1}{n_b} \sum_{b=1}^{n_b} \psi(z_b,x)\psi(z_b,x').
\end{align}
	
Such approximation of the kernel leads to a simple expression for the gradient of the unnormalised witness function between  $\nu_{\Xi}$ and $\nu_{X}$:
\begin{align}\label{eq:approximate_gradient_witness_student_teacher}
	\nabla \hat{f}_{\nu_{\Xi},\nu_{X}}(x) = \frac{1}{n_b}\sum_{b=1}^{n_b}\left( \frac{1}{M}\sum_{j=1}^M\psi(z_b,\xi^j) -  \frac{1}{N}\sum_{i=1}^N\psi(z_b , x^i)\right)\nabla_{x}\psi(z_b,x), \qquad \forall x \in \X.
\end{align}
\cref{euclid_student_teacher}, provides the main steps to train the parameters of the student network using the noisy gradient flow of the MMD proposed in \cref{eq:euler_maruyama}. It can be easily implemented using automatic differentiation packages like \verb+PyTorch+. Indeed, one only needs to compute an auxiliary loss function $\F_{aux}$ instead of the actual MMD loss $\F$ and perform gradient descent using $\F_{aux}$. Such function is given by:
\[
\F_{aux} = \frac{1}{n_b}\sum_{i=1}^N\sum_{b=1}^{n_b} \left(\verb+NoGrad+\left(y_S^b\right) - y_T^b \right)\psi(z^b,\widetilde{x}_n^{i})
\]
To compute $\F_{aux}$, two forward passes on the student network are required.
A first forward pass using the current parameter values  $X_n = (x_n^1,...,x_n^{N})$ of the student network is used to compute the predictions $y_S^b$ given an input $z^b$. For such forward pass, the gradient w.r.t to the parameters $X_n$ is not used. This is enforced, here, formally by calling the function \verb+NoGrad+.  
 The second forward pass is performed using the noisy parameters $\widetilde{x}_n^{i} = x_n^i + \beta_n u_n^{i}$  and  requires implementing special layers which can inject noise to the weights. This second forward pass will be used to provide a gradient to update the particles using back-propagation. Indeed, it is easy to see that $\nabla_{x_n^{i}} \F_{aux}$ gives exactly the gradient $\nabla \hat{f}_{\nu_{\Xi},\nu_X}(\widetilde{x}_n^i)$ used in  \cref{euclid_student_teacher}.

\subsection{Learning gaussians}\label{sec:experiments_gaussian}
\begin{figure}[ht]
	\centering
	\includegraphics[width=0.8\linewidth]{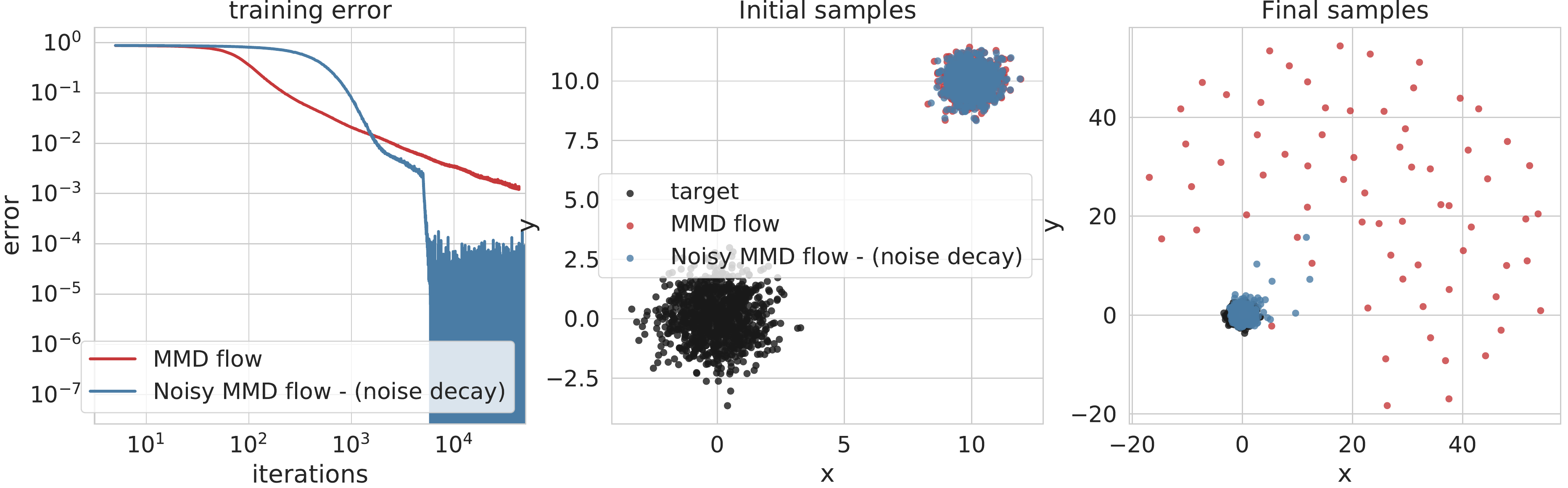}
	\caption{Gradient flow of the $MMD$ from a gaussian initial distributions $\nu_0\sim \mathcal{N}(10,0.5)$  towards a target distribution $\mu\sim \mathcal{N}(0,1)$ using $N=M=1000$ samples from $\mu$ and $\nu_0$ and a gaussian kernel with bandwidth $\sigma = 2 $. \cref{eq:euler_maruyama} is used 
	without noise $\beta_n = 0$ in red and  with noise $\beta_n = 10$ up to $n=5000$, then $\beta_n = 0$ afterwards in blue. 
	The left figure shows the evolution of the $MMD$ at each iteration. The middle figure shows the initial samples (black for $\mu$), and the right figure shows the final samples after $10^5$ iterations with step-size $\gamma = 0.1$.}
	\label{fig:experiments}
\end{figure}
\cref{fig:experiments} illustrates the behavior of the proposed algorithm \cref{eq:euler_maruyama} in a simple setting, and compares it with the gradient flow of the MMD without noise injection. In this setting, the MMD flow  fails to converge to the global optimum. Indeed, as shown in \cref{fig:experiments}(right), some of the final samples (in red) obtained using noise-free gradient updates tend to get further away from the target samples (in black). Most of the remaining samples collapse to a unique point at the center near the origin. This can also be seen from \cref{fig:experiments}(left) where the training error fails to decrease below $10^{-3}$. On the other hand, adding noise to the gradient seems to lead to global convergence, as seen visually from the samples. The training error decreases below $10^{-4}$ and oscillates between $10^{-8}$ and $10^{-4}$. The oscillation is due to the step-size, which remained fixed while the noise was set to $0$ starting from iteration $5000$. It is worth noting that adding noise to the gradient slows the speed of convergence, as one can see from \cref{fig:experiments}(left). This is expected since the algorithm doesn't follow the path of steepest descent. The noise helps in escaping local optima, however, as  illustrated here.

\begin{algorithm}\label{subsec:pseudocode}
	\setstretch{1.35}
	\caption{Noisy gradient flow of the MMD}\label{euclid}
	\begin{algorithmic}[1]
		\State \textbf{Input} $N$, $n_{iter}$, $\beta_0$, $\gamma$
		\State \textbf{Output} $(x^{i}_{n_{iter}})_{1\le i\le N}$
		\State \textit{Initialize $N$ particles from initial distribution $\nu_0$} : $x_{0}^{i}\mysim \nu_0$
		\State \textit{Initialize the noise level}: $\beta=\beta_0$
		\For{$n=0,\dots, n_{iter}$}
		\State \textit{Sample $M$ points from the target $\mu$}: $\{y^1,...,y^M\}$.
		\State \textit{Sample $N$ gaussians }: $\{u_n^{1},...,u_n^N\}$
		\For{$i=1,\dots,N$}
		\State \textit{Compute the noisy values}: $\widetilde{x}_n^{i} = x_n^i+\beta_n u_n^i$
		\State \textit{Evaluate  vector field}:$\nabla f_{\hat{\mu},\hat{\nu}_n}(\widetilde{x}_n^i) = \frac{1}{N}\sum\limits_{j=1}^N \nabla_2 k(x_n^j,\widetilde{x}_n^{i})-\frac{1}{M}\sum\limits_{m=1}^M \nabla_2 k(y^m,\widetilde{x}_n^{i})$
		\State \textit{Update the particles}: $x_{n+1}^{i} = x_n^i -\gamma \nabla f_{\hat{\mu},\hat{\nu}_n}(\widetilde{x}_n^i)$
		\EndFor
		\State \textit{Update the noise level using an update rule $h$}: $\beta_{n+1}=h(\beta_{n}, n)$.
		\EndFor
	\end{algorithmic}
\end{algorithm}

\begin{algorithm}\label{subsec:pseudocode_student_teacher}
	\setstretch{1.35}
	\caption{Noisy gradient flow of the MMD for student-teacher learning}\label{euclid_student_teacher}
	\begin{algorithmic}[1]
		\State \textbf{Input} $N$, $n_{iter}$, $\beta_0$, $\gamma$, $n_{b}$, $\Xi = (\xi^j)_{1\leq j\leq M}$.
		\State \textbf{Output} $(x^{i}_{n_{iter}})_{1\le i\le N}$.
		\State \textit{Initialize $N$ particles from initial distribution $\nu_0$} : $x_{0}^{i}\mysim \nu_0$.
		\State \textit{Initialize the noise level}: $\beta=\beta_0$.
		\For{$n=0,...,n_{iter}$}
		\State \textit{Sample minibatch of $n_{b}$ data points}: $\{z^1,...,z^{n_{b}}\}$.
		\For{$b=1,...,n_{b}$} 
		\State \textit{Compute teacher's output}: $y_{T}^b = \frac{1}{M}\sum_{j=1}^M \psi(z^b,\xi^{j})$.
		\State \textit{Compute students's output}: $y_{S}^b = \frac{1}{N}\sum_{i=1}^N \psi(z^b,x_n^i)$.
		\EndFor
		\State \textit{Sample $N$ gaussians }: $\{u_n^{1},...,u_n^{N}\}$.
		\For{$i=1,...,N$}
		\State \textit{Compute noisy particles}: $\widetilde{x}_n^{i} = x_n^i +\beta_n u_n^{i}$ 
		\State \textit{Evaluate vector field}: $  \nabla \hat{f}_{\nu_{\Xi},\nu_{X_n}}(\widetilde{x}_n^{i}) =  \frac{1}{n_{b}}\sum_{b=1}^{n_{b}} ( y_{S}^b - y_{T}^b ) \nabla_{x_n^{i}} \psi(z^b,\widetilde{x}_n^{i})$
		\State \textit{Update particle $i$}: $x_{n+1}^{i} = x_{n}^{i} -\gamma \nabla \hat{f}_{\nu_{\Xi},\nu_{X_n}}(\widetilde{x}_n^{i})$
		\EndFor
		\State \textit{Update the noise level using an update rule $h$}: $\beta_{n+1}=h(\beta_{n}, n)$.
		\EndFor
	\end{algorithmic}
\end{algorithm}

\section{Auxiliary results}\label{sec:auxiliary_results}

\begin{proposition}\label{prop:grad_witness_function}
Under \cref{assump:lipschitz_gradient_k}, the unnormalised witness function $f_{\mu,\nu}$ between any probability distributions $\mu$ and $\nu$ in $\mathcal{P}_2(\X)$ is differentiable and satisfies:
\begin{equation}\label{eq:gradient_witness}
\nabla f_{\mu,\nu}(z) = \int \nabla_1 k(z,x)\diff \mu(x) - \int \nabla_1 k(z,x)\diff \nu(x) \qquad \forall z\in \X
\end{equation}
where $z \mapsto \nabla_1 k(x,z)$ denotes the gradient of $z\mapsto k(x,z)$ for a fixed $x \in \X$.
 Moreover, the map $(z,\mu,\nu)\mapsto f_{\mu,\nu}(z)$ is Lipschitz with:
\begin{equation}\label{eq:lipschitz_grad_witness}
\Vert \nabla f_{\mu,\nu}(z) - \nabla f_{\mu',\nu'}(z')\Vert \leq 2L (\Vert z-z' \Vert + W_2(\mu,\mu') + W_2(\nu,\nu')) 
\end{equation}
Finally, each component of $\nabla f_{\mu,\nu}$ belongs to $\kH$.
\end{proposition}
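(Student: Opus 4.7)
My plan is to handle the three claims in order. The first two, the formula and the Lipschitz estimate, follow from elementary calculations using \cref{assump:lipschitz_gradient_k}; the RKHS membership is the only delicate part.

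For \cref{eq:gradient_witness}, the Lipschitz condition on $\nabla k$ gives the linear growth $\|\nabla_1 k(z,x)\|\le \|\nabla_1 k(0,0)\|+L(\|z\|+\|x\|)$. Since $\mu,\nu\in\mathcal{P}_2(\X)$, this bound is integrable, and uniformly so in $z$ over any bounded neighborhood. A standard differentiation-under-the-integral argument then yields the claimed formula, and continuity of $\nabla f_{\mu,\nu}$ follows from dominated convergence.

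For the Lipschitz estimate \cref{eq:lipschitz_grad_witness}, I would split the difference by pairing $\mu$ with $\mu'$ and $\nu$ with $\nu'$ separately. Taking an optimal $W_2$-coupling $\pi$ between $\nu$ and $\nu'$ and applying \cref{assump:lipschitz_gradient_k},
\[
\Bigl\|\int\nabla_1 k(z,x)\,d\nu(x)-\int\nabla_1 k(z',x')\,d\nu'(x')\Bigr\|\le L\|z-z'\|+L\int\|x-x'\|\,d\pi(x,x').
\]
Cauchy--Schwarz gives $\int\|x-x'\|\,d\pi\le W_2(\nu,\nu')$; combining with the analogous bound for the $\mu,\mu'$ pair completes the estimate (with room to spare, since this yields a prefactor of $L$ rather than $2L$ on the Wasserstein terms).

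The third claim requires identifying each component $\partial_i f_{\mu,\nu}$ as an element of $\kH$. My plan is to represent it as a Bochner integral $\int\partial_{i,1}k(\cdot,x)\,d(\nu-\mu)(x)$, viewing the integrand as the $\kH$-valued map $x\mapsto\partial_{i,1}k(\cdot,x)$. Once this integral is shown to lie in $\kH$, the reproducing property identifies it pointwise with $\partial_i f_{\mu,\nu}$. The main obstacle is establishing that $\partial_{i,1}k(\cdot,x)$ itself belongs to $\kH$ under only \cref{assump:lipschitz_gradient_k}: this rests on the representer formula for derivatives in an RKHS (\cite[Lemma 4.34]{Steinwart:2008a}), which requires regularity of the mixed derivative $\partial_{i,1}\partial_{i,2}k$ on the diagonal. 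The Lipschitzness of $\nabla k$ furnishes precisely the control needed via difference quotients, and also gives the linear growth of $\|\partial_{i,1}k(\cdot,x)\|_{\kH}$ in $\|x\|$ required for Bochner integrability against measures in $\mathcal{P}_2(\X)$, closing the argument.
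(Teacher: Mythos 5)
Your treatment of the first two claims matches the paper's proof essentially verbatim: differentiation under the integral justified by the linear growth of $\nabla_1 k$ (the paper invokes the differentiation lemma of Klenke), and the Lipschitz bound via optimal couplings for the pairs $(\mu,\mu')$ and $(\nu,\nu')$, with the same observation that the argument actually yields $L\bigl(2\Vert z-z'\Vert + W_2(\mu,\mu')+W_2(\nu,\nu')\bigr)$, which is then loosened to the stated constant $2L$. The one genuine difference is the third claim: the paper's proof simply does not address the RKHS membership of the components of $\nabla f_{\mu,\nu}$, whereas you sketch the standard route (show $\partial_{i,1}k(\cdot,x)\in\kH$ via difference quotients, then take a Bochner integral against $\nu-\mu$). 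That sketch is the right one, with the caveat that under \cref{assump:lipschitz_gradient_k} alone you do not get existence and continuity of $\partial_{i,1}\partial_{i,2}k$ as Lemma~4.34 of Steinwart--Christmann formally requires; instead the Lipschitz bound gives $\Vert h^{-1}(k(x+he_i,\cdot)-k(x,\cdot))\Vert_{\kH}\le\sqrt{L}$ uniformly in $h$, so one should pass to a weak limit in $\kH$ and identify it pointwise with $\partial_{i,1}k(x,\cdot)$, rather than claim norm convergence of the difference quotients outright.
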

\begin{proof}
	The expression of the unnormalised witness function is given in \cref{eq:witness_function}. To establish \cref{eq:gradient_witness}, we simply need to apply the differentiation lemma \cite[Theorem 6.28]{Klenke:2008}. By \cref{assump:lipschitz_gradient_k}, it follows that $ (x,z)\mapsto \nabla_1 k(z,x)$ has at most a linear growth. Hence on any bounded neighborhood of $z$, $x\mapsto \Vert \nabla_1 k(z,x) \Vert $ is upper-bounded by an integrable function w.r.t. $\mu$ and $\nu$. Therefore, the differentiation lemma applies and  $\nabla f_{\mu,\nu}(z)$ is differentiable with gradient given by \cref{eq:gradient_witness}.
	
	To prove the second statement, we will consider two optimal couplings: $\pi_1$ with marginals $\mu$ and $\mu'$ and $\pi_2$ with marginals $\nu$ and $\nu'$.  We use \cref{eq:gradient_witness} to write:
	\begin{align*}
		\Vert \nabla f_{\mu,\nu}(z) - \nabla f_{\mu',\nu'}(z')\Vert 
		&= \left\Vert \mathbb{E}_{\pi_1}\left[ \nabla_1 k(z,x)-\nabla_1 k(z',x') \right] - \mathbb{E}_{\pi_2}\left[\nabla_1 k(z,y)-\nabla_1 k(z',y')\right] \right\Vert\\
		& \leq
		\mathbb{E}_{\pi_1}\left[ \left\Vert  \nabla_1 k(z,x)-\nabla_1 k(z',x') \right\Vert \right] + \mathbb{E}_{\pi_2}\left[\left\Vert  \nabla_1 k(z,y)-\nabla_1 k(z',y') \right\Vert \right] \\
		&\leq
		L\left( \Vert  z-z' \Vert + \mathbb{E}_{\pi_1}[\Vert  x-x' \Vert]  +  \Vert  z-z' \Vert + \mathbb{E}_{\pi_2}[\Vert  y-y' \Vert ] \right)\\
		&\leq L(2\Vert z-z'\Vert + W_2(\mu,\mu')  + W_2(\nu,\nu') )
	\end{align*}
	The second line is obtained by convexity while the third one uses \cref{assump:lipschitz_gradient_k} and finally the last line relies on $\pi_1$ and $\pi_2$ being optimal. The desired bound is obtained by further upper-bounding the last two terms by twice their amount.
\end{proof}

\begin{lemma}\label{lem:derivative_mmd_augmented}
Let $U$ be an open set, $q$ a probability distribution in $\mathcal{P}_2(\X \times \mathcal{U})$ and $\psi$ and $\phi$ two measurable maps from  $\X \times\mathcal{U} $ to $\X$  which are square-integrable w.r.t $q$. Consider the path $\rho_t$ from  $(\psi)_{\#}q$ and $(\psi+\phi)_{\#}q$ given by: $\rho_t=  (\psi+t\phi)_{\#}q \quad \forall t\in [0,1]$. Under \cref{assump:lipschitz_gradient_k}, $\mathcal{F}(\rho_t)$ is differentiable in $t$ with
	\begin{align*}
		\dot{\F}(\rho_t)&=\int \nabla f_{\mu,\rho_t}(\psi(x,u)+t\phi(x,u)) \phi(x,u)\diff q(x,u)
	\end{align*}
where $f_{\mu,\rho_t}$ is the unnormalised witness function between $\mu$ and $\rho_t$ as defined in \cref{eq:witness_function}.	
Moreover:
\begin{align*}
		\left\vert \dot{\F}(\rho_t) - \dot{\F}(\rho_s) \right\vert \leq 3L\left\vert t-s \right\vert\int \left\Vert \phi(x,u) \right\Vert^2 dq(x,u)
\end{align*}
\end{lemma}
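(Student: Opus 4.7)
The plan is to exploit the free-energy representation of $\F$ from \cref{eq:mmd_as_free_energy} and differentiate under the integral sign, then identify the resulting expression as the expected inner product against $\nabla f_{\mu,\rho_t}$. Concretely, pushing $\rho_t=(\psi+t\phi)_{\#}q$ through \cref{eq:mmd_as_free_energy} yields
\begin{equation*}
\F(\rho_t) = \int V(\psi+t\phi)\,dq + \tfrac{1}{2}\iint k(\psi+t\phi,\,\psi'+t\phi')\,dq\,dq' + C,
\end{equation*}
where $\psi',\phi'$ denote evaluation at the dummy integration variable. Under \cref{assump:lipschitz_gradient_k}, $\nabla_1 k$ has at most linear growth, so on any compact $t$-interval the $t$-derivative of each integrand is dominated by an integrable function of $(x,u)$ (resp.\ of $(x,u,x',u')$), using that $\psi,\phi\in L^2(q)$. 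The differentiation lemma then applies and gives
\begin{equation*}
\dot\F(\rho_t) = \int \nabla V(\psi+t\phi)\cdot \phi\,dq + \iint \nabla_1 k(\psi+t\phi,\,\psi'+t\phi')\cdot\phi\,dq\,dq',
\end{equation*}
where the symmetry $k(x,y)=k(y,x)$ was used to merge the two symmetric contributions of the double integral. Recalling $\nabla V(z)=-\int\nabla_1 k(z,x)d\mu(x)$ and performing a change of variables in the inner integral (to rewrite it as an integral against $\rho_t$) yields exactly $\nabla f_{\mu,\rho_t}(\psi+t\phi)$ inside the bracket, producing the claimed formula for $\dot\F(\rho_t)$.

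For the Lipschitz estimate, I would write
\begin{equation*}
\dot\F(\rho_t)-\dot\F(\rho_s)=\int\bigl[\nabla f_{\mu,\rho_t}(\psi+t\phi)-\nabla f_{\mu,\rho_s}(\psi+s\phi)\bigr]\cdot\phi\,dq
\end{equation*}
and apply the joint Lipschitz bound from \cref{prop:grad_witness_function}, which (keeping the sharper constants visible in its proof) gives
\begin{equation*}
\Vert\nabla f_{\mu,\rho_t}(\psi+t\phi)-\nabla f_{\mu,\rho_s}(\psi+s\phi)\Vert\le L\bigl(2|t-s|\Vert\phi\Vert+W_2(\rho_t,\rho_s)\bigr).
\end{equation*}
The coupling $((\psi+t\phi)(x,u),(\psi+s\phi)(x,u))$ with $(x,u)\sim q$ is admissible, so $W_2(\rho_t,\rho_s)\le |t-s|\Vert\phi\Vert_{L^2(q)}$. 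Plugging in, integrating, and bounding $\int\Vert\phi\Vert\,dq\le\Vert\phi\Vert_{L^2(q)}$ by Cauchy--Schwarz yields
\begin{equation*}
|\dot\F(\rho_t)-\dot\F(\rho_s)|\le L|t-s|\bigl(2\Vert\phi\Vert_{L^2(q)}^2+\Vert\phi\Vert_{L^2(q)}^2\bigr)=3L|t-s|\int\Vert\phi\Vert^2\,dq,
\end{equation*}
which is the required bound.

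The only delicate step is the justification of differentiation under the integral, which requires a uniform integrable dominating function on every compact $t$-interval; this follows from the linear growth of $\nabla_1 k$ implied by \cref{assump:lipschitz_gradient_k} combined with $\psi,\phi\in L^2(q)$. Everything else is bookkeeping: symmetrizing the double integral, identifying $\nabla f_{\mu,\rho_t}$, and then a one-line combination of the Lipschitz bound on $\nabla f$ with the obvious coupling estimate on $W_2(\rho_t,\rho_s)$.
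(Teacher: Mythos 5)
Your proposal is correct, and the first half is essentially the paper's own argument: the paper also writes $\F(\rho_t)$ as the double kernel integral (equivalently, expands $\frac12\Vert f_{\mu,\rho_t}\Vert_\kH^2$), dominates the $t$-derivative of the integrand using the linear growth of $\nabla_1 k$ implied by \cref{assump:lipschitz_gradient_k}, and invokes the differentiation lemma before identifying the result as $\int \nabla f_{\mu,\rho_t}(s_t)\cdot\phi\,\diff q$. Where you diverge is the Lipschitz estimate on $\dot\F$. The paper stays at the level of the kernel: it bounds $\Vert\nabla_1 k(s_t,s_t')-\nabla_1 k(s_{t'},s_{t'}')\Vert$ and $\Vert\nabla_1 k(s_t,z)-\nabla_1 k(s_{t'},z)\Vert$ directly via \cref{assump:lipschitz_gradient_k}, getting $L\vert t-t'\vert\,\mathbb{E}[(2\Vert\phi\Vert+\Vert\phi'\Vert)\Vert\phi\Vert]$, and then uses independence of $(x,u)$ and $(x',u')$ plus Jensen to turn the cross term $\mathbb{E}[\Vert\phi\Vert]^2$ into $\mathbb{E}[\Vert\phi\Vert^2]$. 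You instead route through the joint Lipschitz property of $(z,\nu)\mapsto\nabla f_{\mu,\nu}(z)$ from \cref{prop:grad_witness_function} combined with the obvious coupling bound $W_2(\rho_t,\rho_s)\le\vert t-s\vert\,\Vert\phi\Vert_{L^2(q)}$ and Cauchy--Schwarz. Both computations are the same estimate in different packaging and land on the same constant $3L$; your version is arguably cleaner in that it reuses an already-proved auxiliary result, and you were right to flag that the constant as \emph{stated} in \cref{prop:grad_witness_function} (a uniform $2L$ on all three terms) would only give $4L$ here — recovering $3L$ genuinely requires the sharper intermediate bound $L(2\Vert z-z'\Vert+W_2(\mu,\mu')+W_2(\nu,\nu'))$ visible inside its proof. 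The paper's direct expansion avoids having to reopen that proof, which is the only thing its route buys.
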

\begin{proof}
For simplicity, we write $f_t$ instead of $f_{\mu,\rho_t}$ and denote by $s_t(x,u)= \psi(x,u)+t\phi(x,u)$
The function $h: t\mapsto k(s_t(x,u),s_t(x',u')) - k(s_t(x,u),z) - k(s_t(x',u'),z)$ is differentiable for all $(x,u)$,$(x',u')$ in $\X\times \mathcal{U}$ and $z\in \X$. 
Moreover, by \cref{assump:lipschitz_gradient_k}, a simple computation shows that for all $0\leq t\leq 1$:%
\[
\left\vert \dot{h} \right\vert \leq L\left[ \left(\left\Vert z - \phi(x,u)\right\Vert + \left\Vert \psi(x,u)\right\Vert\right) \left\Vert \phi(x',u')\right\Vert +  
\left(\left\Vert z - \phi(x',u')\right\Vert + \left\Vert \psi(x',u')\right\Vert \right)\left\Vert \phi(x,u)\right\Vert \right]
\]
The right hand side of the above inequality is integrable when $z$,  $(x,u)$ and  $(x',u')$ are independent and such that $z\sim \mu$ and both $(x,u)$ and $(x',u')$ are distributed according to $q$. Therefore, by the differentiation lemma \cite[Theorem 6.28]{Klenke:2008} it follows that $\F(\rho_t)$ is differentiable and:
\begin{align}\label{eq:time_derivative_mmd}
\dot{\F}(\rho_t) = \mathbb{E}\left[(\nabla_1 k(s_t(x,u),s_t(x',u'))-\nabla_1 k(s_t(x,u),z)).\phi(x,u)\right].
\end{align}
By \cref{prop:grad_witness_function}, we directly get $\dot{\F}(\rho_t) = \int \nabla f_{\mu,\rho_t}(\psi(x,u)+t\phi(x,u)) \phi(x,u)\diff q(x,u)$.
 We shall control now the difference $\vert \dot{F}(\rho_t)-\dot{\F}(\rho_{t'})\vert$ for $0\leq t,t'\leq 1$. Using \cref{assump:lipschitz_gradient_k} and recalling that $s_t(x,u)-s_{t'}(x,u)= (t-t')\phi(x,u)$ a simple computation shows:
\begin{align*}
	\left\vert\dot{\F}(\rho_t)-\dot{\F}(\rho_{t'}) \right\vert 
	&\leq L\left\vert t-t' \right\vert \mathbb{E}\left[\left(2\Vert \phi(x,u) \Vert + \Vert \phi(x',u')\Vert \right)\Vert \phi(x,u)\Vert  \right]\\
	  &\leq L\vert t-t'\vert(2\mathbb{E}\left[\Vert \phi(x,u)\Vert^2 \right]  + \mathbb{E}\left[\Vert \phi(x,u)\Vert \right]^2)\\
	 &\leq 3L\vert t-t'\vert\int\Vert \phi(x,u)\Vert^2 \diff q(x,u).
\end{align*}
which gives the desired upper-bound.
\end{proof}

We denote by $(x,y)\mapsto H_1 k(x,y)$ the Hessian of $x\mapsto k(x,y)$ for all $y\in \X$ and by $(x,y)\mapsto \nabla_1\nabla_2 k(x,y)$ the upper cross-diagonal block of the hessian of $(x,y)\mapsto k(x,y)$.  
\begin{lemma}\label{lem:second_derivative_augmented_mmd}
Let $q$ be a probability distribution in $\mathcal{P}_2(\X \times \X)$ and $\psi$ and $\phi$ two measurable maps from  $\X \times\X $ to $\X$  which are square-integrable w.r.t $q$. Consider the path $\rho_t$ from  $(\psi)_{\#}q$ and $(\psi+\phi)_{\#}q$ given by: $\rho_t=  (\psi+t\phi)_{\#}q \quad \forall t\in [0,1]$. Under \cref{assump:diff_kernel,assump:lipschitz_gradient_k}, $\mathcal{F}(\rho_t)$ is twice differentiable in $t$ with
	\begin{align*}
		\ddot{\F}(\rho_t)=&\mathbb{E}\left[\phi(x,y)^T\nabla_1 \nabla_2 k(s_t(x,y),s_t(x',y')) \phi(x',y')\right] \\
		&+ \mathbb{E}\left[\phi(x,y)^T (H_1k(s_t(x,y),y_t')-H_1k(s_t(x,y),z)) \phi(x,y)\right]
	\end{align*}
where $(x,y)$ and $(x',y')$ are independent samples from $q$, $z$ is a sample from $\mu$ and  $s_t(x,y)= \psi(x,y)+t\phi(x,y)$.
Moreover, if \cref{assump:bounded_fourth_oder} also holds then:
\begin{align*}
		\ddot{\F}(\rho_t) \geq \mathbb{E}\left[\phi(x,y)^T\nabla_1 \nabla_2 k(s_t(x,y),s_t(x',y')) \phi(x',y')\right] - \sqrt{2}\lambda d \F(\rho_t)^{\frac{1}{2}}\mathbb{E}[\Vert \phi(x,y) \Vert^2]  
\end{align*}
where we recall that $\X\subset \mathbb{R}^d$.
\end{lemma}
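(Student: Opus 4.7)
The plan is to differentiate $\dot{\F}(\rho_t)$, as expressed in \cref{eq:time_derivative_mmd} from the proof of \cref{lem:derivative_mmd_augmented}, one more time in $t$. Writing $s_t(x,y) = \psi(x,y) + t\phi(x,y)$, we have from \cref{lem:derivative_mmd_augmented}
\[
\dot{\F}(\rho_t) = \mathbb{E}\!\left[\phi(x,y)^{\top}\bigl(\nabla_1 k(s_t(x,y),s_t(x',y')) - \nabla_1 k(s_t(x,y),z)\bigr)\right],
\]
where $(x,y),(x',y')$ are independent samples from $q$ and $z\sim\mu$ is independent. To differentiate under the integral sign, I would invoke \cref{assump:diff_kernel} (existence of second derivatives) together with \cref{assump:lipschitz_gradient_k}: the Lipschitzness of $\nabla k$ gives a uniform bound $L$ on the operator norms of $H_1 k$ and $\nabla_1\nabla_2 k$ almost everywhere, so the derivative of the integrand admits an integrable dominant in terms of $\|\phi\|^2 \in L_1(q)$ and $\|\phi\|\cdot\|\phi'\|\in L_1(q\otimes q)$.

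Applying the chain rule with $\dot{s}_t = \phi$, the derivative of $\nabla_1 k(s_t,s_t')$ splits into a Hessian term $H_1 k(s_t,s_t')\phi$ and a cross-term $\nabla_1\nabla_2 k(s_t,s_t')\phi'$, while $\nabla_1 k(s_t,z)$ only contributes the Hessian term $H_1 k(s_t,z)\phi$. Contracting with $\phi(x,y)$ and taking expectations gives the claimed identity
\[
\ddot{\F}(\rho_t) = \mathbb{E}\!\left[\phi^{\top}\nabla_1\nabla_2 k(s_t,s_t')\,\phi'\right] + \mathbb{E}\!\left[\phi^{\top}\bigl(H_1 k(s_t,s_t') - H_1 k(s_t,z)\bigr)\phi\right],
\]
with $\phi:=\phi(x,y)$ and $\phi':=\phi(x',y')$.

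For the lower bound, I would recognise the second summand as a Hessian of the witness function: integrating first over $(x',y')\sim q$ and $z\sim\mu$ at fixed $(x,y)$ yields $\phi^{\top} Hf_{\mu,\rho_t}(s_t(x,y))\,\phi$, since taking two derivatives commutes with integrating against $\rho_t$ and $\mu$. Then, using the reproducing property for second derivatives, for any $x\in\X$
\[
\phi^{\top}Hf_{\mu,\rho_t}(x)\,\phi \;=\; \Bigl\langle f_{\mu,\rho_t},\, \textstyle\sum_{i,j}\phi_i\phi_j\,\partial_i\partial_j k(x,\cdot)\Bigr\rangle_{\kH}.
\]
Cauchy--Schwarz in $\kH$ bounds this by $\|f_{\mu,\rho_t}\|_{\kH}$ times the $\kH$-norm of the second factor, whose square equals the quadratic form $(\phi\otimes\phi)^{\top}Dk(x,x)(\phi\otimes\phi)$ (again by the reproducing property applied to fourth-order derivatives of $k$). \cref{assump:bounded_fourth_oder} then controls this quadratic form by $\lambda$ times a bound on $\|\phi\otimes\phi\|$ which, combined with $\|f_{\mu,\rho_t}\|_{\kH}^2 = 2\F(\rho_t)$, yields $|\phi^{\top}Hf_{\mu,\rho_t}(x)\,\phi|\leq \sqrt{2}\,\lambda d\,\F(\rho_t)^{1/2}\|\phi\|^2$. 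Taking expectation over $(x,y)\sim q$ and moving the inequality inside $\ddot{\F}$ gives the stated lower bound.

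The main obstacle is the last step: the higher-order reproducing-kernel calculus needed to rewrite $\bigl\|\sum_{i,j}\phi_i\phi_j\partial_i\partial_j k(x,\cdot)\bigr\|_{\kH}^2$ as a quadratic form against $Dk(x,x)$, and tracking constants carefully so that \cref{assump:bounded_fourth_oder} delivers exactly the factor $\sqrt{2}\lambda d\,\F(\rho_t)^{1/2}$. The differentiation-under-the-integral step is routine given the Lipschitz assumption, but justifying it cleanly relies on the $L_2(q)$-integrability of $\phi$ and boundedness of the Hessians along the geodesic, which must be checked uniformly in $t\in[0,1]$.
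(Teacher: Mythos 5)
Your proposal follows essentially the same route as the paper: differentiate $\dot{\F}(\rho_t)$ under the integral sign (justified by the boundedness of $H_1k$ and $\nabla_1\nabla_2 k$ coming from \cref{assump:lipschitz_gradient_k}), then rewrite the Hessian term via the reproducing property as an $\kH$-inner product with $f_{\mu,\rho_t}$, apply Cauchy--Schwarz and \cref{assump:bounded_fourth_oder}, and use $\Vert f_{\mu,\rho_t}\Vert_{\kH}^2=2\F(\rho_t)$. The only differences are cosmetic (you integrate over $(x',y',z)$ before invoking the reproducing property, whereas the paper exchanges the integral and inner product afterwards via Bochner integrability), and the constant-tracking you flag as the remaining obstacle is handled no more explicitly in the paper, which simply asserts the bound $\lambda d\Vert\phi\Vert^2\Vert f_{\mu,\rho_t}\Vert_{\kH}$ from Cauchy--Schwarz and the assumption.
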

\begin{proof}
The first part is similar to \cref{lem:derivative_mmd_augmented}. In fact we already know by \cref{lem:derivative_mmd_augmented} that $\dot{\F}(\rho_t)$ exists and is given by:
\[
\dot{\F}(\rho_t) = \mathbb{E}\left[(\nabla_1 k(s_t(x,y),s_t(x',y'))-\nabla_1 k(s_t(x,y),z)).\phi(x,y)\right]
\]
Define now the function $\xi : t\mapsto (\nabla_1 k(s_t(x,y),s_t(x',y'))-\nabla_1 k(s_t(x,y),z)).\phi(x,y)$ which is differentiable for all $(x,y)$,$(x',y')$ in $\X\times \X$ and $z\in \X$ by \cref{assump:diff_kernel}. Moreover, its time derivative is given by:
\begin{align}
	\dot{\xi} =& \phi(x',y')^T \nabla_2\nabla_1k(s_t(x,y),s_t(x',y'))\phi(x,y) \\
	&+ \phi(x,y)^T(H_1k(s_t(x,y),s_t(x',y') ) - H_1k(s_t(x,y),z ))\phi(x,y)  
\end{align}
By \cref{assump:lipschitz_gradient_k} it follows in particular that $\nabla_2\nabla_1k$ and $H_1k$ are bounded hence $\vert \dot{\xi} \vert$  is upper-bounded by $ (\Vert \phi(x,y) \Vert + \Vert\phi(x',u') \Vert)\Vert \phi(x,y)\Vert$ which is integrable.
Therefore, by the differentiation lemma \cite[Theorem 6.28]{Klenke:2008} it follows that $\dot{\F}(\rho_t)$ is differentiable and $\ddot{\F}(\rho_t) = \mathbb{E}\left[\dot{\xi}\right].$
We prove now the second statement. Bu the reproducing property, it is easy to see that the last term in the expression of $\dot{\xi}$ can be written as:
\[
\langle \phi(x,y)^TH_1 k(s_t(x,y),.)\phi(x,y), k(s_t(x',y'),.)-  k(z,.)\rangle_{\kH} 
\]
Now, taking the expectation w.r.t $x'$  ,$y'$ and $z$ which can be exchanged with the inner-product in $\kH$ since  $(x',y',z)\mapsto k(s_t(x',y'),.)-  k(z,.)$ is Bochner integrable \cite[Definition 1, Theorem 6]{Retherford:1978} and recalling that such integral is given by $f_{\mu,\rho_t}$  one gets the following expression:
\[
 \langle  \phi(x,y)^TH_1 k(s_t(x,y),.)\phi(x,y), f_{\mu,\rho_t} \rangle_{\kH}
\]
Using Cauchy-Schwartz and \cref{assump:bounded_fourth_oder} it follows that:
\[
\vert \left\langle  \phi(x,y)^TH_1 k(s_t(x,y),.)\phi(x,y), f_{\mu,\rho_t} \right\rangle_{\kH}\vert \leq \lambda d\Vert \phi(x,y)\Vert^2 \Vert f_{\mu,\rho_t}\Vert 
\]
One then concludes using the expression of $\ddot{\F}(\rho_t)$ and recalling that $\F(\rho_t) = \frac{1}{2}\Vert f_{\mu,\rho_t} \Vert^2$.
\end{proof}

\begin{lemma}\label{lem:integral_lambda_convexity}
Assume that for any geodesic $(\rho_{t})_{t\in[0,1]}$ between
$\rho_{0}$ and $\rho_{1}$ in $\mathcal{P}(\X)$ with velocity vectors $(V_t)_{t \in [0,1]}$ the following holds:
\[
\ddot{\F}(\rho_{t}) \geq \Lambda(\rho_t,V_t)
\]
for some admissible functional $\Lambda$ as defined in \cref{def:conditions_lambda}, then:
\begin{align*}
\F(\rho_{t})\leq(1-t)\F(\rho_{0})+t\F(\rho_{1})-\int_{0}^{1}\Lambda(\rho_{s},V_{s})G(s,t)ds
\end{align*}
with $G(s,t)=s(1-t) \mathbbm{1}\{s\leq t\}
+t(1-s) \mathbbm{1}\{s\geq t\}$ for $0\leq s,t\leq 1$.

\end{lemma}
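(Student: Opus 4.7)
The plan is to use a Green's function argument. Set $g(t) = \F(\rho_t)$ and $\lambda(t) = \Lambda(\rho_t, V_t)$, so that the hypothesis reads $g''(t) \geq \lambda(t)$ on $[0,1]$. The key observation is that $G(s,t)$ is precisely the Green's function for $-\partial_{tt}$ on $[0,1]$ with Dirichlet boundary conditions. More explicitly, define
\begin{equation*}
u(t) := \int_0^1 G(s,t) \lambda(s) \, ds.
\end{equation*}
A direct piecewise-linear computation (for $t<s$, $G(s,t) = t(1-s)$, and for $t>s$, $G(s,t) = s(1-t)$, both linear in $t$, with a jump in $\partial_t G$ of $-1$ across $s$) gives $u(0) = u(1) = 0$ and $u''(t) = -\lambda(t)$.

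Next, introduce the auxiliary function
\begin{equation*}
h(t) := g(t) - (1-t) g(0) - t g(1) + u(t).
\end{equation*}
By construction $h(0) = h(1) = 0$, and
\begin{equation*}
h''(t) = g''(t) + u''(t) = g''(t) - \lambda(t) \geq 0
\end{equation*}
by hypothesis. Thus $h$ is convex on $[0,1]$ and vanishes at both endpoints, which forces $h(t) \leq 0$ for every $t \in [0,1]$. Rearranging this inequality yields exactly the claimed bound on $\F(\rho_t)$.

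The only subtle point is regularity: we need $g''$ to be meaningful and the identity $g(t) = (1-t) g(0) + t g(1) + \int_0^1 G(s,t) g''(s) \, ds$ (the integration-by-parts form of the Green's function representation) to hold. In the applications of this lemma in the paper (\cref{prop:lambda_convexity} and \cref{lem:lambda_convexity_bis}), we have already seen via \cref{lem:derivative_mmd_augmented} that $\dot{g}$ is Lipschitz on $[0,1]$, hence $g$ is $C^1$ with $g''$ existing almost everywhere, and the fundamental theorem of calculus (applied twice) gives the required representation. With that in hand, the convexity argument on $h$ goes through verbatim and the proof is complete. The only "hard part'' is really checking the Green's function identity $u'' = -\lambda$, which is purely computational.
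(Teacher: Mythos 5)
Your proof is correct and rests on the same idea as the paper's: $G$ is the Green's function of $-\partial_{tt}$ on $[0,1]$ with Dirichlet boundary conditions. The paper simply invokes the resulting representation identity $\phi(t)=(1-t)\phi(0)+t\phi(1)-\int_0^1\ddot\phi(s)G(s,t)\,ds$ (citing Villani, Prop.~16.2) and bounds the integrand, whereas you repackage the same fact as convexity of the auxiliary function $h$ vanishing at the endpoints — a cosmetic difference only, and your regularity remarks match how the lemma is actually used.
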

\begin{proof}
	This is a direct consequence of the general identity (\cite{Villani:2009},
	Proposition 16.2). Indeed, for any continuous function $\phi$ on
	$[0,1]$ with second derivative $\ddot{\phi}$ that is bounded below
	in distribution sense the following identity holds:
	\[
	\phi(t)=(1-t)\phi(0)+t\phi(1)-\int_{0}^{1}\ddot{\phi}(s)G(s,t)ds.
	\]
	This holds a fortiori for $\F(\rho_{t})$ since $\F$ is smooth. By assumption, we have that $\ddot{\F}(\rho_{t}) \geq \Lambda(\rho_t,V_t)$, hence, it follows that:
	\[
	\F(\rho_{t})\leq(1-t)\F(\rho_{0})+t\F(\rho_{1})-\int_{0}^{1}\Lambda(\rho_{s},V_{s})G(s,t)ds.
	\]
\end{proof}

\begin{lemma}\label{lem:mixture_convexity}[Mixture convexity]
	The functional $\F$ is mixture convex: for any probability distributions $\nu_1$ and $\nu_2$ and scalar $1\leq \lambda\leq 1$:
	\begin{align*}
	\F(\lambda \nu_1+(1-\lambda)\nu_2)\leq \lambda \F(\nu_1)+ (1-\lambda)\F(\nu_2)
	\end{align*}
\end{lemma}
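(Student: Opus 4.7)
The plan is to exploit the fact that the mean embedding $\phi_\nu = \int k(\cdot,x)\,d\nu(x)$ is a \emph{linear} functional of $\nu$. First I would observe that, by linearity of integration, for any $\lambda\in[0,1]$ and any $\nu_1,\nu_2\in\mathcal{P}_2(\X)$,
\begin{equation*}
\phi_{\lambda\nu_1+(1-\lambda)\nu_2} \;=\; \lambda\,\phi_{\nu_1}+(1-\lambda)\,\phi_{\nu_2},
\end{equation*}
and hence the unnormalised witness function satisfies
\begin{equation*}
f_{\mu,\,\lambda\nu_1+(1-\lambda)\nu_2} \;=\; \lambda\,f_{\mu,\nu_1}+(1-\lambda)\,f_{\mu,\nu_2}
\end{equation*}
as elements of $\mathcal{H}$.

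Next I would plug this identity into the definition $\mathcal{F}(\nu)=\tfrac12\|f_{\mu,\nu}\|_{\mathcal{H}}^2$ and invoke the standard convexity of the squared Hilbert norm $u\mapsto\tfrac12\|u\|_{\mathcal{H}}^2$ (which itself follows from the parallelogram law, or equivalently from Jensen's inequality applied to the convex function $t\mapsto t^2$). This immediately yields
\begin{equation*}
\mathcal{F}\bigl(\lambda\nu_1+(1-\lambda)\nu_2\bigr)
\;=\;\tfrac12\bigl\|\lambda f_{\mu,\nu_1}+(1-\lambda)f_{\mu,\nu_2}\bigr\|_{\mathcal{H}}^2
\;\leq\;\lambda\,\mathcal{F}(\nu_1)+(1-\lambda)\,\mathcal{F}(\nu_2),
\end{equation*}
which is the desired inequality.

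There is essentially no obstacle here: the whole argument rests on the two elementary facts that the mean embedding depends linearly on the measure and that $\|\cdot\|_{\mathcal{H}}^2$ is convex. No regularity assumption on the kernel beyond integrability (already ensured in $\mathcal{P}_2(\X)$ under the standing quadratic growth of $k$) is needed. I would present the proof in these two short steps.
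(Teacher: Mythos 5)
Your argument is correct and is essentially the paper's own proof: both rest on the linearity of $\nu\mapsto f_{\mu,\nu}$ and the convexity of the squared Hilbert norm, the paper merely making the convexity defect explicit as $-\tfrac12\lambda(1-\lambda)\,MMD(\nu_1,\nu_2)^2$ via the parallelogram-type identity. No gap.
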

\begin{proof}
	Let $\nu$ and $\nu'$ be two probability distributions and $0\leq \lambda\leq 1$. Expanding the RKHS norm in $\F$ it follows directly that:
	\[
		\mathcal{F}(\lambda \nu + (1-\lambda)\nu') -\lambda \mathcal{F}(\nu) -(1-\lambda)\mathcal{F}(\nu') = -\frac{1}{2}\lambda(1-\lambda)MMD(\nu,\nu')^2 \leq 0.
	\]
	which concludes the proof.
\end{proof}

\begin{lemma}
\label{lem:Discrete-Gronwall-lemma}[Discrete Gronwall lemma]
Let $a_{n+1}\leq(1+\gamma A)a_{n}+b$ with $\gamma>0$, $A>0$,
$b>0$ and $a_0=0$, then: 
\[
a_{n}\leq\frac{b}{\gamma A}(e^{n\gamma A}-1).
\]
\end{lemma}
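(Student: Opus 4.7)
The plan is to iterate the recurrence explicitly, recognize the resulting expression as a finite geometric sum, and then use the elementary inequality $1+x\le e^{x}$ to pass from the polynomial bound to the exponential one stated in the lemma.

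More concretely, I would first show by induction on $n$ that
\[
a_n \le b\sum_{k=0}^{n-1} (1+\gamma A)^k.
\]
The base case $n=0$ is immediate since $a_0=0$. For the inductive step, if the bound holds at $n$, apply the assumption $a_{n+1}\le (1+\gamma A)a_n + b$ and the induction hypothesis to obtain
\[
a_{n+1} \le (1+\gamma A) b\sum_{k=0}^{n-1}(1+\gamma A)^k + b = b\sum_{k=0}^{n}(1+\gamma A)^k,
\]
which is exactly the bound at $n+1$.

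Next, since $1+\gamma A > 1$, the geometric sum evaluates to
\[
\sum_{k=0}^{n-1}(1+\gamma A)^k = \frac{(1+\gamma A)^n - 1}{\gamma A},
\]
so that $a_n \le \frac{b}{\gamma A}\bigl((1+\gamma A)^n - 1\bigr)$. Finally, using the standard inequality $1+x\le e^x$ valid for all $x\in\mathbb{R}$ with $x=\gamma A$, we get $(1+\gamma A)^n \le e^{n\gamma A}$, which yields the claimed bound $a_n\le \frac{b}{\gamma A}(e^{n\gamma A}-1)$.

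There is no real obstacle here: the lemma is essentially a discrete analogue of Gronwall's inequality and the proof is entirely elementary. The only mild subtlety is remembering that the hypothesis $a_0=0$ is what allows us to avoid an extra term $a_0(1+\gamma A)^n$ on the right-hand side; if one had $a_0>0$, the bound would read $a_n \le a_0 e^{n\gamma A} + \frac{b}{\gamma A}(e^{n\gamma A}-1)$.
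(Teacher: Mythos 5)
Your proof is correct and follows exactly the same route as the paper's: unroll the recursion to get $a_n \le b\sum_{k=0}^{n-1}(1+\gamma A)^k$, sum the geometric series, and bound $(1+\gamma A)^n \le e^{n\gamma A}$. The inductive formulation and the remark about the role of $a_0=0$ are fine additions but do not change the argument.
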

\begin{proof}
	Using the recursion, it is easy to see that for any $n>0$:
	\[
	a_n \leq (1+\gamma A)^n a_0 + b\left(\sum_{i=0}^{n-1}(1+\gamma A )^{k}\right) 
	\]
	One concludes using the identity $\sum_{i=0}^{n-1}(1+\gamma A )^{k} =\frac{1}{\gamma A}((1+\gamma A)^{n} -1)$ and recalling that $(1+\gamma A)^{n} \leq e^{n\gamma A}$.
\end{proof}

\end{document}